\documentclass[twoside]{article} %
\usepackage{Gan,times}

\usepackage{hyperref}
\usepackage{url}

\usepackage{amsmath,amsfonts,bm}

\def\eqref#1{equation~\ref{#1}}

\def\1{\bm{1}}

\DeclareMathAlphabet{\mathsfit}{\encodingdefault}{\sfdefault}{m}{sl}
\SetMathAlphabet{\mathsfit}{bold}{\encodingdefault}{\sfdefault}{bx}{n}

\usepackage{mathtools}
\mathtoolsset{showonlyrefs=true}

\usepackage{amsthm}
\usepackage{float}
\usepackage[normalem]{ulem}

\usepackage{siunitx}
\theoremstyle{plain}
\newtheorem{theorem}{Theorem}[section]
\newtheorem{proposition}[theorem]{Proposition}

\theoremstyle{definition}
\newtheorem{definition}[theorem]{Definition}

\theoremstyle{remark}
\newtheorem{remark}[theorem]{Remark}

\newcommand{\gbar}{\mathsf{g}}

\usepackage{graphicx}
\graphicspath{{figs/}}
\usepackage{caption}
\usepackage{subcaption}
\usepackage{multirow}

\makeatletter
\def\hlinewd#1{%
\noalign{\ifnum0=`}\fi\hrule \@height #1 \futurelet
\reserved@a\@xhline}
\makeatother

\usepackage[linesnumbered,ruled,vlined]{algorithm2e} %
\SetKwInput{KwRequire}{Require}
\SetKwInput{KwFunction}{Notation}
\SetKwInput{KwChoice}{Choice}

\title{Graph-Informed Adversarial Modeling: \\Infimal Subadditivity of Interpolative\\ Divergences}
\shorttitle{Graph-Informed Adversarial Modeling}

\iclrfinalcopy

\author{Panagiota Birmpa\\
School of Mathematical and Computer Sciences\\
Heriot--Watt University\\
Edinburgh EH14 4AS, UK\medskip\\
Maxwell Institute for Mathematical Sciences\\
Edinburgh EH8 9BT \medskip\\
\texttt{P.Birmpa@hw.ac.uk} \\
\And
Eric Joseph Hall\\
School of Mathematical and Computer Sciences\\
Heriot--Watt University\\
Edinburgh EH14 4AS, UK\medskip\\
Maxwell Institute for Mathematical Sciences\\
Edinburgh EH8 9BT \medskip\\
\texttt{e.hall@hw.ac.uk} \\
}
\shortauthors{Birmpa and Hall}

\begin{document}

\maketitle
\thispagestyle{firstpage}

\begin{abstract}
We study adversarial learning when the target distribution factorizes according to a known Bayesian network. For interpolative divergences, including $(f,\Gamma)$-divergences, we prove a new infimal subadditivity principle showing that, under suitable conditions, a global variational discrepancy is controlled by an average of family-level discrepancies aligned with the graph. In an additive regime, the surrogate is exact. This closes a theoretical gap in the literature; existing subadditivity results justify graph-informed adversarial learning for classical discrepancies, but not for interpolative divergences, where the usual factorization argument breaks down. In turn, we provide a justification for replacing a standard, graph-agnostic GAN with a monolithic discriminator by a graph-informed~GAN~(GiGAN) with localized family-level discriminators, without requiring the optimizer itself to factorize according to the graph. We also obtain parallel results for integral probability metrics and proximal optimal transport divergences, identify natural discriminator classes for which the theory applies, and present experiments showing improved stability and structural recovery relative to graph-agnostic baselines. 
\end{abstract}

\noindent\textbf{Keywords:} generative adversarial networks, Bayesian networks, variational learning, $(f,\Gamma)$-divergences, infimal subadditivity, structured generative models, multi-discriminator learning

\section{Introduction}
\label{sec:intro}

A Generative Adversarial Network (GAN) \citep{goodfellow2014generative} is an artificial neural network architecture used in learning a probability distribution from a given dataset by generating new data samples that resemble a given dataset.  Recent advances in GANs have been driven by a deeper understanding of the underlying probability divergences and distances that govern their training. Frameworks such as $f$-GANs \citep{nowozin2016f} generalize the objective to any $f$-divergence, enabling flexible modeling through variational estimation, while Wasserstein GANs \citep{arjovsky2017wasserstein, gulrajani2017improved} reformulate the problem using the Earth Mover’s distance, offering improved gradient behavior and stability via the integral probability metric framework. More recent developments use divergences which interpolate between $1$-Wasserstein and Kullback--Leibler (KL) divergence as developed in \cite{dupuis2022formulation}. This divergence has been extended to $(f,\Gamma)$-divergences \citep{birrell2022f}, a broader family that interpolates between integral probability metrics (IPMs) \citep{muller1997integral} and $f$-divergences, and has been extensively examined for its role in the design and theoretical analysis of generative models, particularly with respect to stability and robustness \citep{dupuis2022formulation,birrell2022f,chen2024robust,gu2024lipschitz}. Many other divergences have been developed, for example, see \cite{stein2024wasserstein, terjek2021moreau, baptista2025proximal, birrell2023optimal, birrell2022function}.

GANs are often trained without explicitly exploiting known conditional dependence structure in the target distribution. In this paper, we refer to such approaches as \emph{graph-agnostic} when the discriminator acts on the full joint distribution and does not use a known graphical model constructed from side information. In some applications, however, side information about the distribution is available, such as known physical/chemical laws, covariates, design constraints, and expert judgments. This perspective fits within a broader line of work in which probabilistic graphical structure is used to inform neural network learning architectures and surrogate models, see, e.g. \cite{HallTaverniersEtAl:2020aa}. In \cite{feizi2017understanding} the authors proposed GAN architectures tailored to multivariate Gaussian distributions, while \cite{balaji2019normalized} and \cite{farnia2023gat}  extended the analysis to GANs designed for mixtures of Gaussians. \cite{birrell2022structure} introduces structure-preserving GANs which leverage additional structure, such as group symmetry, to learn data-efficiently via new variational representations of divergences. The approach reduces the discriminator space to its projection onto the invariant subspace using conditional expectation with respect to the $\sigma$-algebra defined by the structure. 

However, in this work we shift our focus to cases where conditional dependencies of variables involved in the model could be represented by a Bayesian network (see \cite{pearl1988probabilistic,koller2009probabilistic}), e.g., the variables might follow a Markov chain. Formally, a Bayesian network is a joint probability distribution on a collection of random variables indexed by the vertices of a Directed Acyclic Graph  (DAG). Let $G = (V,E)$ be a DAG with vertex set $V = \{1, \dots, n\}$, $n\in\mathbb{N}$, and edge set $E \subseteq V \times V$. We associate to each vertex $i \in V$ a random variable $X_i$ taking values in a space $\mathcal{X}_i$, and write $\bm{X} = \bm{X}_V = (X_1,\dots,X_n)$ with state space $\mathcal{X} = \mathcal{X}_1 \times \dots \times \mathcal{X}_n$. The Bayesian network is then the pair $(G,Q)$, where the joint distribution $Q$ of $\bm{X}$ factorizes according to $G$ as 
\begin{equation*}
  Q(dx_1,\dots, dx_n)
= \prod_{i=1}^{n}
Q_{i|{\mathrm{Pa}}(i)}(
  dx_i)\,,
\end{equation*}
where $\mathrm{Pa}(i)$ denotes the parents of vertex $i$ and $Q_{i|\mathrm{Pa}(i)}=Q(dx_i|\bm x_{\mathrm{Pa}(i)})$ are the corresponding conditional probability distributions (CPDs) or marginal distributions if $\mathrm{Pa}(i) = \emptyset$. Intuitively, a Bayesian network encodes which variables directly depend on which others, providing a structured factorization of the joint distribution. In this paper, we refer to GANs that explicitly use this graphical information in the design of the discriminator objective as \emph{graph-informed}.

Theoretical foundations for exploiting a known Bayesian network or undirected Markov random field structure to design graph-informed GANs with multiple simple discriminators, each enforcing constraints on local neighborhoods of the Bayesian network or Markov random field, were established in \cite{ding2021gans}, building on prior results in \cite{daskalakis2017square}. This framework relies on the subadditivity properties of classical measures of discrepancy between probability distributions---such as KL divergence, Jensen--Shannon (JS) divergence, and Wasserstein distance---in the context of Bayesian networks and Markov random fields with or without mild conditions. These properties provide upper bounds on the divergence between two high-dimensional distributions sharing the same graphical structure by expressing it as a sum of divergences between their conditionals/marginals over local neighborhoods. Here and throughout the paper, we use \emph{classical divergences} to refer to unrestricted discrepancy functionals such as $f$-divergences, and \emph{interpolative divergences} to refer to discrepancies such as $(f,\Gamma)$-divergences, which constrain the discriminator's function class.

In generative modeling, a monolithic discriminator acting on a high-dimensional, joint distribution poses both computational and statistical challenges: the associated min--max optimization is costly, and discrimination via hypothesis testing in high dimensions can become increasingly sample-inefficient. Motivated by these issues, we establish an analogous variant of the subadditivity property on Bayesian networks, referred to as \emph{$\alpha$-linear generalized infimal subadditivity}. More precisely, for a measure of discrepancy $\delta$ with respect to measures of discrepancy $\delta_1, \dots, \delta_n$, this new property takes the form,
$$ \inf_{P \in \mathcal{P}(\mathcal{X})} \delta(Q,P)
  \leq
  \inf_{P \in \mathcal{P}(\mathcal{X})}\alpha
\sum_{i=1}^n\delta_i\left(Q_{i\,\cup\,\mathrm{Pa}(i)}\,, P_{i\,\cup\,\mathrm{Pa}(i)}\right),$$
for all Bayesian networks $Q$, where $Q_{i\,\cup\,\mathrm{Pa}(i)}$, $P_{i\,\cup\,\mathrm{Pa}(i)}$ are the marginals of $Q$ and $P$ over the $i$-th node and its parents. We show that $(f,\Gamma)$-divergences and other interpolative divergences satisfy $\alpha$-linear generalized infimal subadditivity, whereas classical $f$-divergences satisfy $\frac{1}{n}$-linear generalized infimal additivity, that is, the case where the above inequality holds with equality.

This extends the scope of graph-informed adversarial learning that can be justified. Existing results explain localized discriminators mainly through classical subadditivity properties of unrestricted discrepancies; by introducing infimal subadditivity, we show that an analogous justification continues to hold for interpolative divergences, including $(f,\Gamma)$-divergences, where the classical argument breaks down. In generative modeling, this means that when the target has a known Bayesian-network structure, a graph-informed GAN trained under an interpolative divergence objective (i.e.,~GiGAN) can replace a graph-agnostic GAN with a monolithic discriminator through a principled surrogate.

This paper is organized as follows. In the next section, we place our contribution in the broader literature on structure-aware generative modeling and adversarial learning. In Section~\ref{sec:var-gans}, we recall the variational approach for learning adversarial generative models and key statistical discrepancies. We introduce localized function classes and conditional expectation operators used to encode the factorization of the target probability distribution with respect to a Bayesian network in Section~\ref{sec:bn-operators}. Our main theoretical results, namely Theorems~\ref{thm:main2general}, \ref{thm:main_IPM}, and \ref{thm:pOTd}, which prove infimal subadditivity for $(f,\Gamma)$-divergences, IPMs, and optimal transport divergences, are presented in Section~\ref{sec:main-results}. These provide guarantees (certificates) for replacing global discrepancy minimization by a principled, graph-informed surrogate. Finally, we present several numerical case studies in Section~\ref{sec:experiments} and discuss implications and future work in Section~\ref{sec:discussion}.

\section{Related work}
\label{sec:related-work}

Existing work on incorporating structure into adversarial generative modeling can be broadly divided into two strands. The first builds structure directly into the generator architecture or training procedure, for example, through causal mechanisms, fairness constraints, or conditional independence objectives. The second uses multiple discriminators or localized adversarial objectives as a practical strategy for high-dimensional generation, typically without a general variational justification. Our work is closest in spirit to the latter, but differs in that it provides theory and foundations, namely a general graph-informed adversarial learning strategy for interpolative divergences on Bayesian networks, continuing in the spirit of \cite{ding2021gans}.

Within the first strand, \cite{kocaoglu2017causalgan} were among the first to integrate structural causal models into GANs; in this work, the generator is explicitly designed to reflect an assumed causal graph, making it a pioneering framework for learning causal implicit generative models through adversarial training. \cite{moraffah2020causal} propose CAN, a two-stage GAN framework that learns causal structure from data and can generate both conditional and interventional samples by embedding a structural causal model and an intervention mechanism into the adversarial architecture. \cite{wen2022causal} proposes Causal-TGAN for tabular data generation, incorporating predefined causal relationships into GANs to enhance data fidelity, while \cite{feng2022principled} also studies the incorporation of structural causal elements into generator design. In the fairness literature, \cite{xu2019achieving} introduces CFGAN, which enforces causal fairness criteria such as total effect and direct/indirect discrimination, and \cite{van2021decaf} develops DECAF, a GAN that embeds structural causal models into the generator's input layers to produce fair synthetic data while capturing the underlying causal mechanisms critical for fairness. Related, \cite{ahuja2021conditionally} proposes a GAN-based framework that generates data distributions close to the original while enforcing conditional independence constraints through an additional adversarial objective, and which directly ensures fairness, and \cite{casajus2022evolutive} proposes a Bayesian-network autoencoder trained adversarially for interpretable anomaly detection.

These prior studies focus primarily on incorporating structure through the generator, through auxiliary constraints, or through the overall training architecture. That is, they focus on injecting structure through model design. By contrast, our contribution is on the discriminator-side and variational. %
We study how known Bayesian network structure can be used to replace a graph-agnostic monolithic discriminator objective by graph-informed family-level objectives, and we show that this yields a principled surrogate for the corresponding graph-constrained global discrepancy. Methodologically, this connects to empirical multicritic strategies used, for example, in image-to-image translation methods \citep{yu2019free}, recurrent neural networks and time-series generation \citep{hyland2018real}, convolutional neural networks \citep{nie2018relgan}, and inpainting \citep{chang2019free}. However, our results supply the missing theoretical rationale for these largely heuristic approaches: when the target factorizes according to a Bayesian network, sums of local graph-informed discrepancies control the corresponding global objective.

A separate but related strand relates to graph-structure generation, where graph-structured and conditional generative models have been explored in \cite{you2018graphrnn}, incorporating relational structure into generation. In the broader context of graph neural networks (GNNs), originally introduced in
\cite{scarselli2008graph} with earlier foundational ideas appearing in
\cite{sperduti1997supervised}, our work is conceptually related in that both
frameworks exploit graph-induced locality to replace a global high-dimensional
learning problem by a collection of structured local operations. In GNNs, this is achieved through message passing and feature aggregation over local
neighborhoods. Such architectures have been particularly successful in material science, where graph-based representations naturally encode atomic and molecular
interactions; see, for example, \cite{xie2018crystal,reiser2022graph,du2024densegnn}. Furthermore, GNNs equivariant to rotations have been studied in \cite{satorras2021n}. By contrast, our approach uses local variational discriminators defined on each node and its parent set in order to control a divergence between full probability models.

Our contribution is not to introduce another architecture-specific mechanism for injecting structure, but to develop a general variational principle for graph-informed adversarial learning. In contrast to prior work, which relied on classical subadditivity properties of unrestricted divergences, we show that interpolative divergences admit a weaker but still useful infimal subadditivity principle. This fills a theoretical gap in the literature and provides a general justification for graph-informed localized discriminators in variational generative modeling. To make this precise, we next review the variational formulation of GAN training and the discrepancy functionals that underpin our later subadditivity and infimal subadditivity results.

\section{Variational learning in GANs}
\label{sec:var-gans}

The purpose of this section is to set out the variational framework and discrepancy functionals used in the remainder of the paper. In particular, we review $f$-divergences, IPMs, $(f,\Gamma)$-divergences, and proximal optimal transport divergences within a common language that will support the later graph-informed analysis.

GANs are a class of neural architectures for learning probability distributions from data by training a generator to synthesize samples that mimic a target dataset. Training is formulated as a zero-sum game between a generator and a discriminator that compete against each other. Formally, let $(\mathcal{X},\mathcal{M})$ be a measurable space and $\mathcal{P}(\mathcal{X})$ be the space of probability distributions on $\mathcal{X}$. For a fixed target distribution $Q\in \mathcal{P}(\mathcal{X})$, GAN learning is formulated as a min--max problem for some objective functional $H:\Gamma\times\mathcal{P}(\mathcal{X})\times \mathcal{P}(\mathcal{X})\to \bar{\mathbb{R}}$, i.e., 
\begin{equation}\label{eq:general GAN formulation}
\inf_{g\in \mathcal{G}}\sup_{\gamma\in\Gamma}H(\gamma;Q,P_g)\,,
\end{equation}
where $\mathcal{G}$ and $\Gamma$ are the spaces of generators and discriminators, respectively, and are parametrized by neural networks. The generator $g:Z\to \mathcal{X} $ is a sufficiently expressive mapping that transforms a random noise vector $z\in Z$, sampled from a prior distribution $P_Z$ (e.g., Gaussian or uniform), into a synthetic data point $g(z)\in \mathcal{X}$ in the same space as the real data. The mapping $g$ pushes the prior distribution forward to the generated probability distribution $P_g =P\circ g^{-1}$, with the aim that $P_g$ is indistinguishable from the real data, $Q$. On the other hand, the discriminator $\gamma: \mathcal{X}\to \mathbb{R}$ then attempts to distinguish samples from $Q$ and $P_g$. 

\subsection{Training via variational representations}
\label{sec:variational-representations}

We begin by explaining how the min--max problem in \eqref{eq:general GAN formulation} can be interpreted as minimizing a variationally defined discrepancy between probability measures. In most GANs, \eqref{eq:general GAN formulation} corresponds to minimizing a measure of discrepancy between the target probability distribution $Q$ and the generated $P_g$. More generally, for any $P, Q\in \mathcal{P}(\mathcal{X})$, under suitable choices of the objective functional $H$ and the function class $\Gamma$, the quantity, 
$$\mathcal{D} (Q, P; H, \Gamma) = \sup_{\gamma\in\Gamma} H(\gamma;Q,P)\,,$$ 
 can be a divergence, an Integral Probabilistic Metric (IPM), or an optimal transport cost between probability distributions $Q$ and $P$. In this notation,  \eqref{eq:general GAN formulation} has the representation, 
\begin{equation}\label{eq: GAN measures of discrepancy}
\inf_{g\in \mathcal{G}}\sup_{\gamma\in\Gamma}H(\gamma;Q,P_g)=\inf_{g\in \mathcal{G}} \mathcal{D} (Q, P_g; H, \Gamma) \,.
\end{equation}
However, not every measure of discrepancy possesses the
\emph{divergence property}, namely,
\begin{equation*}
  \mathcal{D}(Q, P; H, \Gamma)=0 \quad \text{if and only if} \quad Q=P \,.
\end{equation*}
This property is important from a statistical and optimization perspective because it ensures that achieving zero discrepancy is equivalent to recovering the target distribution itself. In \cite{birrell2022f}, the authors identify sufficient conditions under which this property holds for several examples of such discrepancies, including those considered in this work. %
For the convenience of the reader, relevant definitions of admissible sets are provided in Appendix~\ref{appendix:extra-definitions}.

The next subsections instantiate this general template for the main discrepancy classes relevant to GAN training and to our later analysis.

\subsection{\texorpdfstring{$f$-Divergences}{f-Divergences}}
\label{sec:f-div}

We begin with $f$-divergences, since they provide one of the classical variational formulations used in GAN training and motivate the role of convex conjugates in discriminator objectives. Let  $f:[0,\infty)\to\mathbb{R}$ be strictly convex and lower semicontinuous with $f(1)=0$. For probability distributions $P, Q \in \mathcal{P}(\mathcal{X})$, the $f$-divergence of $Q$ with respect to $P$ is defined by 
\begin{equation}\label{def:f_divergence}
\mathfrak{D}_f(Q\|P) = E_P\left[f\left(\frac{dQ}{dP}\right)\right]\,,
\end{equation}
if $Q\ll P$, and set to $+\infty$ otherwise, where $dQ/dP$ denotes the Radon--Nikodym derivative. This divergence admits the variational representation,
\begin{equation}\label{eq:VarfDiv}
  \mathfrak{D}_f(Q\|P)=  \sup_{\gamma\in \mathcal{M}_b(\mathcal{X})}\left\{E_Q[\gamma]-\inf_{\nu \in \mathbb{R}}\left\{ \nu + E_P[f^*(\gamma -\nu)]\right\}  \right\} \,,
\end{equation}
where $f^*(s)=\sup_{t\in [0,\infty)}\left\{ st- f(t)\right\}$ is the Legendre--Fenchel transform (convex conjugate) of $f$. For variational training of GANs, the choice of $f$ determines both the divergence $\mathfrak{D}_f(Q\|P)$ and the nonlinearity in the discriminator through its convex conjugate $f^*$. In particular, the growth of $f^*$ controls how strongly large values $\gamma(x)$ are penalized under the reference distribution $P$. 

In the sequel, we will use the Donsker--Varadhan variational formula for KL,
\begin{equation}\label{eq:formula_DV}
\mathfrak{D}_{\rm DV}(Q\|P)=  \sup_{\gamma\in \mathcal{M}_b(\mathcal{X})}\left\{E_Q[\gamma]-\log E_P[e^\gamma]  \right\} \,,
\end{equation}
and, when considering $f$-GAN type losses, we will also use \eqref{eq:VarfDiv} with both $f_{\mathrm{KL}}(t)=t\log t$, for the KL divergence, and 
$$f_{\mathrm{JS}}(t) = \tfrac{1}{2} \left[t \log t - (t+1) \log\left(\frac{t+1}{2}\right)\right]\,, \qquad t \geq 0\,$$
for the JS divergence.
The corresponding Legendre--Fenchel transforms are $f^*_{\mathrm{KL}}(s)=e^{s-1}$ and $f^*_{\mathrm{JS}}(s) = - \log(2-e^s)$, $s<\log2$  (see, e.g., \cite[Tables 1 and 2]{nowozin2016f}). 
In the KL case, $f^*_{\mathrm{KL}}$ has exponential growth, which heavily penalizes large discriminator values and emphasizes rare events where $P$ and $Q$ disagree in the tails. Moreover, for the KL divergence the infimum over $\nu$ in \eqref{eq:VarfDiv} can be solved analytically and one recovers the classical Donsker--Varadhan representation \eqref{eq:formula_DV}, in which the discriminator objective involves a log-moment generating function, a form that underlies many existing $f$-GAN and risk-sensitive objectives. Unlike KL, the JS divergence is symmetric and bounded, $0 \leq \mathfrak{D}_{\mathrm{JS}}(Q\|P) \leq \log 2$, and the corresponding conjugate $f^*_{\mathrm{JS}}$ diverges only logarithmically as $s \to \log 2$, so the penalty on extreme discriminator values is much milder than when using KL. In the JS case, the variational representation recovers the original logistic GAN objective (up to an additive constant and a reparameterization of $\gamma$). In our experiments involving discrete Bayesian networks, we therefore focus on the JS divergence, since its symmetry and boundedness (it remains finite even when $P$ and $Q$ have partially mismatched support) tend to yield more stable training than KL-based objectives in this setting.

\subsection{\texorpdfstring{$\Gamma$-Integral Probability Metrics ($\Gamma$-IPMs)}{Gamma-Integral Probability Metrics (Gamma-IPMs)}}
\label{sec:Gamma-IPMs}

We next turn to $\Gamma$-IPMs, which arise when the discriminator is constrained to a prescribed function class and include several widely used probability metrics. Let $\Gamma \subset \mathcal{M}_b(\mathcal{X})$ be a class of bounded measurable test functions on $\mathcal{X}$, and define the associated $\Gamma$-IPM by,
\begin{equation}\label{eq:IPM}
  W^{\Gamma}(Q,P)=  \sup_{\gamma\in \Gamma}\left\{E_Q[\gamma]-E_P[\gamma]\right\} \,.
\end{equation}
Under  certain conditions for $\Gamma\subset \mathcal{M}_b(\mathcal{X})$ \citep[Theorem 8]{birrell2022f}, $W^{\Gamma}$ satisfies the divergence property. If $\Gamma\subset C_b(\mathcal X)$ is strictly admissible (see Appendix~\ref{appendix:extra-definitions}), then $W^{\Gamma}$ also satisfies the divergence property \citep[Theorem 15]{birrell2022f}. Different choices of $\Gamma$ yield many classical probability metrics, including Wasserstein distances, total variation, the Dudley metric, and the maximum mean discrepancy. In the context of GANs, choosing $\Gamma$ corresponds to specifying the discriminator function class, and hence determines which IPM is minimized during training. 

More generally, taking $\Gamma = \{\gamma : \mathcal{X} \to \mathbb{R}\}$ to be:
\begin{enumerate}
\item $\Gamma = {\rm{Lip}}_b^1(\mathcal{X})$ yields the $1$-Wasserstein distance and its variants,
\item $\Gamma = \{\gamma\in C_b(\mathcal{X}): \|\gamma\|_{\infty} \leq 1\}$ yields (a multiple of) the total variation distance,
\item $\Gamma = \{\gamma\in {\rm{Lip}}_b^1(\mathcal{X}): \|\gamma\|_{\infty} \leq 1\}$ yields the Dudley metric, and
\item $\Gamma = \{\gamma\in \mathcal{H} : \|\gamma\|_{\mathcal{H}}\leq1\}$, i.e., the unit ball in a RKHS $\mathcal{H} \subset C_b(\mathcal{X})$, yields a generalization of maximum mean discrepancy.
\end{enumerate}
These examples illustrate how regularity and boundedness constraints on $\Gamma$ translate into different notions of ``distance'' between $Q$ and $P$, and hence into different inductive biases in the associated GAN discriminators, determining which discrepancies between $Q$ and $P$ are emphasized during training.

\subsection{\texorpdfstring{$(f,\Gamma)$-Divergences}{(f, Gamma)-Divergences}}
\label{sec:f-Gamma-div}

The viewpoints from Sections~\ref{sec:f-div} and \ref{sec:Gamma-IPMs} can be combined in the form of $(f,\Gamma)$-divergences, which interpolate between $f$-divergences and IPMs and will be central to our later results. Recall from \eqref{eq:VarfDiv} that the $f$-divergence can be written in variational form as a supremum over all bounded measurable functions. In practice, however, both in statistical estimation and in GAN training, the discriminator is restricted to a smaller function class \citep{birrell2022f}. Given a convex $f$ and a class of test functions $\Gamma \subset \mathcal{M}_b(\mathcal{X})$, we define the corresponding $(f,\Gamma)$-divergence by,
\begin{equation}
    \label{eq:f-gamma-variational-rep}
    \mathfrak{D}_{f}^{\Gamma}(Q\|P) =\sup_{\gamma\in\Gamma}\left\{E_{Q}[\gamma]-\inf_{\nu \in \mathbb{R}}\left\{ \nu + E_{P}[f^*(\gamma -\nu)]\right\}\right\}\,,
\end{equation}
where $f^*$ is the convex conjugate of $f$. When $\Gamma = \mathcal{M}_b(\mathcal{X})$, then \eqref{eq:f-gamma-variational-rep} reduces to the $f$-divergence $D_f(Q\|P)$. In \cite[Theorem 8]{birrell2022f}, the authors prove that under suitable conditions, $\mathfrak{D}_{f}^{\Gamma}(Q\|P)$ is a divergence, i.e. $\mathfrak{D}_{f}^{\Gamma}(Q\|P)\ge 0$ with equality if and only if $Q=P$, and interpolates between $f$-divergences and $\Gamma$-IPMs via an infimal convolutional formula, i.e., 
\begin{equation}\label{def:inf con}
\mathfrak{D}_{f}^{\Gamma}(Q\|P) = 
  \inf_{R\in\mathcal{P}(\mathcal{X})}
    \left\{W^{\Gamma}(Q,R) + \mathfrak{D}_f(R\|P) \right\}\,.
\end{equation}
In particular, we have ${\displaystyle 0\leq \mathfrak{D}_{f}^{\Gamma}(Q\|P) \leq\min\left\{ \mathfrak{D}_f(Q\|P),\; W^{\Gamma}(Q,P)\right\}}$. In  \cite[Theorem 15]{birrell2022f}, the same result is proven when $\Gamma \subset C_b(\mathcal X)$ and $\mathcal X$ is a Polish
space. In Theorem 25 of the same paper, it is proven that there exists a unique optimizer $R^{*} \in\mathcal{P}(\mathcal{X})$ in \eqref{def:inf con} and there exists an optimizer $\gamma^{*}\in \Gamma$ in \eqref{eq:f-gamma-variational-rep} which is unique (up to a constant) in $\textrm{supp}(P)\cap \textrm{supp}(Q)$. Moreover $\gamma^{*}$ and $R^{*}$ satisfy 
\begin{equation}\label{eq:opt f, gamma}
dR^{*}\propto (f^*)'(\gamma^{*})dP\,, \quad 
  \text{or equivalently}  \quad  
    \gamma^{*}  =f'\left(\frac{dR^{*}}{dP}\right) + \text{constant}\,,
\end{equation}
and we have $ W^{\Gamma}(Q,R^{*}) = E_Q[\gamma^{*}] - E_{R^{*}}[\gamma^{*}]\,.$ 

We introduce the set $\Gamma^L=\{L\gamma:\gamma\in\Gamma\}$ for $L>0$. For the $(f,\Gamma^L)$-divergence, the optimizers of \eqref{eq:f-gamma-variational-rep} and \eqref{def:inf con} depend on $L$ and consequently change with different choices of $L$.

In the GAN setting, $(f,\Gamma)$-divergences capture exactly the discrepancy that can be detected by a discriminator constrained to the function class $\Gamma$ (e.g., a neural network architecture with Lipschitz or locality constraints), and will be the central objects in our subadditivity results for Bayesian network-aligned discriminators.

\subsection{Proximal optimal transport divergences}

Finally, we consider proximal optimal transport divergences, which provide an interpolation between transport-based and information-theoretic discrepancies and fit naturally into the variational framework developed above. Proximal optimal transport divergence, introduced in \cite{baptista2025proximal}, is a discrepancy measure that interpolates between information divergences $\mathfrak{D}$ and optimal transport distances $T_c$ corresponding to a cost function $c:\mathcal{X}\times \mathcal{X}\to[0,\infty)$, via an infimal convolution formulation,
\begin{equation}
  \label{def:POTD}
  \mathfrak{D}^c_\varepsilon(Q\|P) = \inf_{R}\left\{ T_c(Q,R) + \varepsilon \mathfrak{D}(R\|P) \right\}\,,
\end{equation}
where the infimum is over all probability distributions $R$ on a Polish space $\mathcal{X}$. As for the $(f,\Gamma)$-divergence, the dual formulation offers a variational representation involving test functions. For example, for  $f=f_{{\rm{KL}}}$ and $c$ bounded below and lower semicontinuous, the dual representation is given by
\begin{equation}\label{eq:dual for intro}
\mathfrak{D}_{\mathrm{KL},\varepsilon}^{c}(Q\|P)= 
\sup_{\substack{(\phi, \psi) \in C_b(\mathcal{X})\times C_b(\mathcal{X}),\\ \phi\oplus\psi\leq c}}
\left\{ E_Q[\phi]  -\varepsilon \log E_P[e^{-\psi/\varepsilon}]\right\},
\end{equation}
where $\phi\oplus\psi$ denotes the function $\phi(x)+\psi(y)$. While the dual problem is written over bounded continuous functions for convenience, optimal potentials need not be bounded in general; however, there exist $\phi \in L^1(Q)$ and $\psi \in L^1(P)$ satisfying $\phi\oplus\psi \le c$. 

A special class of optimal transport distances on a metric space $(\mathcal X,d)$ is when $T_c$ are $p$-Wasserstein distances, 
\begin{equation}\label{def:wass.p}
W_p(Q, P) = \inf_{\pi \in \Pi(Q, P)} \left( \int_{\mathcal X \times \mathcal X} d(x,y)^p \, d\pi(x, y) \right)^{\frac{1}{p}},
\end{equation}
where $c(x,y) = d(x,y)^p$ for $p\geq 1$, $Q,P\in \mathcal{P}_p(\mathcal{X}) = \left\{ P \in \mathcal{P}(\mathcal X) \,:\,  \int d(x,x_0) ^p dP < \infty\right\}$, and $x_0$ is an arbitrary point in $\mathcal{X}$. When $T_c$ is the $1$-Wasserstein distance, then \eqref{def:POTD} becomes an $(f,\Gamma)$-divergence with $f=f_{{\rm{KL}}}$ and $\Gamma={\rm{Lip}}^1_{b}(\mathcal{X})$.

Our aim is to exploit the graphical structure arising from side information. When $Q$ is a target Bayesian network, the discrepancies introduced above do not yet exploit this structural information. In the remainder of the paper, we specialize to distributions that factorize according to Bayesian networks, as described in the next section. In Section \ref{sec:main-results}, we then establish a $\frac1n$-linear infimal subadditivity property that will allow us to define localized discriminator classes and to relate global objective functionals to lower-dimensional components associated with the graph.

\section{Bayesian network operators}
\label{sec:bn-operators}

We now introduce the Bayesian network framework that provides the structural setting for the variational discrepancies considered in the previous section. Our aim is to encode the factorization of the target probability distribution with respect to a DAG and to define the notation needed for localized function classes and conditional expectation operators. These objects will be used in the sequel to formulate and analyze graph-aligned discriminator objectives.

Continuing the Bayesian network notation from Section~\ref{sec:intro}, for a DAG denoted $G = (V,E)$, the random vector $\bm{X} =  \bm{X}_V = (X_1,\dots,X_n)$ indexed by the vertices $V$ takes values $\bm x=(x_1,\dots,x_n)$ in the state space $\mathcal{X} = \mathcal{X}_1 \times \dots \times \mathcal{X}_n$. We denote the joint probability distribution of $\bm X$ by $Q$ (with density $q$ if it exists). For any subset $A=\{i_1,\dots,i_m\}\subset V$, we denote $\bm X_A=(X_{i_1},\dots,X_{i_m})$ which takes values $\bm X_A=\bm x_A=(x_{i_1},\dots,x_{i_m})\in\mathcal X_A=\mathcal X_{i_1}\times \cdots \times X_{i_m}$ and its marginal is denoted by $Q_A$.  The conditional probability distribution of $X_i$ given parents values $\bm{X}_{\mathrm{Pa}(i)} = \bm{x}_{\mathrm{Pa}(i)}$ is denoted by $Q_{i \mid\mathrm{Pa}(i)}$, i.e, $Q_{i \mid\mathrm{Pa}(i)}(dx_i)=Q(dx_i \mid \bm{x}_{\mathrm{Pa}(i)})$ with corresponding density $q(x_i |\bm{x}_{\mathrm{Pa}(i)})$. In general, for two sets of indices $A,B\subset V$, $Q_{A \mid B}(d\bm x_A)=Q(d\bm x_A |  \bm{x}_{B})$. For notational simplicity, when the index sets are clear, we write $Q_{i_1\cdots i_m}$ in place of $Q_{\{i_1,\dots,i_m\}}$ and 
$Q_{i_1\cdots i_m \mid j_1\cdots j_k}$ in place of
$Q_{\{i_1,\dots,i_m\}\mid \{j_1,\dots,j_k\}}$, e.g., in the proof of the main theorem. Having all notation and terminology in place, we next define Bayesian networks formally.

\begin{definition}[Bayesian network over a DAG]\label{def:BN_over_DAG} 
  A Bayesian network over a DAG $G = (V,E)$ with variables $X_i \in \mathcal{X}_i$ is the pair $(G,Q)$ such that the joint probability distribution $Q$ factorizes as 
  \begin{equation}\label{eq:factorization}
  Q(dx_1,\dots, dx_n)
  = \prod_{i=1}^{n}
  Q_{i \mid\mathrm{Pa}(i)}(
    dx_i) \,,
  \end{equation}
  with $\mathrm{Pa}(i)$ the parent set (and $Q_{i \mid\mathrm{Pa}(i)}$ the CPDs, or marginals if $\mathrm{Pa}(i)=\emptyset$).
\end{definition}
For any set $A\subset V$, we denote $A^c=V\setminus A$. The set of probability distributions that factorize according to $G$ with vertex set $V=\{1,\dots,n\}$ and edge set $E\subset V\times V$ is denoted by $\mathcal{P}^G$. Given a Bayesian network $(G,Q)$, the marginal
distribution $Q_A$ of $\bm X_A$ has the form 
\begin{equation*}
 Q_A(d\bm x_A)=\int_{\mathcal X_{\rho_A}}\prod_{i\in A}Q_{i \mid\mathrm{Pa}(i)}(dx_i)\prod_{j\in \rho_A}Q_{j \mid\mathrm{Pa}(j)}(dx_j) 
\end{equation*}
where $\rho_A$ is the set of indices of all the ancestors of $A$ with respect to graph $G$. 

We now fix $Q\in\mathcal{P}^G$. For each vertex $i\in V$, we the following spaces of probability distributions: 
\begin{equation}\label{def:general_PrMeasQ}
\begin{split}
\mathcal{P}_i^G&=\Bigg\{Q_{V^{(i-1)} \mid\mathrm{Pa}(i)}\otimes  P_{i\,\cup\,\mathrm{Pa}(i)}\otimes \prod_{j=i+1}^n Q_{j \mid\mathrm{Pa}(j)}\,: P_{i\,\cup\,\mathrm{Pa}(i)}\in \mathcal{P}\left(\mathcal{X}_{i\,\cup\,\mathrm{Pa}(i) }\right)\Bigg\}
 \end{split}
\end{equation}
where $V^{(i)}=\{1,\dots, i\}\,\setminus\,\mathrm{Pa}(i+1)$ with $i=1,\dots,n-1$, and the symbol $\otimes$ between probability distributions means the product of distributions, i.e., if $\mu_1,\dots,\mu_n$ are probability distributions on $\mathcal X_1,\dots,\mathcal X_n$, respectively, then 
$$\mu_1\otimes\cdots\otimes\mu_n(d\bm x)=\mu_1(dx_1)\cdots \mu_n(dx_n)\,.$$
The probability space given in \eqref{def:general_PrMeasQ} maintains the graph-structure of $G$. For  $\gamma \in \Gamma$, we introduce the conditional $Q$-expectation operators as follows:
\begin{equation}\label{eq:general_I_operator}
 I_{i,{\rm{Pa}}(i)}[\gamma](x_i,{\bm x}_{\mathrm{Pa}(i)}) \!=\! \! \int_{{\bm{\mathcal{X}}}_{V^{(i-1)}}}\!\!\int_{\mathcal{X}_{i+1}} \!\!\!\!\!\cdots \! \int_{\mathcal{X}_{n}} \!\!\!\!\gamma({\bm x})\!\! \prod_{j=i+1}^n \!\! Q_{j \mid\mathrm{Pa}(j)}(dx_j)\,Q_{V^{(i-1)}\mid  \mathrm{Pa}(i) }(d{\bm x}_{V^{(i-1)}})\,.
 \end{equation}
Moreover, for each vertex $i\in V$, we define the following sets:
\begin{equation}\label{eq:general_projectedToXY}
I_{i,\mathrm{Pa}(i)}[\Gamma]=\left\{I_{i,\mathrm{Pa}(i)}[\gamma]:\gamma\in\Gamma\right\}
\end{equation}
and  $\Gamma_{i\,\cup\,\mathrm{Pa}(i)}$ is the same function space $\Gamma$  restricted to the subspace $\mathcal{X}_{i \,\cup\, \mathrm{Pa}(i)}$. For example, if $\Gamma = \mathrm{Lip}_b^1(\mathcal{X})$, then  $\Gamma_{i \,\cup\, \mathrm{Pa}(i)} = \mathrm{Lip}_b^1(\mathcal{X}_{i \cup \mathrm{Pa}(i)})$. 

The operators $I_{i,\mathrm{Pa}(i)}$ and the associated projected classes $I_{i,\mathrm{Pa}(i)}[\Gamma]$ provide the mechanism by which a global discriminator objective on $\mathcal{X}$ can be averaged over the conditional structure of $Q$ and restricted to the local family $i\cup \mathrm{Pa}(i)$. They therefore form the bridge between the general variational discrepancies introduced in Section~\ref{sec:variational-representations} and the graph-structured objectives studied next. We now use this notation to show how global discrepancy minimization over $\mathcal{P}^G$ can, under suitable conditions, be related to collections of local family-level problems.

\section{Graph-informed GANs (GiGANs)}
\label{sec:main-results}

In this section, we combine the variational discrepancy framework of Section~\ref{sec:var-gans} with the Bayesian-network operators introduced in Section~\ref{sec:bn-operators}. Our goal is to show that, when the target distribution factorizes according to a known DAG, global discrepancy minimization can be replaced by a principled surrogate involving family-level objectives on the local neighborhoods $i\cup \mathrm{Pa}(i)$. We begin by recalling the generalized subadditivity framework of \cite{ding2021gans}, then explain why this classical notion is not well suited to interpolative divergences, and finally introduce the weaker infimal formulation that underpins our main results for $(f,\Gamma)$-divergences, IPMs, and proximal optimal transport divergences.

In \cite{ding2021gans}, the authors establish generalized subadditivity bounds for a broad class of probability distances and divergences, including the Jensen--Shannon divergence, Wasserstein distance, and nearly all $f$-divergences. For convenience, we recall their definition below. Throughout this section, we fix a graph structure $G$ write $\mathcal{P}^G$ for the class of probability distributions that factorize according to $G$.

\begin{definition}[Generalized subadditivity %
\citep{ding2021gans}]\label{def:inequality_sub}
  Let $P,Q \in \mathcal{P}^G$ be a Bayesian network with underlying graph $G$.
  Given measures of discrepancy $\delta$ and $\delta'$, we say that $\delta$ satisfies $\alpha$-linear \emph{generalized
  subadditivity with error} $\epsilon$ with respect to $\delta'$ on Bayesian
  networks if, for all such $P,Q$, the following inequality holds:
  \begin{equation}\label{inequality_sub}
  \delta'(Q,P)-\epsilon
  \leq \alpha\sum_{i=1}^n
  \delta\left(Q_{i\,\cup\,\mathrm{Pa}(i)},\, P_{i\,\cup\,\mathrm{Pa}(i)}\right).
  \end{equation}
  For the case $\delta=\delta'$, $\epsilon=0$ and $\alpha=1$ we say that $\delta$ satisfies subadditivity on Bayesian networks. 
\end{definition}

By contrast, interpolative divergences presented in Section~\ref{sec:f-Gamma-div}---which have demonstrated strong empirical performance in generative modeling---have not been systematically studied from the viewpoint of subadditivity. A key difficulty is that these discrepancies are defined through primal--dual variational formulations, such as \eqref{def:inf con} and \eqref{eq:f-gamma-variational-rep}, rather than through the classical density-based representation \eqref{def:f_divergence}. In particular, the standard arguments used to prove subadditivity for classical divergences do not transfer directly. The lack of a triangle inequality for divergences is an example; see Theorem 6 and its proof in \cite{ding2021gans}. 

To motivate the need for a different notion, we present two simple calculations. The first recalls the mechanism behind subadditivity for the KL divergences, as proven in \cite[Supplement §8.2]{ding2021gans}. The second shows why the same argument breaks down for interpolative divergences.

To illustrate the first point, consider the simple chain,
\begin{equation}\label{Structure1}
X_1 \longrightarrow X_2 \longrightarrow X_3\,.
\end{equation}
For any $P,Q\in\mathcal P^G$, the key observation is that 
\begin{align}
\mathfrak{D}_{f_{\mathrm{KL}}}(Q\|P)&=E_{Q}\left[\log\left(\frac{dQ}{dP}\right)\right]\label{eq:calc-fKL}\\
&=E_{Q_{12}}\left[\log\left(\frac{dQ_{12}}{dP_{12}}\right) \right]+E_{Q_{23}}\left[\log\left(\frac{dQ_{23}}{dP_{23}}\right)\right]-E_{Q_{2}}\left[\log\left(\frac{dQ_{2}}{dP_{2}}\right)\right]\nonumber\\
&\leq \mathfrak{D}_{f_{\mathrm{KL}}}(Q_{12}\|P_{12})+\mathfrak{D}_{f_{\mathrm{KL}}}(Q_{23}\|P_{23})\nonumber\\
&\leq \mathfrak{D}_{f_{\mathrm{KL}}}(Q_{1}\|P_{1})+\mathfrak{D}_{f_{\mathrm{KL}}}(Q_{12}\|P_{12})+\mathfrak{D}_{f_{\mathrm{KL}}}(Q_{23}\|P_{23}) \nonumber
\end{align}
because $P=\frac{P_{12}P_{23}}{P_2}$ and $Q=\frac{Q_{12}Q_{23}}{Q_2}$. Thus, the logarithm of the likelihood ratio decomposes into local terms, with the overlap corrected by the marginal on $X_2$. As shown in \cite{ding2021gans}, this argument extends to arbitrary DAGs by induction.

The situation is different for interpolative divergences. Consider an $(f,\Gamma)$-divergence with $f=f_{\mathrm{KL}}$ and $\Gamma=\mathrm{Lip}^1_b(\mathcal{X})$, and let $R^{*}$ denote the optimal intermediate measure that solves the infimal convolution \eqref{def:inf con}. By Remark 4.12 of \cite{dupuis2022formulation},
\begin{equation}
    \label{eq:calc-fKL-gamma}
    \mathfrak{D}_{f_{\mathrm{KL}}}^\Gamma(Q\|P)=E_{Q}\left[\log\left(\frac{dR^*}{dP}\right)\right]\,.
\end{equation}
At first sight, \eqref{eq:calc-fKL-gamma} resembles the KL identity \eqref{eq:calc-fKL}. The crucial difference, however, is that the interpolative divergence involves the optimal intermediate measure $R^*$ in place of $Q$. Moreover, \eqref{eq:opt f, gamma} gives,
\begin{equation*}
    dR^{*}\propto (f_{\rm{KL}}^*)'(\gamma^{*})dP= e^{\gamma^{*}-1}dP \,.
\end{equation*}
Using the factorization of $P$, we obtain
\begin{equation*}
    R^{*}(dx_1,dx_2,dx_3)=e^{\gamma^{*}(x_1,x_2,x_3)-1}P_{3 \mid 2}(dx_3)P_{2 \mid 1}(dx_2)P_{1}(dx_1) \,.
\end{equation*}
Now consider the conditional distribution of $X_3$ under $R^{*}$. A direct calculation gives
\[
R^{*}_{3  \mid  \mathrm{Pa}^{*}(3)}(dx_3)
=
\frac{
e^{\gamma^*(x_1,x_2,x_3)-1}\,P_{3\mid 2}(dx_3 \mid x_2)
}{
\int e^{\gamma^*(x_1,x_2,z)-1}\,P_{3\mid 2}(dz \mid x_2)
}.
\]
This expression reveals that, in general, the conditional distribution of $X_3$ under $R^{*}$ depends on both $x_1$ and $x_2$. Hence, 
$$\mathrm{Pa}^{*}(3) \supseteq\{2\}
= \mathrm{Pa}(3)\,.$$ 
In other words, the optimal intermediate measure $R^*$ need not preserve
the conditional independence structure of the original graph.
This prevents the local likelihood terms from being paired according to the families of $G$ as in the first calculation for the classical KL divergence.

These calculations suggest that, for interpolative divergences, classical subadditivity is generally too strong a requirement. In the next subsection, we introduce a weaker notion of subadditivity, formulated at the level of the optimization problem itself, that will allow us to incorporate this structure.

\subsection{Infimal subadditivity for Bayesian networks over a DAG}
\label{sec:infimal-subadditivity-general-bn}

The calculations in the previous section indicate that classical subadditivity
fails for $(f_{\mathrm{KL}},\Gamma)$-divergences. More generally, for any
$f\neq f_{\mathrm{KL}}$, subadditivity continues to fail even under the
closeness assumptions considered in \cite{ding2021gans}. The difficulty stems
from the fact that interpolative divergences are defined through infimal
convolution and variational representations, rather than through the classical
density-based formulation of $f$-divergences. As a result, the arguments used to
establish subadditivity in the classical setting no longer apply. This suggests
that classical subadditivity must be replaced by a weaker notion that is
compatible with optimization over structured model classes. Motivated by this
gap, we introduce the notion of $\alpha$-linear generalized infimal
subadditivity (and additivity) on Bayesian networks, formalized in the next
definition.

\begin{definition}
  \label{def:inequality_inf_sub}
  Let $Q \in \mathcal{P}^G$ be a Bayesian network with underlying graph $G$.
  Given measures of discrepancy $\delta, \delta_1, \dots, \delta_n$, we say that $\delta$ satisfies \emph{$\alpha$-linear generalized infimal
  subadditivity} with respect to $\delta_1, \dots, \delta_n$ on Bayesian
  networks if, for all such $Q$, the following inequality holds:
  \begin{equation}\label{infbounds_2}
  \inf_{P \in \mathcal{M}} \delta(Q,P)
  \leq
  \inf_{P \in \mathcal{M}}\alpha
  \sum_{i=1}^n
\delta_i\left(Q_{i\,\cup\,\mathrm{Pa}(i)}\,, P_{i\,\cup\,\mathrm{Pa}(i)}\right)\,,
  \end{equation}
  with $\mathcal{M} \subseteq \mathcal{P}(\mathcal{X})$.
  For the case $\delta=\delta_1=\cdots=\delta_n$, we say that $\delta$ satisfies infimal subadditivity on Bayesian networks. Furthermore, we say that $\delta$ satisfies \emph{$\alpha$-linear generalized infimal additivity} with respect to $\delta_1, \dots, \delta_n$ on Bayesian
  networks if \eqref{infbounds_2} holds with equality.
\end{definition}

The main classes of probabilities that we consider for $\mathcal{M}$ are $\mathcal{P}(\mathcal{X})$, parametric families, and $\mathcal{P}^G$.
From a purely mathematical perspective, the infimal formulation when $\mathcal{M} = \mathcal{P}(\mathcal{X})$
is trivial, as it is minimized by $P=Q$. However, this viewpoint is not relevant in learning settings, where the target distribution $Q$ is only accessible through samples and the optimization over all probability measures is intractable. Therefore, this is important for applications as even when the target $Q$ factorizes according to $G$, it is generally difficult to enforce the same structural constraint on the optimizer $P^\ast$ during training. In practice, the usefulness of infimal subadditivity is therefore that it replaces a difficult global optimization problem by a surrogate built from family-level discrepancies, without requiring the optimizer itself to be explicitly parameterized as an element of $\mathcal{P}^G$.

We now show that this principle applies to the main discrepancy classes considered in this paper. Under suitable assumptions, $(f,\Gamma)$-divergences, $\Gamma$-IPMs, and proximal optimal transport divergences all admit graph-informed upper bounds in terms of local family objectives, and in the additive regime, these bounds become exact. The next theorem provides a theoretical justification for replacing an intractable global optimization problem with a set of tractable local learning problems. We begin with the interpolative case $(f,\Gamma)$, which provides the basic template for the later results.

\begin{theorem}\label{thm:main2general}
  Let $G$ be an arbitrary DAG and $L,L_1,\dots,L_n>0$. Suppose that $I_{i,\mathrm{Pa}(i)}[\Gamma^L]\subset \Gamma^{L_i}_{i\cup \mathrm{Pa}(i)}$
  for all $i\in V$. Then, $(f,\Gamma)$-divergences satisfy $\frac{1}{n}$-linear generalized infimal subadditivity given in \eqref{infbounds_2} with $\mathcal{M} = \mathcal{P}(\mathcal{X})$ or $\mathcal{P}^G$ and
    $$\delta=\mathfrak{D}_{f}^{\Gamma^L}\;\;\textrm{and }\;\;\delta_i=\mathfrak{D}_{f}^{\Gamma^{L_i}_{i\,\cup\,\mathrm{Pa}(i)}} \quad \text{for all} \; i\in V\,.$$ Moreover, if  $L_i\leq L$ for all $i\in V$, then $(f,\Gamma)$-divergences satisfy $\frac{1}{n}$-linear generalized infimal additivity with $\delta=\mathfrak{D}_{f}^{\Gamma^L}$ and $\delta_i=\mathfrak{D}_{f}^{\Gamma^{L_i}_{i\,\cup\,\mathrm{Pa}(i)}}$ for all  $i\in V$.
\end{theorem}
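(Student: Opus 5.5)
The plan is to prove the inequality by a direct comparison of the variational objectives \eqref{eq:f-gamma-variational-rep}, family by family, using the operators $I_{i,\mathrm{Pa}(i)}$. Then I will settle the infima by noting that the target $Q$ itself lies in $\mathcal{M}$.

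\emph{Both sides are nonnegative and vanish at $P=Q$.} Taking $\gamma\equiv 0$ in \eqref{eq:f-gamma-variational-rep} gives $\mathfrak{D}_f^{\Gamma^L}(Q\|P)\ge -\inf_\nu\{\nu+f^*(-\nu)\}= f(1)=0$. The same holds for every $\delta_i$, assuming (as for all admissible classes in Appendix~\ref{appendix:extra-definitions}) that the constants lie in $\Gamma$. Choosing $R=Q$ in the infimal convolution \eqref{def:inf con} gives $\mathfrak{D}_f^{\Gamma^L}(Q\|Q)\le W^{\Gamma^L}(Q,Q)+\mathfrak{D}_f(Q\|Q)=0$, and likewise $\delta_i(Q_{i\cup\mathrm{Pa}(i)},Q_{i\cup\mathrm{Pa}(i)})=0$. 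Since $Q\in\mathcal{P}^G\subseteq\mathcal{P}(\mathcal{X})$, $Q$ is admissible in both choices of $\mathcal{M}$. Hence both infima in \eqref{infbounds_2} equal $0$, with $\alpha=\frac1n$ or any $\alpha>0$. This already gives subadditivity and also equality, hence additivity.

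\emph{Local comparison on $\mathcal{P}_i^G$.} Since this shortcut does not use the hypothesis $I_{i,\mathrm{Pa}(i)}[\Gamma^L]\subset\Gamma^{L_i}_{i\cup\mathrm{Pa}(i)}$, I would also prove a comparison that explains the role of this hypothesis and of the bound on $L_i$. Fix $i$ and take $P\in\mathcal{P}_i^G$ built from a family marginal $P_{i\cup\mathrm{Pa}(i)}$. For $\gamma\in\Gamma^L$ and $\nu\in\mathbb{R}$:
\begin{itemize}
\item By the factorization \eqref{eq:factorization} of $Q$ and Fubini, $E_Q[\gamma]=E_{Q_{i\cup\mathrm{Pa}(i)}}\big[I_{i,\mathrm{Pa}(i)}[\gamma]\big]$.
\item Under $P$, the coordinates outside $i\cup\mathrm{Pa}(i)$ are drawn from the same $Q$-kernels that define $I_{i,\mathrm{Pa}(i)}$. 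Conditional Jensen for the convex $f^*$ therefore gives $E_P[f^*(\gamma-\nu)]\ge E_{P_{i\cup\mathrm{Pa}(i)}}\big[f^*(I_{i,\mathrm{Pa}(i)}[\gamma]-\nu)\big]$.
\end{itemize}
So the objective at $\gamma$ is bounded by the local objective at $I_{i,\mathrm{Pa}(i)}[\gamma]$, which lies in $\Gamma^{L_i}_{i\cup\mathrm{Pa}(i)}$ by hypothesis. Taking suprema yields
\[
\mathfrak{D}_f^{\Gamma^L}(Q\|P)\le \mathfrak{D}_f^{\Gamma^{L_i}_{i\cup\mathrm{Pa}(i)}}\big(Q_{i\cup\mathrm{Pa}(i)}\|P_{i\cup\mathrm{Pa}(i)}\big).
\]
Averaging over $i$ produces the factor $\frac1n$.

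\emph{Reverse direction.} For the reverse (additive) direction when $L_i\le L$, I would lift $\gamma_i\in\Gamma^{L_i}_{i\cup\mathrm{Pa}(i)}$ to $\mathcal{X}$ as a function of the family coordinates only. Since $L_i\le L$, this lift lies in $\Gamma^L$, and for it the global and marginal objectives coincide. This gives the local divergence as a lower bound for the global one.

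The main obstacle is the Jensen step. It needs the conditional law of the non-family coordinates under $P\in\mathcal{P}_i^G$ to be exactly the $Q$-kernel used in \eqref{eq:general_I_operator}, including $Q_{V^{(i-1)}\mid\mathrm{Pa}(i)}$. I would verify this carefully by disintegration with respect to $\bm x_{i\cup\mathrm{Pa}(i)}$. The infimal statement itself then follows from the first step regardless.
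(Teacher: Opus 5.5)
Your argument is correct, but its decisive first step takes a different route from the paper. You use that $Q\in\mathcal P^G\subseteq\mathcal P(\mathcal X)$ is itself admissible in both infima. At $P=Q$ every variational objective is $\le 0$. This does not even need the infimal-convolution formula: Fenchel--Young gives $f^*(s)\ge s-f(1)=s$, so $E_Q[\gamma]-\inf_\nu\{\nu+E_Q[f^*(\gamma-\nu)]\}\le 0$ for any class. Meanwhile $0\in\Gamma$ makes all the divergences nonnegative. Hence both sides vanish, which gives subadditivity and additivity at once, without condition (C) and without $L_i\le L$.

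The paper never evaluates at $P=Q$. For an arbitrary $P$ it completes each family marginal to $\tilde P^{(i)}\in\mathcal P_i^G$ using the $Q$-kernels. It then proves by Jensen the data-processing bound $\mathfrak{D}_f^{\Gamma^L}(Q\|\tilde P^{(i)})\le \mathfrak{D}_f^{I_{i,\mathrm{Pa}(i)}[\Gamma^L]}(Q_{i\cup\mathrm{Pa}(i)}\|P_{i\cup\mathrm{Pa}(i)})$ and enlarges the class via (C). Finally it bounds each left-hand side below by $c=\inf_{\mathcal M}\mathfrak{D}_f^{\Gamma^L}(Q\|\cdot)$ and sums. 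The reverse direction is the lifting/monotonicity argument. Your second and third steps are exactly this.

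What each route buys:
\begin{itemize}
\item Your shortcut is airtight. It makes explicit what the paper only remarks for $\mathcal M=\mathcal P(\mathcal X)$, namely that the infimal statement as formulated is trivial.
\item Your shortcut also avoids the inclusion $\mathcal P_i^G\subset\mathcal P^G$ that the paper's route needs for $\mathcal M=\mathcal P^G$. That inclusion holds for the chain but fails in general. Take $1\to 3\leftarrow 2$, $3\to 4$, with $X_1,X_2$ fair coins, $X_3=X_1\oplus X_2$, and $i=4$. Choosing $P_3=\delta_0$ in $Q_{12\mid 3}\otimes P_{34}$ forces $X_1=X_2$, which violates the independence $X_1\perp X_2$ required in $\mathcal P^G$.
\item The paper's route gives the subadditivity direction for $\mathcal M=\mathcal P(\mathcal X)$ without any nonnegativity assumption.
\item More importantly, the paper's route carries the real content behind the ``certificate'' reading: the per-$P$ inequality $\mathfrak{D}_f^{\Gamma^L}(Q\|\tilde P^{(i)})\le \mathfrak{D}_f^{\Gamma^{L_i}_{i\cup\mathrm{Pa}(i)}}(Q_{i\cup\mathrm{Pa}(i)}\|P_{i\cup\mathrm{Pa}(i)})$. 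This is where (C) is genuinely used, and it becomes invisible once both infima are $0$.
\end{itemize}

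One small relocation in your last paragraph. For $P\in\mathcal P_i^G$, the conditional law of the non-family coordinates is the $Q$-kernel by construction. What actually needs checking is $E_Q[\gamma]=E_{Q_{i\cup\mathrm{Pa}(i)}}[I_{i,\mathrm{Pa}(i)}[\gamma]]$. That identity uses a topological numbering and the conditional independence $X_i\perp X_{V^{(i-1)}}\mid X_{\mathrm{Pa}(i)}$ under $Q$.
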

\begin{proof}
  The proof of the theorem relies on a sequence of data-processing–type inequalities involving the operators $I_{i,\mathrm{Pa}(i)}$, together with the corresponding assumptions $I_{i,\mathrm{Pa}(i)}[\Gamma^L]\subset \Gamma^{L_i}_{i\,\cup \,\mathrm{Pa}(i)}$
  for all $i\in V$. To clarify the main ideas and highlight the role played by these operators, we first present the proof for the simple DAG \eqref{Structure1}, while the proof for a Bayesian network defined on an arbitrary DAG comes next.   Let  $\gamma\in \Gamma^L$, we begin with computing the expectation of $I_{1,2}[\gamma]$ with respect to $Q_{\{1,2\}}$. A direct calculation gives us 
  \begin{eqnarray}
  \label{expectation_IXY}
    E_{Q_{12}}\left[ I_{1,2}[\gamma] \right]
    = \int_{\mathcal{X}_1}\int_{\mathcal{X}_2}\int_{\mathcal{X}_3}\gamma(x_1,x_2,x_3)Q_{12}(dx_1,dx_2)Q_{3 \mid 2}(dx_3)
    = E_{Q}[\gamma].
  \end{eqnarray}
For $P_{12}\in\mathcal{P}(\mathcal{X}_{\{1,2\}})$ (we remind that $\mathcal{X}_{\{1,2\}}=\mathcal{X}_{1}\times \mathcal{X}_{2}$), for any function $f$ as defined in Section~\ref{sec:variational-representations} and for $\nu \in \mathbb{R}$, the convexity $f^*$, Jensen's inequality, and \eqref{eq:general_I_operator} give:\begin{eqnarray}\label{expectation_IXY_Jensen}
    E_{P_{12}}\left[ f^{*}(I_{1,2}[\gamma] - \nu) \right]
    \leq E_{P_{12}}\left[ I_{1,2}[f^*(\gamma - \nu)] \right]
    = E_{P_{12}\otimes Q_{3 \mid 2} }[ f^*(\gamma - \nu) ] \,.
  \end{eqnarray}
where $P_{12}\otimes Q_{3 \mid 2} \in \mathcal{P}_2^G$ is defined in \eqref{def:general_PrMeasQ}. 
  
  Subtracting \eqref{expectation_IXY_Jensen} from \eqref{expectation_IXY} and taking the infimum over $\nu \in \mathbb{R}$ on both sides yields, for all $\gamma \in \Gamma^L$,
  \begin{eqnarray*}
  E_{Q_{12}}\left[I_{1,2}[\gamma]\right] 
    - \inf_{\nu \in \mathbb{R}} 
      \left\{\nu +  E_{P_{12}}\left[f^{*}(I_{1,2}[\gamma] - \nu)\right]\right\} \\
  \geq E_{Q}[\gamma] 
    - \inf_{\nu \in \mathbb{R}} \left\{\nu + E_{P_{12}\otimes Q_{3 \mid 2}}[f^*(\gamma -\nu )] \right\}\,.
  \end{eqnarray*}
  Then taking the supremum over $\gamma \in \Gamma^L$ in both sides of the inequality above, we obtain
  \begin{equation}
  \label{eq:subadditivity-step-1}
  \mathfrak{D}_{f}^{I_{1,2}[\Gamma^L]
  }(Q_{12}\|P_{12})\geq \mathfrak{D}_{f}^{\Gamma^L}(Q\|P_{12}\otimes Q_{3 \mid 2}) \,,
  \end{equation}
  by definition in \eqref{eq:f-gamma-variational-rep}. We similarly obtain 
  \begin{equation}
  \label{eq:subadditivity-step-1_1}
  \mathfrak{D}_{f}^{I_{1}[\Gamma^L]
  }(Q_{1}\|P_{1})\geq \mathfrak{D}_{f}^{\Gamma^L}(Q\|P_{1} \otimes Q_{3 \mid2}\otimes Q_{2 \mid1}) \,,
  \end{equation}
  and 
  \begin{equation}
  \label{eq:subadditivity-step-1_2}
  \mathfrak{D}_{f}^{I_{2,3}[\Gamma^L]
  }(Q_{23}\|P_{23})\geq \mathfrak{D}_{f}^{\Gamma^L}(Q\|P_{23}\otimes Q_{1 \mid 2}) \,.
  \end{equation}
  We define 
  $$c = \inf_{P \in \mathcal M} \mathfrak{D}_f^{\Gamma^L} (Q \| P)\,;$$
  since $\mathcal{P}^G_{i} \subset \mathcal M=\mathcal{P}^G$ or $\mathcal{P}(\mathcal X)$ (and $\mathcal{P}^G\subset \mathcal{P}(\mathcal X)$), in particular it holds that
  \begin{equation}
  \label{eq:graph-constrained-inequality}
  \mathfrak{D}_f^{\Gamma^{L}} (Q \| P_{1} \otimes Q_{3 \mid2}\otimes Q_{2 \mid1} )  + \mathfrak{D}_f^{\Gamma^{L}} (Q \| P_{12} \otimes Q_{3 \mid 2}) +\mathfrak{D}_f^{\Gamma^{L}} (Q\| P_{23} \otimes Q_{1 \mid 2} )\geq 3 c \,.
  \end{equation}
  By using the assumption that $I_{i,\mathrm{Pa}(i)}[\Gamma^L]\subset \Gamma^{L_i}$ for all $i\in V$, we obtain
  \begin{equation}
  \label{eq:subadditivity-step-2}
  \mathfrak{D}_{f}^{\Gamma^{L_1}}(Q_{1}\|P_{1}) + \mathfrak{D}_{f}^{\Gamma^{L_2}}(Q_{12}\|P_{12}) + \mathfrak{D}_{f}^{\Gamma^{L_3}}(Q_{23}\|P_{23}) \geq 3c\,.    
  \end{equation}
Then we consider infimum over $P \in \mathcal{P}(\mathcal{X})$ on both sides yields \eqref{infbounds_2}. 

  For the other direction, it is enough to observe that when $I_{i,\mathrm{Pa}(i)}[\Gamma]\subset \Gamma^{L_i}$ for all $i\in V$
  \begin{equation}\label{ineq:other_direction}
  \mathfrak{D}_{f}^{I_{i,\mathrm{Pa}(i)}[\Gamma^L]}(Q_{i\,\cup\,\mathrm{Pa}(i)}\|P_{i\,\cup\,\mathrm{Pa}(i)}) \leq \mathfrak{D}_{f}^{\Gamma^{L_i}}(Q_{i\,\cup\,\mathrm{Pa}(i)}\|P_{i\,\cup\,\mathrm{Pa}(i)})\leq \mathfrak{D}_{f}^{\Gamma^L}(Q\|P)\,,
  \end{equation}
  where the last inequality comes from the fact that $L_i\leq L$ for all $i\in V$. Then the strict additivity follows. 

  For a general DAG $G$, it suffices to observe that for any $i\in V$  \begin{equation*}E_{Q_{i\,\cup\,\mathrm{Pa}(i)}}\left[ I_{i,\mathrm{Pa}(i)}[\gamma] \right]
    =\int_{\mathcal{X}}\gamma(\bm x)Q(d\bm x) = E_{Q}[\gamma]
    \end{equation*} 
   and $$E_{P_{i\,\cup\,\mathrm{Pa}(i)}}[f^*(I_{i,\mathrm{Pa}(i)}[\gamma]-\nu)]\leq E_{\tilde P}[f^*(\gamma-\nu)]$$
   where $\tilde{P}= Q_{V^{(i)} \mid i\,\cup\,\mathrm{Pa}(i)}\otimes\, P_{i\,\cup\,\mathrm{Pa}(i)}\otimes \prod_{j=i+1}^n Q_{j \mid\mathrm{Pa}(j)} \in\mathcal{P}_i^G$.   The other direction given in \eqref{ineq:other_direction} remains unchanged, and combining both inequalities completes the proof. When the infimum is be taken over $P\in\mathcal{P}^G$, the proof  follows the same steps as above.
\end{proof}

\begin{remark}[Computational tractability]
 In practice enforcing the graph structure $\mathcal{P}^G$ directly in the generator class is challenging. Therefore, the theorem allows to construct tractable surrogate objectives that approximate or control the global divergence without requiring a graphical factorization on the model $P$. For this reason, we retain the larger space $\mathcal{P}(\mathcal{X})$. The graph structure is used to define a collection of local objectives that collectively control the global divergence.
\end{remark}
 
Theorem~\ref{thm:main2general} gives a variational justification for replacing a monolithic discriminator acting on the full joint space with a graph-informed collection of family-level discriminators aligned with $G$. In practice, if the generator is restricted to $P_\theta \in \mathcal{P}(\mathcal{X})$, one may train by minimizing the graph-informed surrogate
\begin{equation}
  \label{eq:local-surrogate}
  \mathcal{L}_{\mathrm{loc}}(\theta)
  = \frac1n\sum_{i=1}^n \sup_{\gamma_i\in\Gamma^{L_i}_{i\cup \mathrm{Pa}(i)}}
   \!\!\!\!\! \big\{E_{Q}[\gamma_i(\boldsymbol{x}_{i\cup \mathrm{Pa}(i)})]
  -\inf_{\nu\in\mathbb R}\big(\nu+ E_{P_\theta}[f^*(\gamma_i(\boldsymbol{x}_{i\cup \mathrm{Pa}(i)})-\nu)]\big)\big\}.
\end{equation}
Thus the attained value of $\mathcal{L}_{\mathrm{loc}}(\theta)$ provides an explicit upper bound (certificate) on the best achievable global interpolative divergence within $\mathcal{P}(\mathcal{X})$. In the additive regime, minimizing $\mathcal{L}_{\mathrm{loc}}$ is equivalent to minimizing the corresponding global objective.

To illustrate the meaning of Theorem~\ref{thm:main2general} before turning to the corresponding IPM and proximal optimal transport results, we consider an application to a synthetic data set in which the graph structure is known. The purpose of this example is twofold: first, to assess whether the graph-informed surrogate tracks the best achievable global objective, and second, to test whether graph alignment improves recovery of the underlying conditional structure.

\begin{figure}[htbp]
\centering
  \includegraphics[width=0.3\textwidth]{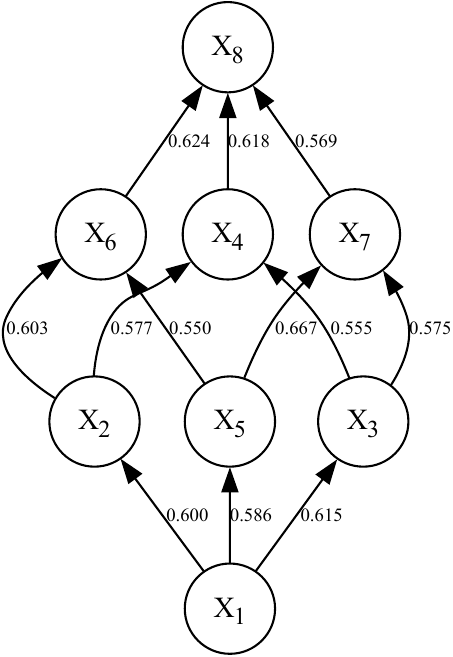}
\caption{The linear-Gaussian Bayesian network given by the Hasse diagram of the subset lattice for the $8$ subsets of a three-element set. The edge weights from node $j$ to $i$ are the parent coefficients $\phi_{ji} \sim N(\mu_\phi = 0.6, \sigma^2_\phi = 0.05^2)$ in \eqref{eq:linear-structured-Gaussian}, which are fixed once across all experiments.}
\label{fig:exm-structured-gaussian}
\end{figure}

We consider a linear-Gaussian Bayesian network on eight variables, shown in Figure~\ref{fig:exm-structured-gaussian}, defined by
\begin{equation}
  \label{eq:linear-structured-Gaussian}
  X_i = \sum_{j \in \mathrm{Pa}(i)} \phi_{ji} X_{j} + \epsilon_{i} \,, \qquad i \in \{1, \dots, 8\}\,,
\end{equation} 
where $\epsilon_i \sim N(0,\sigma^2)$ with $\sigma=0.5$ and the root node satisfies $X_1 \sim N(0,1)$. This model provides a direct testbed for Theorem~\ref{thm:main2general}, since the correct family structure is known through the parent coefficients $\{\phi_{ji}\}$ (full details of the experiment can be found in Section~\ref{sec:experiments}).

\begin{figure}[htb]
\centering
\begin{subfigure}[t]{0.49\textwidth}
  \centering
  \includegraphics[width=\textwidth]{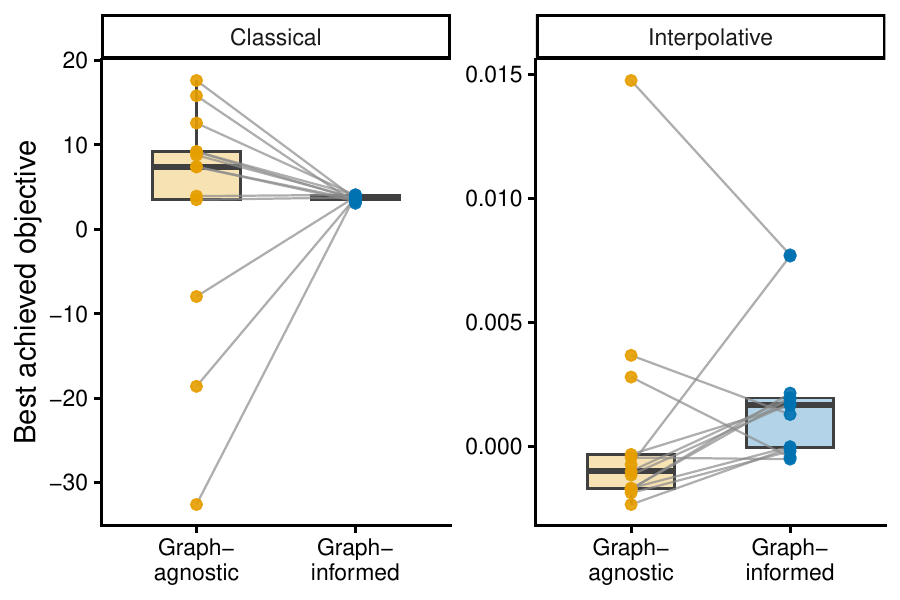}
  \caption{Best achieved variational objective}
  \label{fig:exm-structured-gauss:upper-bound}
\end{subfigure}\hfill
\begin{subfigure}[t]{0.49\textwidth}
  \centering
  \includegraphics[width=\textwidth]{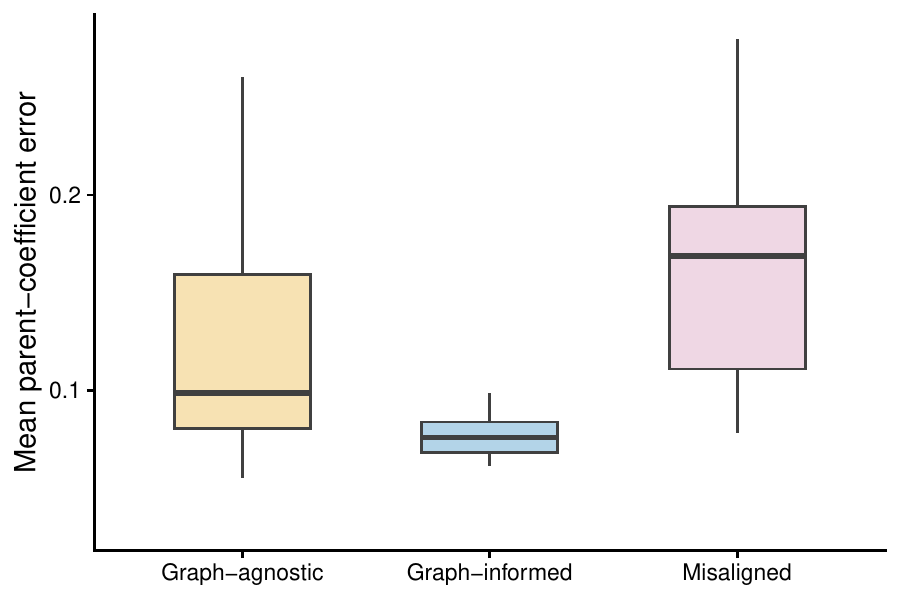}
  \caption{Parent-coefficient recovery} 
  \label{fig:exm-structured-gauss:local-recovery}
\end{subfigure}
\caption{Objective-level and structural diagnostics for the structured linear-Gaussian model~\eqref{eq:linear-structured-Gaussian} under graph-agnostic and graph-informed adversarial training; see Figure~\ref{fig:exm-structured-gaussian} for the underlying Hasse network. (a) Best achieved validation variational objective, i.e., the minimum variational loss attained over training epochs for the validation set. Values closer to zero indicate a closer fit, and gray line segments pair runs with the same random seed. In the interpolative setting, the graph-informed multi-discriminator surrogate attains values close to those of the graph-agnostic monolithic objective, consistent with the interpretation of Theorem~\ref{thm:main2general}. (b) Mean $L_2$ error in the recovered parent coefficients, averaged over non-root nodes, with lower values indicating better recovery of the underlying conditional structure. Here, the graph-informed construction aligned with the true families outperforms both the graph-agnostic baseline and the misaligned multi-discriminator construction. Boxplots summarize $13$ paired runs per model.}
\label{fig:exm-structured-gauss}
\end{figure}

Figure~\ref{fig:exm-structured-gauss:upper-bound} compares a graph-agnostic GAN, whose monolithic discriminator acts on the full joint distribution, with a GiGAN built from family-level discriminator classes on $\{i\cup \mathrm{Pa}(i)\}_{i=1}^8$. We report both the classical setting, based on unrestricted $f$-divergence objectives, and the interpolative setting, based on constrained $(f,\Gamma)$-divergence objectives when $f=f_{\mathrm{KL}}$ and $\Gamma= \mathrm{Lip}_b$. In the interpolative setting, the graph-informed surrogate closely tracks the best achieved global objective and exhibits substantially lower variability across seeds. By contrast, in the classical setting the graph-agnostic objective is markedly more variable, reflecting the greater difficulty of the full-joint discrimination problem.

To connect objective values with the structural fidelity of model predictions, Figure~\ref{fig:exm-structured-gauss:local-recovery} reports recovery of the parent coefficients $\{\phi_{ji}\}$ from generated samples for the experiments based on the interpolative divergences. We also include a misaligned multi-discriminator baseline with matched family sizes but randomly assigned parent sets. The graph-informed approach achieves lower estimation error and smaller variability than both the graph-agnostic and misaligned multi-discriminator baselines. This is consistent with the interpretation of Theorem~\ref{thm:main2general}, that when the local discriminator classes are matched to the true graph structure, they control precisely the family-level distributions needed to recover the conditional relationships.

Taken together, these diagnostics show that the graph-informed surrogate is not only theoretically justified but also practically meaningful: in the interpolative regime it provides a stable proxy for the global objective, while preserving the structural information encoded by the Bayesian network. We now return to the theory and state the analogous result for $\Gamma$-IPMs.

\medskip
\begin{theorem}\label{thm:main_IPM}
  Let $G$ be an arbitrary DAG and $L,L_1,\dots,L_n>0$. Suppose that $I_{i,\mathrm{Pa}(i)}[\Gamma^L]\subset \Gamma^{L_i}_{i\cup \mathrm{Pa}(i)}$
  for all $i\in V$. Then, IPMs satisfy $\frac{1}{n}$-linear generalized infimal subadditivity given in \eqref{infbounds_2} with $\mathcal{M} = \mathcal{P}(\mathcal{X})$ or $\mathcal{P}^G$ and
  $$\delta=W^{\Gamma^L} \;\;\textrm{and} \;\;\delta_i=W^{\Gamma^{L_i}_{i\,\cup\,\mathrm{Pa}(i)}} \quad \text{for all} \; i\in V\,.$$
  
  Moreover, if  $L_i\leq L$ for all $i\in V$, then IPMs satisfy $\frac{1}{n}$-linear generalized infimal additivity with the same $\delta$ and $\delta_i$ as above.
\end{theorem}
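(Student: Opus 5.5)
The plan is to follow the template of the proof of Theorem~\ref{thm:main2general}. The IPM case is simpler because the $f^*$-term is replaced by a linear functional, so no Jensen step is needed. For each $i\in V$, $\gamma\in\Gamma^L$ and $P_{i\cup\mathrm{Pa}(i)}\in\mathcal{P}(\mathcal{X}_{i\cup\mathrm{Pa}(i)})$, I would take $\tilde P^{(i)}\in\mathcal{P}_i^G$ to be the element of \eqref{def:general_PrMeasQ} built from $P_{i\cup\mathrm{Pa}(i)}$. Two identities then follow from Fubini and the factorization \eqref{eq:factorization} of $Q$. First, $E_{Q_{i\cup\mathrm{Pa}(i)}}[I_{i,\mathrm{Pa}(i)}[\gamma]]=E_Q[\gamma]$, exactly as in \eqref{expectation_IXY}. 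Second, $E_{P_{i\cup\mathrm{Pa}(i)}}[I_{i,\mathrm{Pa}(i)}[\gamma]]=E_{\tilde P^{(i)}}[\gamma]$, since integrating $\gamma$ against the $Q$-conditionals in \eqref{eq:general_I_operator} and then against $P_{i\cup\mathrm{Pa}(i)}$ is integration against $\tilde P^{(i)}$. Here we have equality, whereas in the $(f,\Gamma)$ case only the inequality \eqref{expectation_IXY_Jensen} was available.

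Subtracting the two identities and taking $\sup_{\gamma\in\Gamma^L}$ gives
\[
W^{\Gamma^L}(Q,\tilde P^{(i)}) = W^{I_{i,\mathrm{Pa}(i)}[\Gamma^L]}(Q_{i\cup\mathrm{Pa}(i)},P_{i\cup\mathrm{Pa}(i)}) \le W^{\Gamma^{L_i}_{i\cup\mathrm{Pa}(i)}}(Q_{i\cup\mathrm{Pa}(i)},P_{i\cup\mathrm{Pa}(i)}).
\]
The inequality uses the hypothesis $I_{i,\mathrm{Pa}(i)}[\Gamma^L]\subset\Gamma^{L_i}_{i\cup\mathrm{Pa}(i)}$ and the fact that the IPM is monotone in the test class. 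Set $c=\inf_{P\in\mathcal M}W^{\Gamma^L}(Q,P)$. Since $\tilde P^{(i)}\in\mathcal P_i^G\subset\mathcal P^G\subset\mathcal P(\mathcal X)$, each left-hand side is at least $c$. Summing over $i$ and dividing by $n$ gives $c\le\frac1n\sum_i W^{\Gamma^{L_i}_{i\cup\mathrm{Pa}(i)}}(Q_{i\cup\mathrm{Pa}(i)},P_{i\cup\mathrm{Pa}(i)})$ for every $P\in\mathcal M$. Here $P_{i\cup\mathrm{Pa}(i)}$ is the marginal of $P$, and we use that $\tilde P^{(i)}$ is well defined for any such marginal. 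Taking the infimum over $P\in\mathcal M$ yields \eqref{infbounds_2} with $\alpha=1/n$.

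For additivity when $L_i\le L$, I would prove the reverse bound pointwise in $P$. Take any $\gamma_i\in\Gamma^{L_i}_{i\cup\mathrm{Pa}(i)}$ and view it as a function on $\mathcal X$ depending only on $\bm x_{i\cup\mathrm{Pa}(i)}$. Then $E_{Q_{i\cup\mathrm{Pa}(i)}}[\gamma_i]-E_{P_{i\cup\mathrm{Pa}(i)}}[\gamma_i]=E_Q[\gamma_i]-E_P[\gamma_i]$. Since $L_i\le L$, the lifted $\gamma_i$ lies in $\Gamma^L$, so each local IPM is at most $W^{\Gamma^L}(Q,P)$. Averaging over $i$ and taking infima gives the reverse inequality, and hence equality.

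The main obstacle is this lifting step: the inclusion $\Gamma^{L_i}_{i\cup\mathrm{Pa}(i)}\hookrightarrow\Gamma^{L}$. It requires $\Gamma$ to be closed under extension by functions constant in the remaining coordinates and nested under scaling, i.e.\ $\Gamma^{L_i}\subset\Gamma^L$ for $L_i\le L$. This holds if $\Gamma$ is convex and contains $0$, as for $\mathrm{Lip}^1_b$, bounded-norm balls, and so on. I would state this as the implicit standing convention behind \eqref{ineq:other_direction}, as in the $(f,\Gamma)$ proof.
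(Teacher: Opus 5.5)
Your route is the paper's. The paper omits this proof as a repetition of Theorem~\ref{thm:main2general}, and you transfer that argument correctly. The Jensen step \eqref{expectation_IXY_Jensen} becomes the exact identity $E_{P_{i\cup\mathrm{Pa}(i)}}[I_{i,\mathrm{Pa}(i)}[\gamma]]=E_{\tilde P^{(i)}}[\gamma]$. You then use condition $\mathbf{(C)}$ with monotonicity in the test class, and the lifting argument for additivity, where you rightly make explicit what \eqref{ineq:other_direction} uses silently. For $\mathcal M=\mathcal P(\mathcal X)$ this is complete.

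For $\mathcal M=\mathcal P^G$, however, the step ``$\tilde P^{(i)}\in\mathcal P_i^G\subset\mathcal P^G$'' is false for a general DAG. The proof of Theorem~\ref{thm:main2general} asserts the same inclusion. It fails even when $\tilde P^{(i)}$ is built from the marginal of some $P\in\mathcal P^G$. Take the diamond $\mathrm{Pa}(2)=\mathrm{Pa}(3)=\{1\}$, $\mathrm{Pa}(4)=\{2,3\}$, so that $V^{(3)}=\{1\}$ and $\tilde P^{(4)}(d\bm x)=Q(dx_1\mid x_2,x_3)\,P_{234}(dx_2,dx_3,dx_4)$. Let $Q$ make $X_1,X_2,X_3$ i.i.d.\ fair bits, and let $P\in\mathcal P^G$ set $X_2=X_3=X_1$. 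Under $\tilde P^{(4)}$ the bit $X_1$ is then independent of $(X_2,X_3)$ while $X_2=X_3$. So $X_3$ is not conditionally independent of its non-descendant $X_2$ given its parent $X_1$, and $\tilde P^{(4)}\notin\mathcal P^G$. Hence $c=\inf_{P\in\mathcal P^G}W^{\Gamma^L}(Q,P)\le W^{\Gamma^L}(Q,\tilde P^{(i)})$ does not follow from membership. For the chain \eqref{Structure1} the inclusion does hold, which is why the worked example hides the problem.

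The repair is short. Since $Q\in\mathcal P^G$, you have $c\le W^{\Gamma^L}(Q,Q)=0$. Also $W^{\Gamma^L}(Q,\tilde P^{(i)})\ge 0$ as soon as $\Gamma$ contains a constant function, e.g.\ $0$, as for $\mathrm{Lip}^1_b(\mathcal X)$ or $\mathcal M_b(\mathcal X)$. So $c\le W^{\Gamma^L}(Q,\tilde P^{(i)})$ still holds, and the rest of your argument is unchanged.

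The same remark shows that, under this proviso, both infima in \eqref{infbounds_2} vanish for $\mathcal M\in\{\mathcal P(\mathcal X),\mathcal P^G\}$ (take $P=Q$). The paper itself concedes this for $\mathcal P(\mathcal X)$. The substantive content of your proof is the pointwise bound $W^{\Gamma^L}(Q,\tilde P^{(i)})\le W^{\Gamma^{L_i}_{i\cup\mathrm{Pa}(i)}}(Q_{i\cup\mathrm{Pa}(i)},P_{i\cup\mathrm{Pa}(i)})$, which you establish correctly.
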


\begin{proof}
  The proof follows the same arguments as that of Theorem~\ref{thm:main2general} and is therefore omitted.
\end{proof} 

The same principle extends beyond variational divergences and $\Gamma$-IPMs to transport-based discrepancies. We next show that proximal optimal transport divergences also admit an infimal subadditivity structure over Bayesian-network families.

\begin{theorem}\label{thm:pOTd} 
Let $c$ be any cost function on $\mathcal{X}$. Then, proximal optimal transport divergences satisfy $\frac{1}{n}$-linear generalized infimal subadditivity given in \eqref{infbounds_2} with $\mathcal{M} = \mathcal{P}(\mathcal{X})$ or $\mathcal{P}^G$ and
\begin{equation}
\delta=\mathfrak{D}_{\rm{KL},\varepsilon}^{c} \;\;\textrm{and }\;\;\delta_i=\mathfrak{D}_{\rm{KL},\varepsilon}^{c_i}, \quad \text{for all} \; i\in V\,,
\end{equation}
where  
\begin{eqnarray*}c_i(\bm x_{i\,\cup\,\mathrm{Pa}(i)},\bm x'_{i\,\cup\,\mathrm{Pa}(i)})&=&\int_{\mathcal X_{A}\times \mathcal X_A} c(\bm x,\bm x')\prod_{\bm y=\bm x,\bm x'}\Big(Q(d\bm y_{V^{(i-1)}}|\bm y_{\mathrm{Pa}(i)})\prod_{j=i+1}^n Q(dy_j|\bm y_{\mathrm{Pa}(j)})\Big)\,,\end{eqnarray*} 
where $A=[i\,\cup\,\mathrm{Pa}(i)]^c$. 
\end{theorem}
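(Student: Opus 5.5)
I will follow the template of Theorem~\ref{thm:main2general}, now working with the dual representation \eqref{eq:dual for intro} instead of \eqref{eq:f-gamma-variational-rep}. Fix $i\in V$ and a local marginal $P_{i\cup\mathrm{Pa}(i)}$. Let $\tilde P\in\mathcal P_i^G$ be the lift of $P_{i\cup\mathrm{Pa}(i)}$ obtained by attaching the $Q$-conditionals of the remaining variables. For brevity write $K_i(d\bm x_A\mid \bm x_{i\cup\mathrm{Pa}(i)})=Q(d\bm x_{V^{(i-1)}}\mid \bm x_{\mathrm{Pa}(i)})\prod_{j>i}Q(dx_j\mid\bm x_{\mathrm{Pa}(j)})$ for this kernel; the same kernel also lifts $Q_{i\cup\mathrm{Pa}(i)}$ back to $Q$. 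The aim is the local inequality
\begin{equation*}
\mathfrak{D}^{c}_{\mathrm{KL},\varepsilon}(Q\|\tilde P)\le \mathfrak{D}^{c_i}_{\mathrm{KL},\varepsilon}(Q_{i\cup\mathrm{Pa}(i)}\|P_{i\cup\mathrm{Pa}(i)}).
\end{equation*}
Once this holds, the rest is exactly the averaging step \eqref{eq:graph-constrained-inequality}–\eqref{eq:subadditivity-step-2}. Since $\tilde P\in\mathcal P_i^G\subset\mathcal M$, each local term is at least $c=\inf_{P\in\mathcal M}\mathfrak D^c_{\mathrm{KL},\varepsilon}(Q\|P)$. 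Summing over $i$, dividing by $n$, and taking the infimum over $P$ then gives \eqref{infbounds_2} with $\alpha=1/n$.

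For the local inequality I will use the primal form \eqref{def:POTD}, since the transport cost $c_i$ is defined as a product-kernel average of $c$. Take any $R_i\in\mathcal P(\mathcal X_{i\cup\mathrm{Pa}(i)})$ and any coupling $\pi_i\in\Pi(Q_{i\cup\mathrm{Pa}(i)},R_i)$.
\begin{itemize}
\item Lift $R_i$ to $R=R_i\otimes K_i$.
\item Lift $\pi_i$ to a coupling $\pi$ of $Q$ and $R$ by independently attaching $K_i(\cdot\mid\bm x_{i\cup\mathrm{Pa}(i)})$ and $K_i(\cdot\mid\bm x'_{i\cup\mathrm{Pa}(i)})$ to the two coordinates.
\end{itemize}
The first marginal of $\pi$ is $Q_{i\cup\mathrm{Pa}(i)}\otimes K_i=Q$, because $Q$ factorizes by $G$; this is the same identity as $E_{Q_{i\cup\mathrm{Pa}(i)}}[I_{i,\mathrm{Pa}(i)}[\gamma]]=E_Q[\gamma]$. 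The second marginal is $R$. By Fubini, $\int c\,d\pi=\int c_i\,d\pi_i$, which is precisely the definition of $c_i$. Hence $T_c(Q,R)\le \int c_i\,d\pi_i$, and optimizing over $\pi_i$ gives $T_c(Q,R)\le T_{c_i}(Q_{i\cup\mathrm{Pa}(i)},R_i)$.

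For the entropic part I will use the chain rule for KL. Since $R$ and $\tilde P$ share the kernel $K_i$,
\begin{equation*}
\mathfrak D_{\mathrm{KL}}(R\|\tilde P)=\mathfrak D_{\mathrm{KL}}(R_i\|P_{i\cup\mathrm{Pa}(i)}).
\end{equation*}
Adding the two bounds and taking the infimum over $R_i$ gives the local inequality, because $R$ is an admissible competitor in \eqref{def:POTD} for the pair $(Q,\tilde P)$.

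The main obstacle is measure-theoretic bookkeeping rather than the inequalities themselves. I must check two things. First, that $c_i$ is a well-defined measurable cost bounded below, so that $T_{c_i}$ and the infimal convolution make sense; for a general $c$ I may need $c\ge0$ or $c$ bounded below and lower semicontinuous, as in the dual setting. Second, that $K_i$ is a genuine regular conditional kernel of $Q$ given $\bm x_{i\cup\mathrm{Pa}(i)}$. Here I must handle the indexing of $V^{(i-1)}$ carefully, assuming a topological ordering as in the proof of Theorem~\ref{thm:main2general}, so that $Q_{i\cup\mathrm{Pa}(i)}\otimes K_i=Q$ holds on an arbitrary DAG. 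If the primal route runs into trouble, the fallback is the dual form \eqref{eq:dual for intro}. In that approach I would apply Jensen to $\psi\mapsto-\varepsilon\log E[e^{-\psi/\varepsilon}]$, using the operator $I_{i,\mathrm{Pa}(i)}$. The difficulty there is showing that the pair $(I\phi,I\psi)$ stays admissible, i.e.\ $I\phi\oplus I\psi\le c_i$; this follows by integrating $\phi\oplus\psi\le c$ against the product kernel.
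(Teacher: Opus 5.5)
For $\mathcal{M}=\mathcal{P}(\mathcal{X})$ your argument is correct, and it takes a genuinely different route from the paper's.

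\textbf{The paper's route.} The paper works entirely on the dual side. Starting from the dual formula for the pair $(Q,\tilde P)$, it uses $E_Q[\phi]=E_{Q_{i\cup\mathrm{Pa}(i)}}[I_{i,\mathrm{Pa}(i)}[\phi]]$ together with Jensen's inequality for the exponential, $E_{\tilde P}[e^{-\psi/\varepsilon}]\ge E_{P_{i\cup\mathrm{Pa}(i)}}[e^{-I_{i,\mathrm{Pa}(i)}[\psi]/\varepsilon}]$. It then observes that integrating $\phi\oplus\psi\le c$ against the product kernel makes $(I_{i,\mathrm{Pa}(i)}[\phi],I_{i,\mathrm{Pa}(i)}[\psi])$ feasible for the local dual problem with cost $c_i$. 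This is exactly your fallback.

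\textbf{Your route.} You stay on the primal side of the infimal convolution. You lift an arbitrary local competitor $R_i$ and coupling $\pi_i$ by gluing the $Q$-kernel $K_i$ independently onto both coordinates. This gives $\int c\,d\pi=\int c_i\,d\pi_i$ and $\mathfrak{D}_{\mathrm{KL}}(R_i\otimes K_i\|P_{i\cup\mathrm{Pa}(i)}\otimes K_i)=\mathfrak{D}_{\mathrm{KL}}(R_i\|P_{i\cup\mathrm{Pa}(i)})$.

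\textbf{What each buys.}
\begin{itemize}
\item Your route needs no duality at all, only that $c$ be measurable and bounded below so that $c_i$ is defined via Tonelli. That is closer to the ``any cost function'' of the statement.
\item The paper's chain needs strong duality for $\mathfrak{D}^{c}_{\mathrm{KL},\varepsilon}(Q\|\tilde P)$, so $c$ must be lower semicontinuous and bounded below on a Polish space. It then needs weak duality for $c_i$. Unless the kernel is Feller, the local supremum must be taken over bounded measurable potentials, since $I_{i,\mathrm{Pa}(i)}[\phi]$ need not be continuous.
\item Your route also explains where the product form of $c_i$ comes from. It works verbatim for any $f$-divergence in place of KL, because the lifted likelihood ratio equals the local one.
\item The paper's route is phrased at the level of potentials. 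It mirrors the proof of Theorem~\ref{thm:main2general} and the localized-discriminator interpretation: projected discriminators are admissible local discriminators.
\end{itemize}

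\textbf{A caveat shared with the paper.} The step $\tilde P\in\mathcal{P}_i^G\subset\mathcal{M}$ is fine for $\mathcal{M}=\mathcal{P}(\mathcal{X})$. For $\mathcal{M}=\mathcal{P}^G$, however, the lift of a marginal of $P\in\mathcal{P}^G$ need not lie in $\mathcal{P}^G$.
\begin{itemize}
\item Take the diamond $1\to2$, $1\to3$, $2\to4$, $3\to4$ and $i=4$. The lift is $Q_{1\mid 23}\otimes P_{234}$.
\item Let $X_1,X_2,X_3$ be i.i.d.\ fair bits under $Q$, and let $X_1=X_2=X_3$ under $P$.
\item Under the lift, $X_1$ is independent of $X_2=X_3$, so $X_2\not\perp X_3\mid X_1$ and the lift is not in $\mathcal{P}^G$.
\end{itemize}
Hence for $\mathcal{M}=\mathcal{P}^G$ your averaging step only bounds $\inf_{P\in\mathcal{P}(\mathcal{X})}\mathfrak{D}^{c}_{\mathrm{KL},\varepsilon}(Q\|P)$, not the infimum over $\mathcal{P}^G$. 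That case needs an extra argument. One option is to restrict to graphs for which the lifts stay in $\mathcal{P}^G$, such as the chain, or any DAG in which every node has at most one parent. Another is the trivial observation that when $c\ge 0$ and $c(\bm x,\bm x)=0$, the left-hand side vanishes (take $P=Q$) while the right-hand side is nonnegative.
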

\begin{proof}
Following the same strategy as in Theorem \ref{thm:main2general}, for the simple DAG \eqref{Structure1} we obtain
\begin{equation}
\begin{split}
&\sup_{\phi\oplus\psi\leq c}
\left\{ E_{Q}[\phi]  -\varepsilon \log E_{P_{\{1,2\}}\otimes Q_{3 \mid 2}}[e^{-\psi/\varepsilon}]\right\}\nonumber\\
&\qquad \leq \sup_{\phi\oplus\psi\leq c}
\left\{ E_{Q_{12}}[I_{1,2}[\phi]]  -\varepsilon \log E_{P_{12}}[e^{-I_{1,2}[\psi]/\varepsilon}]\right\}\nonumber\\
&\qquad \leq \sup_{\tilde\phi\oplus\tilde\psi\leq c_2}
\left\{ E_{Q_{12}}[\tilde\phi]  -\varepsilon \log E_{P_{12}}[e^{-\tilde\psi/\varepsilon}]\right\}\nonumber\\
\end{split}
\end{equation}
where the second inequality comes from the fact that \begin{equation*}
    \begin{split}
 I_{1,2}[\phi](x_1,x_2)&+I_{1,2}[\psi] (x_1',x_2')=\int_{\mathcal{X}_3 } \phi(x_1,x_2,x_3) Q_{3 \mid 2}(dx_3)+\int_{\mathcal{X}_3 } \psi(x_1',x_2',x_3') Q_{3 \mid 2}(dx_3')\nonumber\\
 &=\int_{\mathcal{X}_3 }\int_{\mathcal{X}_3 } (\phi(x_1,x_2,x_3)+ \psi(x_1',x_2',x_3') )Q(dx_3|x_2)Q(dx'_3|x'_2)\nonumber\\
 &\leq\int_{\mathcal{X}_3 }\int_{\mathcal{X}_3 } c(\bm x,\bm x')Q(dx_3|x_2)Q(dx'_3|x'_2)
    \end{split}
\end{equation*} 
Then, the result follows as in proof of Theorem \ref{thm:main2general}. 
\end{proof}

From the perspective of adversarial learning, the preceding theorems justify replacing a graph-agnostic GAN with a monolithic discriminator by a GiGAN with family-level discriminators. This is not only a computational simplification; since GAN training amounts to a learned two-sample testing problem \citep{goodfellow2014generative,nowozin2016f}, localization decomposes a high-dimensional discrimination task into a collection of lower-dimensional subproblems that are easier to optimize and diagnose. Theorems~\ref{thm:main2general}, \ref{thm:main_IPM}, and \ref{thm:pOTd} show that these local games define principled surrogates for the corresponding graph-constrained global objectives.

In Section \ref{subsec_examples_adm_sets}, we provide examples 
and discuss conditions under which the condition in Theorem~\ref{thm:main2general} which we henceforth denote by
\[
\mathbf{(C)}\qquad I_{i,\mathrm{Pa}(i)}[\Gamma^L]\subset \Gamma^{L_i}_{i\cup \mathrm{Pa}(i)}
\quad \text{for all } i\in V,
\]
holds, as well as its 
interpretation from the perspective of generative modeling.
When the condition $\mathbf{(C)}$
is violated, we can still obtain meaningful lower and upper bounds for
$(f,\Gamma)$-divergences and IPMs.
Specifically, for any $P,Q\in\mathcal{P}^G$, we have the
following lower bound:
\begin{equation}\label{infbounds_1}
\frac{1}{n}\sum_{i=1}^n
\mathfrak{D}_{f}^{\Gamma^L_{i\cup \mathrm{Pa}(i)}}
\!\left(Q_{i\cup \mathrm{Pa}(i)}\middle\|P_{i\cup \mathrm{Pa}(i)}\right)
\;\leq\;
\mathfrak{D}_{f}^{\Gamma^L}(Q\|P)
\end{equation}
and for any $Q\in\mathcal{P}^G$, we obtain
\begin{equation}\label{infbounds_2_upper}
\inf_{P\in\mathcal P(\mathcal{X})} \mathfrak{D}_{f}^{\Gamma^L}(Q\|P)
\;\leq\;
\inf_{P\in\mathcal P(\mathcal{X})}\frac{1}{n}
\sum_{i=1}^n
\mathfrak{D}_{f}^{I_{i,\mathrm{Pa}(i)}[\Gamma^L]}
\!\left(Q_{i\cup \mathrm{Pa}(i)}\|P_{i\cup \mathrm{Pa}(i)}\right).
\end{equation}
The same lower and upper bounds  hold for IPMs. This follows directly from the
monotonicity property
$\mathfrak{D}_{f}^{\Gamma_{i\cup \mathrm{Pa}(i)}}(Q_{i\cup \mathrm{Pa}(i)}\|P_{i\cup \mathrm{Pa}(i)})
\leq \mathfrak{D}_{f}^{\Gamma}(Q\|P)$ for all $i\in V$. The proof of the upper bound is already contained in the proof of
Theorem~\ref{thm:main2general}.  
\medskip

\begin{remark}[Graph-informed $\Gamma$]\label{rem:structured_gamma}A special case where the discriminator class is explicitly aligned with the graph
structure, as each component depends only on a node and its parent set is given by $$\Gamma=\oplus_{i=1}^n\Gamma_{i\,\cup\,\mathrm{Pa}(i)} =\left\{\sum_{i=1}^n\gamma_i(x_i,\bm x_{\mathrm{Pa}(i)}): \gamma_i\in\Gamma_{i\,\cup\,\mathrm{Pa}(i)} \textrm{ for all } i\in V\right\}\,.$$ Each element of the  discriminator space decomposes additively over local neighborhoods,
which makes the subadditivity property for IPMs immediate:
$$W^{\oplus_{i=1}^n\Gamma_{i\,\cup\,\mathrm{Pa}(i)}}
\left(Q_{i\,\cup\, \mathrm{Pa}(i)}\middle\|P_{i\,\cup\, \mathrm{Pa}(i)}\right)\leq \sum_{i=1}^n W^{ \Gamma_{i\,\cup\,\mathrm{Pa}(i)}}
\left(Q_{i\,\cup\, \mathrm{Pa}(i)}\middle\|P_{i\,\cup\, \mathrm{Pa}(i)}\right).$$
\end{remark}

\begin{remark}[Tighter bound]
\label{rem:tighter_bound}

Although we have shown that the subadditivity-based upper bound for a Bayesian network
depends explicitly on the underlying DAG $G$ with $n$ nodes—through the induced
collection of local neighborhoods $\{ i \cup \mathrm{Pa}(i) \}_{i=1}^n$, which in turn
determines the construction of the local discriminators—the choice of these
neighborhoods is not unique. In particular, the local neighborhoods may be enlarged,
leading to an alternative subadditivity-based upper bound that operates at a coarser
level of granularity. Such bounds may typically be tighter, as they incorporate richer local dependencies. This flexibility arises because the proof of Theorem~\ref{thm:main2general} hinges on
establishing a data-processing–type inequality. Once this inequality is available, it
allows the divergence to be further upper bounded by divergences between distributions
defined on the remaining neighborhoods induced by the graph $G$. Consequently, the
selection of local neighborhoods can be interpreted as an artificial hyperparameter
that controls the trade-off between model complexity and tightness of the bound. In
Child Bayesian network in Section~\ref{sec:experiments:real-bn}, we illustrate this idea by replacing certain local neighborhoods
with enlarged families that include additional ancestors. 
\end{remark}

The preceding results are abstract until condition {\bf (C)} can be verified for concrete discriminator classes. We therefore next discuss representative choices of $\Gamma$ for which the projection operators $I_{i,\mathrm{Pa}(i)}$ preserve the relevant function-space structure, thereby making the theory applicable in practice.

\subsection{Examples of admissible sets satisfying condition {\bf (C)}}
\label{subsec_examples_adm_sets}

In this section, we provide examples of admissible sets that satisfy condition {\bf (C)}:  bounded measurable functions and Lipschitz-bounded functions.  These examples respectively recover the unrestricted $f$-divergence setting, and Wasserstein-type and Lipschitz-constrained discriminators. 

\paragraph{The space of bounded measurable functions $\mathcal{M}_{b}(\mathcal{X})$.}
Let $\Gamma=\mathcal{M}_{b}(\mathcal{X})$. Then, for $i\in V$,  $I_{i,\mathrm{Pa}(i)}[\Gamma]$ is subset of the restricted version of $\mathcal{M}_{b}(\mathcal{X})$ to $\mathcal X_{i\,\cup\,\mathrm{Pa}(i)}$, i.e.,  $I_{i,\mathrm{Pa}(i)}[\mathcal{M}_{b}(\mathcal{X})]\subset \mathcal{M}_{b}(\mathcal{X}_{i\,\cup\,\mathrm{Pa}(i)})$. In Theorem 11 of \cite{ding2021gans}, by assuming that $P,Q$ are two-sided $\epsilon$-close (see Definition 2 in \cite{ding2021gans}), they prove that any $f$-divergence whose $f(\cdot)$ is continuous on $(0,\infty)$ and twice differentiable at $1$ with $f''(1) > 0$ satisfies $\alpha$-linear subadditivity (see Definition \ref{def:inequality_sub}).  By applying Theorem \ref{thm:main2general}, we prove that $f$-divergences satisfy $\frac{1}{n}$-linear infimal additivity. 

\begin{theorem}\label{thm:additivity}
  Let $G$ be an arbitrary DAG. Then, $f$-divergences satisfy $\frac{1}{n}$-linear generalized infimal additivity with $\mathcal M=\mathcal P(\mathcal X)$ or $\mathcal P^G$
\begin{equation}\label{infequality_measurable functions}
\inf_{P\in\mathcal{P}(\mathcal{X})} \mathfrak{D}_{f}(Q\|P)
=
\inf_{P\in\mathcal{P}(\mathcal{X})}\frac{1}{n}
\sum_{i=1}^n
\mathfrak{D}_{f}\left(Q_{i\cup \mathrm{Pa}(i)}\middle\|P_{i\cup \mathrm{Pa}(i)}\right)
\end{equation}
\end{theorem}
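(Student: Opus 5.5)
The plan is to specialize Theorem~\ref{thm:main2general} to $\Gamma=\mathcal{M}_b(\mathcal{X})$ and then obtain the reverse inequality from the lower bound \eqref{infbounds_1}.

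\textbf{Step 1: reduce to an $(f,\Gamma)$-divergence.} Take $\Gamma=\mathcal{M}_b(\mathcal{X})$, for which $\mathfrak{D}_f^{\Gamma}=\mathfrak{D}_f$ by \eqref{eq:VarfDiv}. This class is a cone, so $\Gamma^L=\Gamma$ for every $L>0$. We may therefore set $L=L_1=\cdots=L_n=1$, which satisfies the requirement $L_i\le L$ of the additivity part. By \eqref{eq:VarfDiv} applied on each family space $\mathcal{X}_{i\cup\mathrm{Pa}(i)}$, the local divergences $\delta_i=\mathfrak{D}_f^{\mathcal{M}_b(\mathcal{X}_{i\cup\mathrm{Pa}(i)})}$ are the ordinary $f$-divergences between the family marginals.

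\textbf{Step 2: verify condition $\mathbf{(C)}$.} For $\gamma\in\mathcal{M}_b(\mathcal{X})$, the function $I_{i,\mathrm{Pa}(i)}[\gamma]$ in \eqref{eq:general_I_operator} integrates $\gamma$ against the probability kernels $Q_{V^{(i-1)}\mid\mathrm{Pa}(i)}$ and $Q_{j\mid\mathrm{Pa}(j)}$. Its measurability in $(x_i,\bm x_{\mathrm{Pa}(i)})$ follows from the standard measurability of integrals of bounded measurable functions against kernels (Fubini--Tonelli for kernels). Its sup-norm is at most $\|\gamma\|_\infty$. Hence $I_{i,\mathrm{Pa}(i)}[\Gamma]\subset\mathcal{M}_b(\mathcal{X}_{i\cup\mathrm{Pa}(i)})$, which is exactly $\mathbf{(C)}$. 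Theorem~\ref{thm:main2general} then gives the inequality ``$\le$'' in \eqref{infequality_measurable functions}, for both $\mathcal{M}=\mathcal{P}(\mathcal{X})$ and $\mathcal{M}=\mathcal{P}^G$.

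\textbf{Step 3: prove the reverse inequality.} Here I would not lean on the additivity clause of Theorem~\ref{thm:main2general}. Instead I would use the data-processing inequality for $f$-divergences under marginalization. For each $i$, restricting the supremum in \eqref{eq:VarfDiv} to test functions depending only on $\bm x_{i\cup\mathrm{Pa}(i)}$ gives
\[
\mathfrak{D}_f(Q_{i\cup\mathrm{Pa}(i)}\|P_{i\cup\mathrm{Pa}(i)})\le\mathfrak{D}_f(Q\|P)
\]
for every $P$; this is \eqref{infbounds_1}. Averaging over $i$ and taking the infimum over $P\in\mathcal{M}$ yields ``$\ge$''.

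\textbf{Simpler alternative.} Both infima equal $0$, since $P=Q$ is admissible in $\mathcal{P}(\mathcal{X})$ and in $\mathcal{P}^G$, and all terms are nonnegative. I would record this as a sanity check but present the variational argument, since it is what carries over to restricted model classes.

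\textbf{Main obstacle.} The only delicate point is Step~2, namely joint measurability of $I_{i,\mathrm{Pa}(i)}[\gamma]$. This requires the CPDs and $Q_{V^{(i-1)}\mid\mathrm{Pa}(i)}$ to be regular conditional distributions (Markov kernels). I would assume this explicitly, for instance by taking the $\mathcal{X}_i$ to be standard Borel, so that disintegrations exist.
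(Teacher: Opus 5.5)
Your proposal is correct and follows the paper's route. The paper proves the statement by specializing Theorem~\ref{thm:main2general} to $\Gamma=\mathcal{M}_b(\mathcal{X})$, where condition $\mathbf{(C)}$ holds and the localized $(f,\Gamma)$-divergences reduce to ordinary $f$-divergences on the family marginals; your direct monotonicity argument for the reverse inequality is the paper's additivity-clause step written out explicitly, and your ``sanity check'' that both infima vanish at $P=Q$ is by itself a complete proof for either choice of $\mathcal{M}$.
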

\begin{proof}
It is enough to observe that $$\mathfrak{D}_{f}^{\mathcal{M}_{b}(\mathcal{X}_{i\,\cup\,\mathrm{Pa}(i)})}
\!\left(Q_{i\cup \mathrm{Pa}(i)}\middle\|P_{i\cup \mathrm{Pa}(i)}\right)=\mathfrak{D}_{f}
\left(Q_{i\cup \mathrm{Pa}(i)}\middle\|P_{i\cup \mathrm{Pa}(i)}\right)\,.$$
\end{proof}

\paragraph{The space of Lipschitz bounded functions ${\rm{Lip}}^L_b(\mathcal{X})$, $L>0$.} Suppose that $\mathcal X$ is a metric space. The standard metric choice is   
$$d(\bm{x},\bm{x}')=\sum_{i=1}^n d_{\mathcal{X}_i}(x_i,x'_i),$$ for $\bm{x}=(x_1,\dots,x_n)$ and $\bm{x}'=(x'_1,\dots,x'_n)$. Let $\Gamma^{L}=\textrm{Lip}^{L}_b(\mathcal{X})$. In this case, condition {\bf (C)} can be satisfied if $Q_{V^{(i-1)} \mid\mathrm{Pa}(i)}\otimes \prod_{j=i+1}^n Q_{j \mid\mathrm{Pa}(j)}$ viewed as a function of $\bm x_{i\,\cup\,\mathrm{Pa}(i)}$, is Lipschitz-continuous with respect to 1-Wasserstein distance  \citep{bartl2025fast, piening2024paired,altekruger2023conditional, benezet2024learning}. We prove this result in Proposition~\ref{prop:Lip_continuity for conditional Q}.  

In generative modeling, the assumption that $Q_{V^{(i-1)} \mid\mathrm{Pa}(i)}\otimes \prod_{j=i+1}^n Q_{j \mid\mathrm{Pa}(j)}$ viewed as a function of $\bm x_{i\,\cup\,\mathrm{Pa}(i)}$, is Lipschitz-continuous with respect to 1-Wasserstein distance (also   given in \eqref{ass:Lip1}), can be interpreted as follows: Let $(\mathcal Z,\mathcal F,\nu)$ be a probability space, $\mathcal X_{\mathrm{Pa}(i)}\subset\mathbb R^m$ and $g:\mathcal X_{\mathrm{Pa}(i)}\times \mathcal Z\to\mathbb R^d$. We assume that 
for every $\bm x_{\mathrm{Pa}(i)}\in\mathcal X_{\mathrm{Pa}(i)}$, the map $z\mapsto g(\bm x_{\mathrm{Pa}(i)},z)$ is measurable 
and  $g(\bm x_{\mathrm{Pa}(i)},\cdot)_{\#}\nu\in\mathcal P_1(\mathbb R^d)$, and for any $\bm x_{\mathrm{Pa}(i)},\bm x'_{\mathrm{Pa}(i)}\in\mathcal X_{\mathrm{Pa}(i)}$ and all $z\in\mathcal Z$, 
$\|g(\bm x_{\mathrm{Pa}(i)},z)-g(\bm x'_{\mathrm{Pa}(i)},z)\|\leq Cd_{\mathcal{X}_{\mathrm{Pa}(i)}}(\bm x_{\mathrm{Pa}(i)},\bm x'_{\mathrm{Pa}(i)})$.
Then, a measurable generator with finite first moment and uniform Lipschitz dependence on the conditioning random vector induces a $W_1$-Lipschitz family of conditional distributions, see \cite{arjovsky2017wasserstein, papamakarios2021normalizing}.
This is proved in Appendix \ref{supplem}.

Therefore, such an assumption (see  \eqref{ass:Lip1})
expresses a quantitative stability property of the model: small perturbations of the conditioning variables induce proportionally small changes in the corresponding conditional distributions in $1$-Wasserstein distance. This stability naturally arises when conditional distributions are represented as the pushforwards of a reference noise distribution through generators that are Lipschitz in their conditioning variables.

\section{Infimal subadditivity in action}
\label{sec:experiments}

Section~\ref{sec:main-results} shows that when the target distribution factorizes according to a known Bayesian network over a graph $G$, the corresponding global objective function can be related to a sum of lower-dimensional objective functionals defined on graph-informed families. In adversarial learning terms, this suggests replacing a graph-agnostic GAN with a monolithic discriminator by a GiGAN with localized discriminators acting on families of variables. The aim of this section is to show how the graph-informed strategy behaves in practice across a range of structured target distributions. In particular, we find that GiGANs often improve optimization stability and the recovery of the underlying conditional structure, while also revealing trade-offs between family-level and global distributional discrepancy, and between predictive accuracy and computational cost. 

We study four case studies. The first two are synthetic continuous examples. The structured linear-Gaussian model, introduced in Section~\ref{sec:main-results}, serves as a controlled testbed in the sense that the graph structure is known and the effect of localization can be examined directly. In what follows, we complement the earlier objective-level results with additional diagnostics for global versus local error and with an experiment on the role of the Lipschitz constant. The second synthetic example is a ball-throwing time-series model \citep[see, e.g., §8.1]{ding2021gans}, which we use to examine training dynamics and convergence in the structured continuous setting. 

The remaining two case studies use real discrete Bayesian networks, namely, \textsc{Child} from \cite{spiegelhalter1992learning} and \textsc{Earthquake} from \cite{pearl1988probabilistic}. We use \textsc{Child} to study structural fidelity and the effect of family size in a medium-sized network, and \textsc{Earthquake} to assess robustness across many runs in a smaller network. 

The remainder of this section is organized as follows. We first describe the common training framework shared across the experiments, including the localized discriminator construction, objective functions, and validation metrics. We then present the synthetic case studies, followed by the real-world Bayesian network experiments. 

\subsection{Common experimental framework}
\label{sec:algorithm}

All four case studies train a generator to match a target distribution $Q$ through a variational min--max objective of the form,
\begin{equation}
    \label{eq:var-learning-f-Gamma}
    \inf_{P_\theta} \mathfrak{D}_{f}^\Gamma (Q \| P_\theta) = \inf_{P_\theta} \sup_{\gamma_\eta \in \Gamma} \left\{ E_{Q} [ \gamma_\eta(x)] - \inf_{\nu \in \mathbb{R}} \left\{\nu + E_{P_\theta} [f^* (\gamma_\eta(x)-\nu)] \right\} \right\}\,, 
\end{equation}
where $\theta$ and $\eta$ denote the generator and discriminator parameters, respectively. In the graph-agnostic setting, a single monolithic discriminator acts on the full sample vector. In the graph-informed setting, the discriminator is replaced by a collection of localized discriminators acting on lower-dimensional families of variables.

\paragraph{Localized discriminators.} 
Recall that $V$ denotes the full set of vertices in the DAG and hence $|V| = n$ denotes the number of variables and that $\boldsymbol{x}$ denotes a full sample from either the target or generated distribution. Given a collection of families $F_1, \dots, F_n$, with $F_i \subset V$ for all $i \in V$, we associate to each $F_i$ a discriminator
\begin{equation*}
    \gamma^i_{\eta_i} = \gamma^i(\cdot; \eta_i): \mathcal{X}_{F_i} \to \mathbb{R}\,, \qquad i = 1, \dots, n\,,
\end{equation*}
which acts only on subvectors $\boldsymbol{x}_{F_i}$. In the graph-informed experiments, these families are chosen to align with the structure of the Bayesian network. In the simplest case, they are the child--parent families $F_i = \{i\} \cup \mathrm{Pa}(i)$. For discrete Bayesian networks, we generate a dummy encoding of each variable; if $e(v)$ is the block of encoded coordinates for variable $v$, then we define encoded families $\bar{F}_i = \cup_{v\in F_i} e(v)$ induced by $F_i$ and the discriminators act on these encoded families $\gamma^i:\mathcal{X}_{\bar{F}_i}\to\mathbb{R}$.

\paragraph{Local objectives.}
For each localized discriminator, we define an objective functional comparing the corresponding marginals of real and generated samples. We consider two forms. For continuous experiments based on the KL-divergence, we use the Donsker--Varadhan representation, 
\begin{equation}
\label{eq:local-obj-dv}
    \mathcal{J}^{i}_{\mathrm{DV}}(\theta, \eta_i) = E_{Q} [\gamma^{i}(\boldsymbol{x}_{F_i};\eta_i)] - \log E_{P_\theta} [\exp(\gamma^{i}(\boldsymbol{x}_{F_i};\eta_i))]\,.
\end{equation}
For the discrete experiments, we use the $f$-GAN form \cite{nowozin2016f},
\begin{equation}
\label{eq:local-obj-f-gan}
    \mathcal{J}_f^{i}(\theta, \eta_i) = E_{Q} [\gamma^{i}(\boldsymbol{x}_{\bar{F}_i};\eta_i)] - E_{P_\theta}[f^* (\gamma^{i}(\boldsymbol{x}_{\bar{F}_i};\eta_i))]\,,
\end{equation}
where $f^*$ is the convex conjugate corresponding to the chosen $f$-divergence (see Section~\ref{sec:f-div}). In the latter case, the output activation of each discriminator is chosen so that its values lie in the effective domain of $f^*$. Note that for the KL-divergence, \eqref{eq:local-obj-dv} is a special case of \eqref{eq:var-learning-f-Gamma} after optimizing over $\nu$, whereas the $f$-GAN formulation in \eqref{eq:local-obj-f-gan} corresponds to fixing $\nu$ and therefore changes the gradients and training dynamics.

\paragraph{Global graph-informed objective.} 
Training is performed by summing the local objectives,
\begin{equation}
    \label{eq:sum-local-obj}
    \mathcal{J}(\theta, \eta_1, \dots, \eta_n) = \sum_{i=1}^{n} w_i\, \mathcal{J}^{i}(\theta, \eta_i) \,.
\end{equation}
When all families correspond to the child--parent families, the choice $w_i = 1/n$ matches the averaged form suggested by the infimal subadditivity results up to a constant normalization. Algorithm~\ref{alg:many-critic-training} summarizes the resulting training scheme.

\begin{algorithm}[htb]
\caption{GiGAN training with localized discriminators}
\label{alg:many-critic-training}
\DontPrintSemicolon
\KwIn{Target $Q$; latent noise $P_Z$; generator $\mathrm{Gen}(\cdot; \theta)$;  batch~size~$B$; families~$F_1, \dots, F_n$; local~discriminators~$\gamma^{1}(\cdot; \eta_1), \dots, \gamma^n(\cdot; \eta_n)$; learning~rates~$\ell_{\eta},\ell_{\theta}$; weights~$w_1,\dots, w_n$.}
\KwChoice{Objective form (A) DV-KL or (B) $f$-GAN.}
\medskip 
\For{training steps $t=0,1,2,\dots$}{
  Sample minibatches $\{\boldsymbol{x}^{(k)}\}_{k=1}^B \sim Q$ and $\{\boldsymbol{z}^{(k)}\}_{k=1}^B \sim P_Z$\;
  Form generated minibatch $\{\tilde{\boldsymbol{x}}^{(k)}\}_{k=1}^B$:\;
  \eIf{categorical data}{
  \tcp{Generator outputs logits for each categorical variable block; relaxed samples are formed via Gumbel-softmax} %
      $\tilde{\boldsymbol{x}}^{(k)} \leftarrow \mathrm{GumbelSoftmax}_\tau (\mathrm{Gen}(\boldsymbol{z}^{(k)}; \theta^t)) \quad \text{for } k=1,\dots,B$\;
    }{
    $\tilde{\boldsymbol{x}}^{(k)} \leftarrow \mathrm{Gen}(\boldsymbol{z}^{(k)}; \theta^t) \quad \text{for } k=1,\dots,B$\;
    }  
    
  Calculate objectives and update parameters:\;
  \For{$i=1,\dots, n$}{
    \uIf{(A) DV--KL}{
      $\widehat{\mathcal{J}}^{i} \leftarrow \frac1B\sum_{k=1}^B \gamma^{i}(\boldsymbol{x}_{F_i}^{(k)}; \eta_i^t)
      \;-\;\log\!\left(\frac1B\sum_{k=1}^B \exp(\gamma^{i}(\tilde{\boldsymbol{x}}_{F_i}^{(k)}; \eta_i^t))\right)$ \tcp*[r]{see \eqref{eq:local-obj-dv}}
    }\ElseIf{(B) $f$-GAN}{
      $\widehat{\mathcal{J}}^{i} \leftarrow \frac1B\sum_{k=1}^B \gamma^{i}(\boldsymbol{x}_{\bar{F}_i}^{(k)}; \eta_i^t)
      \;-\;\frac1B\sum_{k=1}^B f^*\!\left(\gamma^{i}(\tilde{\boldsymbol{x}}_{\bar{F}_i}^{(k)}; \eta_i^t)\right)$ \tcp*[r]{see \eqref{eq:local-obj-f-gan}}
      \tcp{activation is chosen so that discriminator outputs lie in the effective domain of $f^*$}
    }
  }
  $\widehat{\mathcal{J}} \leftarrow \sum_{i=1}^n w_i\,\widehat{\mathcal{J}}^{i}$ \tcp*[r]{see \eqref{eq:sum-local-obj}}

  \tcp{One discriminator step per minibatch}
  $\eta_i^{t+1} \leftarrow \eta_i^t + \ell_{\eta} \nabla_{\!\eta_i} \widehat{\mathcal{J}} (\theta^t,\eta_1^t,\dots,\eta_n^t)\quad \text{for } i = 1, \dots, n$\;
  \tcp{One generator step per minibatch}
  $\theta^{t+1} \leftarrow \theta^t - \ell_{\theta} \nabla_{\!\theta}\widehat{\mathcal{J}} (\theta^t,\eta_1^t,\dots,\eta_n^t)$\;
  Record validation and runtime diagnostics\;
}
\end{algorithm}

\paragraph{Continuous and discrete generators.}
For continuous data, the generator outputs $\tilde{\boldsymbol{x}} \sim P_\theta$ directly from the latent noise $\boldsymbol{z} \sim P_Z$. For discrete Bayesian networks, the generator outputs logits for each categorical variable block, and differentiable relaxed samples are obtained by applying Gumbel--softmax \citep{JangGuPoole:2017gs} to each block before evaluation by the discriminators.  

\paragraph{Restricting the discriminator class.}
In the experiments that consider an interpolative divergence, we impose a Lipschitz-type restriction on each discriminator via spectral normalization to every dense layer \citep{MiyatoKoyamaYoshida18}. This yields the class of constrained discriminators used in the numerical studies and is particularly important in the continuous KL-based experiments, where it prevents degenerate discriminator behavior. 

\paragraph{Validation metrics.}
Across the experiments, we monitor the variational objective together with global, local, and case-specific validation diagnostics. Global distributional fit is assessed using energy distance \citep{SzekelyRizzo:2013es}. For the discrete Bayesian network experiments, where the target conditional probability tables (CPTs) are known, we additionally report the average log-likelihood of generated samples under the ground-truth Bayesian network. Local fit is assessed using the area under the receiver operating characteristic curve (AUC) for each discriminator, while further structure-specific diagnostics are introduced in the relevant case studies. 

For integrable random vectors $\boldsymbol{x}, \boldsymbol{y} \in\mathbb{R}^d$, the energy distance is
$$
\mathcal{E}(\boldsymbol{x}, \boldsymbol{y})=2E |\boldsymbol{x}-\boldsymbol{y}|_d - E|\boldsymbol{x}-\boldsymbol{x}'|_d - E|\boldsymbol{y}-\boldsymbol{y}'|_d\,,
$$
where $\boldsymbol{x}'$ and $\boldsymbol{y}'$ are independent copies. The energy distance is a natural, rotation-invariant extension of the Cramér--von Mises--Smirnov distance in higher dimensions, which compares between-group and within-group variability. We estimate $\mathcal{E}$ between held-out real data and generated samples; for the discrete Bayesian network experiments, this is computed on the corresponding dummy-encoded sample vectors.

For the discrete Bayesian network experiments, the average log-likelihood evaluates how well generated samples agree with the ground-truth Bayesian network. Concretely, we sample from the generator, decode the categorical outputs, and evaluate the resulting configurations under the known joint probability distribution defined by the CPTs. Higher values indicate better agreement with the target distribution. Since the target Bayesian network is known in these experiments, this should be interpreted as an oracle diagnostic rather than as a likelihood for the generator itself.

For each discriminator, we also compute the AUC on held-out real versus generated samples restricted to the corresponding family. AUC values near $0.5$ indicate that the generator matches the data well in the discriminator's domain, whereas larger values indicate remaining discrepancy.

Finally, each case study uses additional structural diagnostics tailored to the example: parent-coefficient recovery in the structured Gaussian experiment, inferred gravitational constant and initial velocity in the ball-throwing experiment, and node-wise total variation error between generated and target marginals in the discrete Bayesian network experiments.

\subsection{Synthetic Data}

We consider two synthetic case studies for continuous Bayesian networks and assess the success of the GiGANs using domain-grounded diagnostics. The first synthetic experiment uses the structured linear-Gaussian Hasse network from Figure~\ref{fig:exm-structured-gaussian} to isolate the effect of discriminator localization. In addition to the objective-level behavior discussed alongside Theorem~\ref{thm:main2general}, we use this example to assess whether graph-informed training preserves global distributional quality using the energy distance and how the objective functional dynamics depend on the Lipschitz constant. The second synthetic example is the ball-throwing time series, whose explicit conditional structure makes it useful for studying practical training dynamics and stability in a continuous setting.

\subsubsection{Structured linear-Gaussian Hasse network}
\label{sec:experiments:structured-linear-gaussian}

We revisit the structured linear-Gaussian model in \eqref{eq:linear-structured-Gaussian} and Figure~\ref{fig:exm-structured-gaussian} that was introduced as a direct experimental testbed for Theorem~\ref{thm:main2general}. In this example, the generator is from a fixed parametric family and the true conditional structure is known, thereby separating the effect of discriminator localization from generator expressivity. Since the earlier discussion in Section~\ref{sec:main-results} already established the main objective-level comparison, the additional diagnostics below show that localized discriminators aligned with the true Bayesian-network families, which improve the recovery of conditional and marginal distributions, do so while maintaining comparable global fit. We also illustrate how the Lipschitz restriction affects the resulting variational dynamics. Unless otherwise stated, the summaries for this case study are based on $13$ paired runs per model; the Lipschitz constant study in Figure~\ref{fig:exm-structured-gauss-lipschitz} is based on only $3$ runs per model. 

For each random seed, we first instantiate the Hasse Gaussian Bayesian network of the subset lattice of a three-element set using the edge weights $\phi_{ji}$ shown in Figure~\ref{fig:exm-structured-gaussian}. We then generate $32{,}000$ i.i.d.\ observations by ancestral simulation. These are split into training and validation sets in a $90{:}10$ ratio. All discriminator constructions are trained on the same instantiated network and the same train--validation split for a given seed, so that differences can be attributed to the discriminator design rather than to differences in the data-generating instance.

To isolate the role of discriminator localization, we fix the parametric family of the generator across all methods. In every case, the generator is a full-covariance Gaussian model
\begin{equation*}
    P_\theta = \mathcal{N}(\mu_\theta,\Sigma_\theta)\,, \qquad \Sigma_\theta = L_\theta L_\theta^\top,
\end{equation*}
with trainable parameters $\theta = (\mu_\theta, L_\theta)$. Thus, the comparison is not between more and less expressive generators, but between different variational objectives induced by different discriminator constructions. This is consistent with Theorem~\ref{thm:main2general}, where the optimization in \eqref{infbounds_2} is over all candidate probability distributions $P \in \mathcal{P}(\mathcal{X})$ and the graph-informed effect enters through the localized objective rather than through an explicit structural restriction on the generator.

Each discriminator is implemented as a neural network with two hidden layers, each with $16$ units and Leaky ReLU activations, trained with the DV--KL objective \eqref{eq:local-obj-dv}. We report results for variational objectives based on both classical and interpolative divergences (i.e., Lipschitz-restricted). In the latter case, spectral normalization is applied to enforce a controlled Lipschitz constant, so that the discriminator class approximates a Lipschitz-bounded class. Training uses Adam with learning rates $(\ell_\eta,\ell_\theta)=(10^{-3},3\times 10^{-3})$, a batch size of $256$, and a maximum of $60$ epochs, with early stopping triggered when the validation energy distance falls below $0.05$ for five consecutive epochs.

The experiment seeks to compare three discriminator constructions. The first is a graph-agnostic baseline with a single monolithic discriminator acting on the full joint distribution. The second is a graph-informed model with one localized discriminator for each child--parent family $F_i=\{i\}\cup\mathrm{Pa}(i)$. The third is a misaligned multi-discriminator baseline in which each true family is replaced by a size-matched subset that excludes the true parents. This misaligned construction is included to test whether any observed gains come from graph alignment itself rather than simply from distributing capacity across several smaller discriminators. For the graph-informed objectives, we use uniform weights across families, so that the overall objective is the average of the local DV terms.

In Section~\ref{sec:main-results}, we assessed the structural fidelity through recovery of the parent coefficients $\{\phi_{ji}\}$ by regressing generated samples onto the true parent set and averaging the resulting $L_2$ error over non-root nodes. Here we complement that local diagnostic with a global one, namely the energy distance on the full eight-dimensional validation sample in Figure~\ref{fig:exm-structured-gauss:global-discrepancy}. Taken together with Figure~\ref{fig:exm-structured-gauss:local-recovery}, the full picture is that the GiGAN improves parent-coefficient recovery while remaining competitive with the monolithic baseline in global energy distance. The misaligned multi-discriminator achieves energy-distance values of a similar order, but fails to recover the correct conditional structure. This shows that the benefit is not simply the use of several discriminators, but the use of discriminators aligned with the correct families.

Figure~\ref{fig:exm-structured-gauss-lipschitz} illustrates how the evolution of the variational loss on the validation set changes as the Lipschitz constant $L$ is varied for the graph-informed multi-discriminator construction, with the monolithic discriminator at $L=1$ included for reference. The plot shows the mean loss over the validation data, with a min-max envelope based on 3 runs for each model. Since the objective functional itself depends on $L$, this figure is best interpreted as showing differences in training dynamics and surrogate behavior rather than providing a direct comparison of absolute objective values across Lipschitz constants. This study shows that the local surrogate \eqref{eq:local-surrogate} changes meaningfully with the Lipschitz constant $L$. In particular, a smaller $L$ produces a lower-valued and more rapidly decaying surrogate, while a larger $L$ yields a higher-valued objective and slower decay. 

In summary, this case study supports the interpretation of Theorem~\ref{thm:main2general} in a controlled setting. The GiGAN performs best on recovering the correct conditional mechanisms, while still maintaining competitive global distributional quality as compared to the graph-agnostic GAN. Moreover, the Lipschitz constant study shows that varying $L$ alters both the objective's magnitude and its training dynamics, suggesting that $L$ can serve as a practical tuning parameter to balance regularization, stability, and sensitivity in graph-informed adversarial training.

\begin{figure}[htb]
\centering
\begin{subfigure}[t]{0.49\textwidth}
  \centering
  \includegraphics[width=\textwidth]{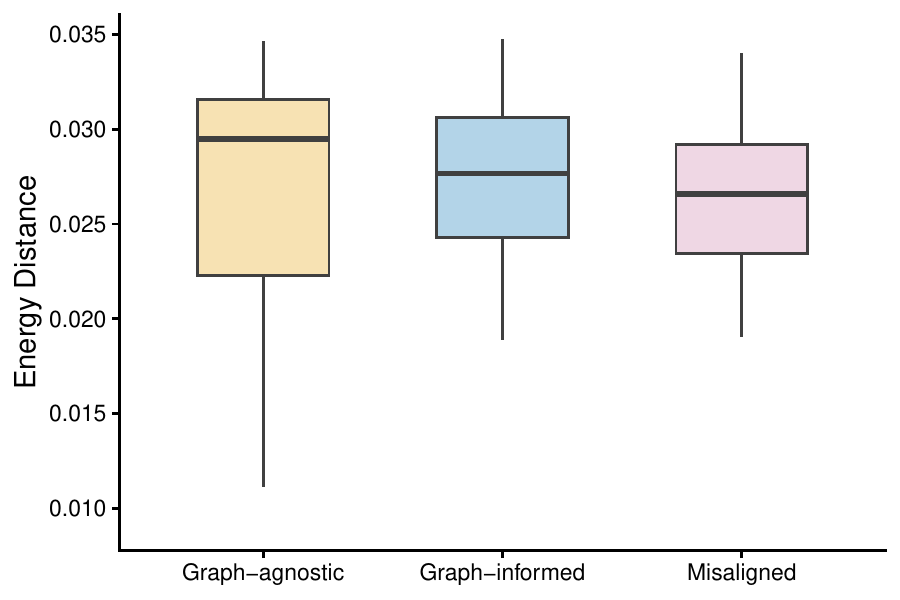}
  \caption{Global discrepancy}
  \label{fig:exm-structured-gauss:global-discrepancy}
\end{subfigure}\hfill
\begin{subfigure}[t]{0.5\textwidth}
  \centering
  \includegraphics[width=\textwidth]{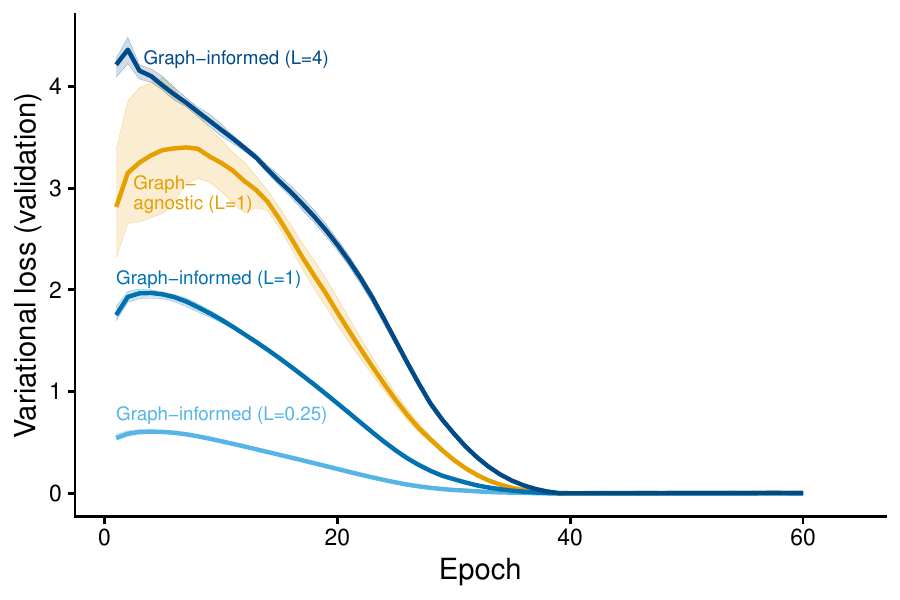}
  \caption{Effect of the Lipschitz constant}
  \label{fig:exm-structured-gauss-lipschitz}
\end{subfigure}

\caption{Additional diagnostics for the structured linear-Gaussian Hasse network, complementing the objective-level comparison in Figure~\ref{fig:exm-structured-gauss}. (a) Energy distance (lower is better) provides a global measure of distributional discrepancy across the three discriminator constructs under the Lipschitz-restricted interpolative divergence (cf.~parent-coefficient recovery in Figure~\ref{fig:exm-structured-gauss:local-recovery}); boxplots summarize $13$ paired runs per model. (b) Variational loss for the GiGAN trained using an interpolative divergence at several Lipschitz constants $L$, with the graph-agnostic GAN at $L=1$ for reference; mean with a min-max envelope based on $3$ runs per model.}
\label{fig:exm-structured-gauss:extra}
\end{figure}

\subsubsection{Ball-throwing trajectory}
\label{sec:experiments:ball-throwing}

We use the synthetic ball-throwing trajectory to study graph-informed adversarial training in a continuous setting where the main issue is practical trainability rather than objective-level certification as in the previous experiment. In this example, the underlying dependence structure is explicit, and the generated trajectories admit physically interpretable diagnostics through the gravitational constant and initial velocity recovered from fitted trajectories. The experiment therefore serves two purposes: it shows how the Lipschitz restriction naturally stabilizes the training, and it tests whether replacing a graph-agnostic monolithic discriminator by graph-informed localized discriminators yields smoother and more diagnostically useful training dynamics. Unless otherwise stated, the summaries in Figure~\ref{fig:exm-ball-training} are based on $20$ independent runs per model for the interpolative setting and $10$ independent runs per model for the classical setting.

The target probability distribution $Q$ is the joint distribution of a discretized collection of ball trajectories with random initial velocities. For a ball with unit mass, the gravitational acceleration $\gbar$, and an initial velocity $v_0$, the trajectory is 
\begin{equation}
    y_t = v_0 t - \frac{1}{2} \gbar  t^2\,.
    \label{eq:ball-trajectory}
\end{equation}
For each independent $v_0 \sim N(\mu_v, \sigma_v^2)$, we observe the time series 
\begin{equation*}
    \boldsymbol{y} = (y_{t_0}, y_{t_1}, \dots, y_{t_m})\,, 
    \qquad  t_j = j/m, \quad j = 0, ..., m\,,
\end{equation*}
at $m+1 = 15$ uniformly spaced time points in $[0,1]$. In the experiments below, we use $\mu_v=4$ and $\sigma_v=3$. Figure~\ref{fig:exm-ball} shows fifty representative trajectories from the target distribution together with generated trajectories from graph-informed and graph-agnostic models.

As \eqref{eq:ball-trajectory} is a second-order equation in time, the resulting joint probability distribution has a natural local dependence structure: 
conditional on the two previous positions, the next position is determined up to the randomness induced by $v_0$.
This motivates the Bayesian network description, 
\begin{equation}
\label{eq:bn-ball-trajectory}
  y_{t_{j-1}} \rightarrow y_{t_j} \leftarrow y_{t_{j-2}}\,, \quad j = 2, \dots, m\,,
\end{equation}
together with the boundary relation $y_{t_0} \rightarrow y_{t_1}$. In the graph-informed model, the localized discriminator families are chosen to reflect this structure. In particular, the families correspond to triples of the form $(y_{t_j}, y_{t_{j-1}}, y_{t_{j-2}})$, with the obvious truncation near the initial time.

\begin{figure}[htb]
\centering
\begin{subfigure}[t]{0.49\textwidth}
  \centering
  \includegraphics[width=\textwidth]{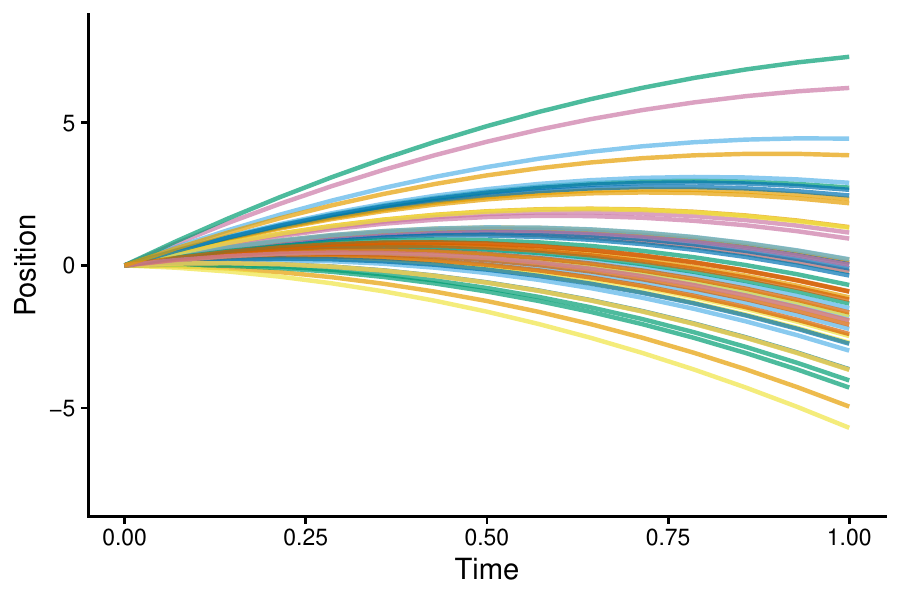}
  \caption{Synthetic data}
  \label{fig:exm-ball:data}
\end{subfigure}\\
\begin{subfigure}[t]{0.49\textwidth}
  \centering
  \includegraphics[width=\textwidth]{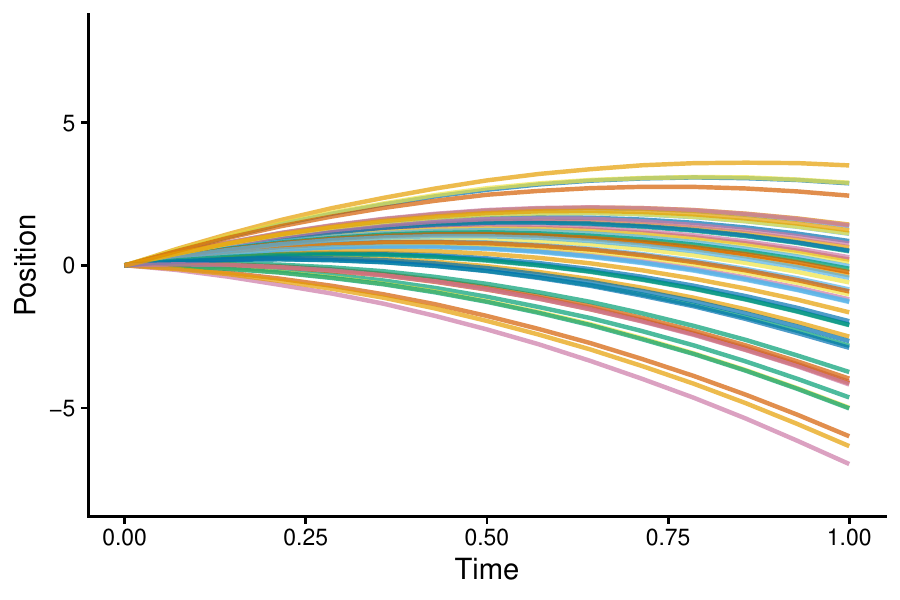}
  \caption{GiGAN}
  \label{fig:exm-ball:m-1}
\end{subfigure}\hfill
\begin{subfigure}[t]{0.49\textwidth}
  \centering
  \includegraphics[width=\textwidth]{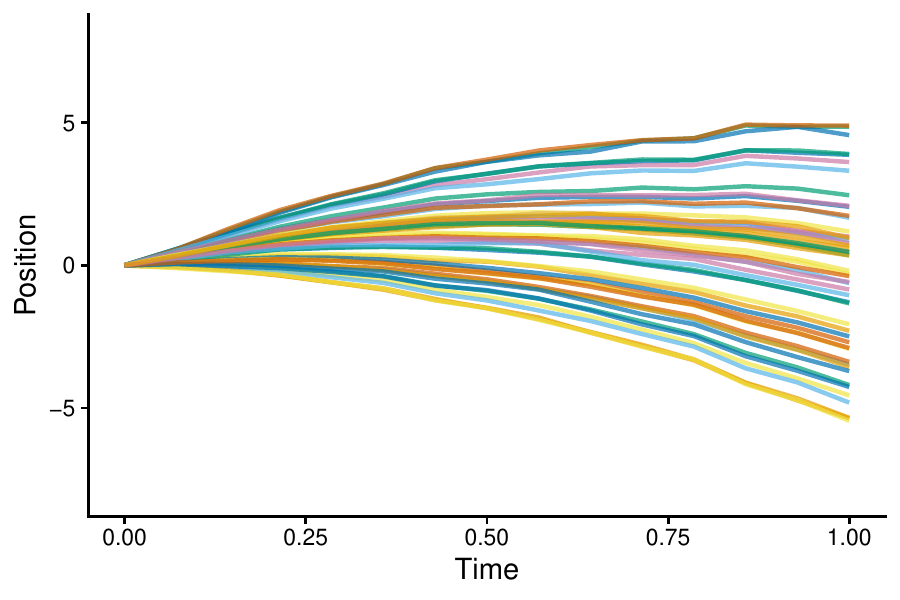}
  \caption{Graph-agnostic GAN}
  \label{fig:exm-ball:m-none}
\end{subfigure}

\caption{Fifty ball-throwing trajectory time series from various distributions. (a) Samples from the target distribution with $v_0\sim N(\mu_v=4,\sigma_v^2=3^2)$. (b) Fifty generated trajectories from a GiGAN with $14$ localized discriminators trained using an interpolative divergence with a Lipschitz constraint. (c) Fifty generated trajectories from a graph-agnostic GAN with a monolithic discriminator trained using an interpolative divergence with a Lipschitz constraint.}
\label{fig:exm-ball}
\end{figure}

We compare a graph-agnostic model, with a single monolithic discriminator acting on the full trajectory, against a graph-informed model, with $14$ localized discriminators aligned with the  Bayesian-network child--parent families. The discriminators are fully connected networks with two hidden layers of width $32$ and Leaky ReLU activations. Both models use the same generator architecture (a fully-connected feed-forward network with latent dimension $64$, two hidden layers of width $128$, and swish activations, %
outputting a length-$15$ trajectory) and are trained with the Donsker--Varadhan form of the KL objective in \eqref{eq:local-obj-dv} with minibatches of size $256$ using Adam with one discriminator update and one generator update per minibatch with learning rates
$(\ell_\eta,\ell_\theta)=(5\times 10^{-4},10^{-3})$. To assess the role of interpolative regularization, we also compare training with and without spectral normalization. In the constrained setting, spectral normalization is applied to each discriminator layer so that the discriminator class approximates a Lipschitz-bounded class; in the unconstrained setting, the same discriminator architecture is trained directly without this restriction.

The generated trajectories also admit interpretable physics-based diagnostics.  By fitting the quadratic model in \eqref{eq:ball-trajectory} to generated samples via least squares, we obtain estimates  $\hat{v}_0$ and $\hat{\gbar}$ from the fitted linear and quadratic coefficients. We track the empirical mean and spread of these estimates across generated samples, as a good model should reproduce both the distribution of initial velocities and the true gravitational constant,  not merely match the trajectories in a generic distributional sense. These physics-based diagnostics provide an interpretable validation signal for model selection and early stopping. 

\begin{figure}[htb]
\centering
\begin{subfigure}[t]{0.49\textwidth}
  \centering
  \includegraphics[width=\textwidth]{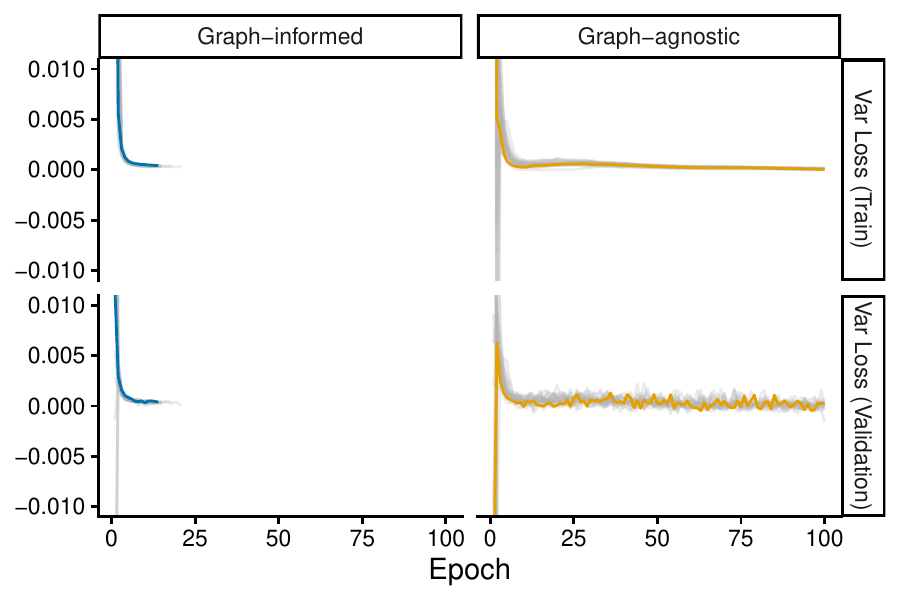}
  \caption{Variational loss (interpolative)}
  \label{fig:exm-ball:var-loss}
\end{subfigure} \hfill
\begin{subfigure}[t]{0.49\textwidth}
  \centering
  \includegraphics[width=\textwidth]{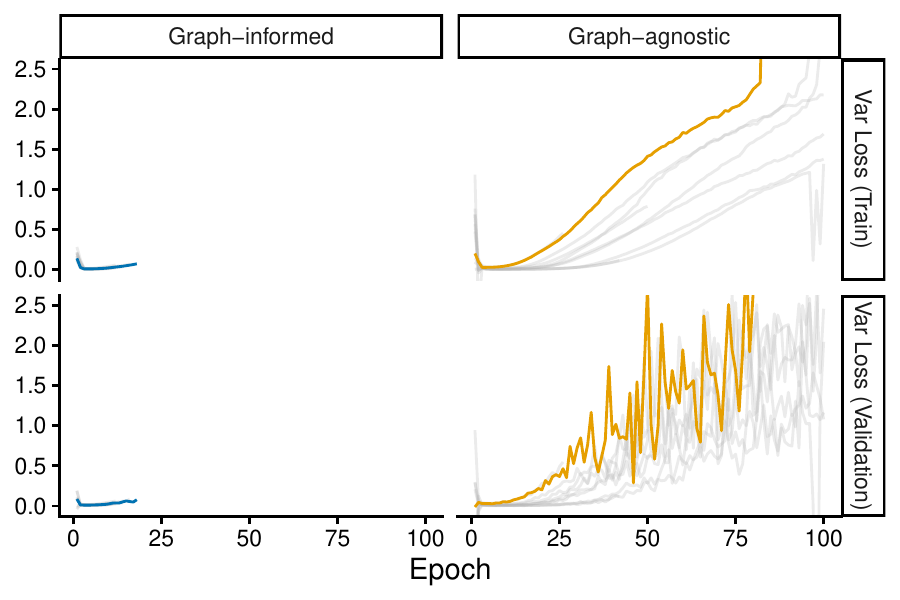}
  \caption{Variational loss (classical)}
  \label{fig:exm-ball:var-loss-specFalse}
\end{subfigure} \\
\medskip
\begin{subfigure}[t]{0.49\textwidth}
  \centering
  \includegraphics[width=\textwidth]{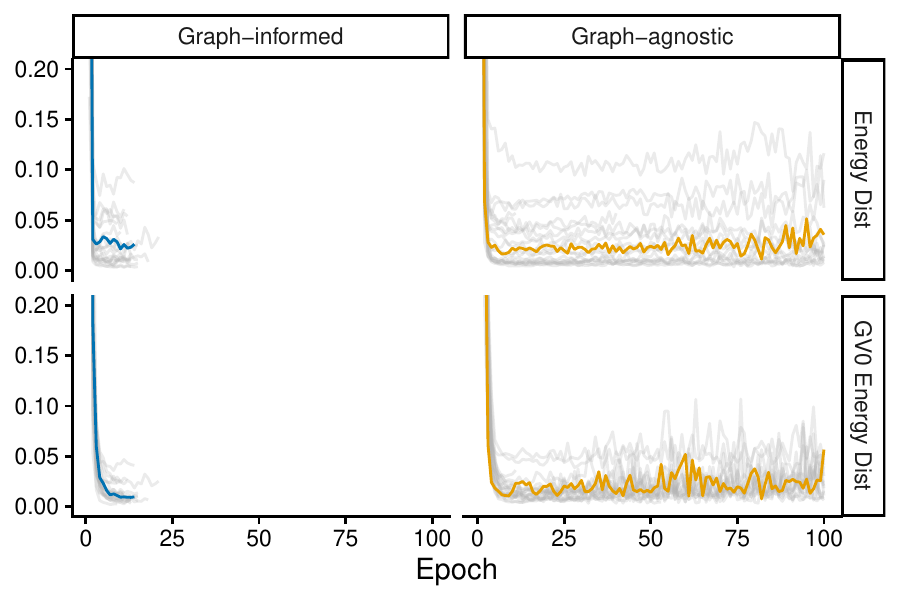}
  \caption{Energy distance (interpolative)}
  \label{fig:exm-ball:energy-dist}
\end{subfigure} \hfill
\begin{subfigure}[t]{0.49\textwidth}
  \centering
  \includegraphics[width=\textwidth]{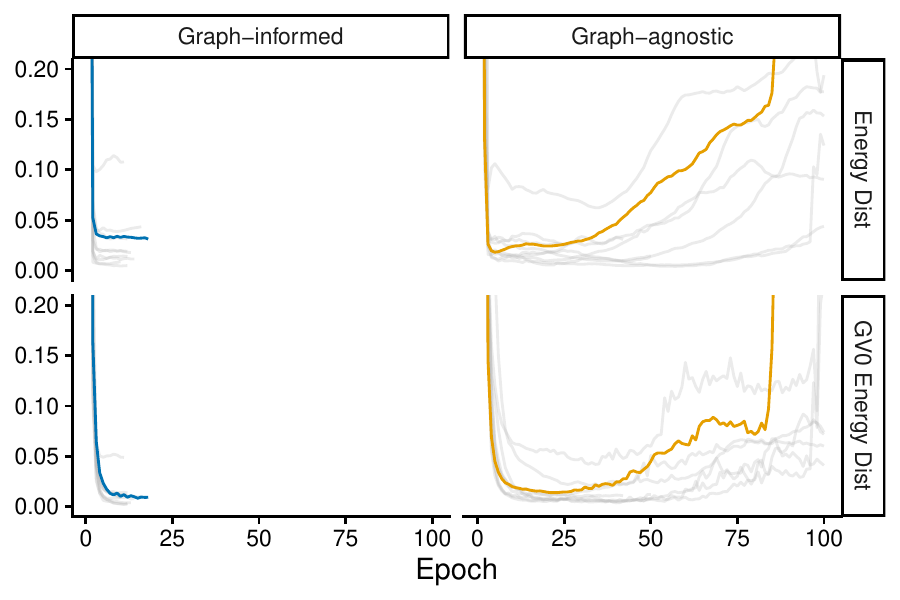}
  \caption{Energy distance (classical)}
  \label{fig:exm-ball:energy-dist-specFalse}
\end{subfigure}
\caption{Training diagnostics for the ball-trajectory experiment from a representative run; the GiGAN with $14$ localized discriminators aligned to child--parent families in the  Bayesian network is compared to the graph-agnostic GAN with a monolithic discriminator. (a) and (b) show the evolution of the KL variational loss \eqref{eq:local-obj-dv} on the training and validation sets, for an interpolative ($20$ runs in total) and classical divergence  ($10$ runs in total), i.e.,  with and without Lipschitz constraints enforced via spectral normalization. (c) and (d) show the corresponding energy-distance diagnostics, both for the full trajectory distribution and for the recovered physics parameters $(\hat{\gbar},\hat{v}_0)$, again for an interpolative and classical divergence.}
\label{fig:exm-ball-training}
\end{figure}

Figure~\ref{fig:exm-ball-training} summarizes the main training diagnostics. Panels~(a) and~(b) show the evolution of the variational objective with and without the Lipschitz restriction. When using an interpolative divergence, both graph-informed and graph-agnostic training remain numerically stable and produce interpretable loss curves. Without this restriction, the DV--KL objective becomes much less well behaved, reflecting the instability of the log-moment term in the unconstrained setting. Panels~(c) and~(d) show the corresponding energy-distance diagnostics. The main qualitative effect is that the interpolative restriction stabilizes training, while graph-informed localization makes the validation curves less oscillatory and therefore easier to use for model monitoring and stopping.

This stabilization is especially visible when comparing the graph-informed model with the graph-agnostic model. Although the localized model updates multiple discriminators per minibatch, its training curves are smoother, and the inferred physics is more stable across epochs. %
In this example, the benefit of graph-informed localization is a more reliable optimization process and a clearer diagnostic picture under a comparable wall-clock budget.

Overall, the ball-throwing experiment complements the Hasse-network study by highlighting a different advantage of graph-informed training. In the Hasse example, the main gain was improved structural recovery under a known Bayesian-network factorization. Here, the main illustrated gain is improved practical trainability in a structured continuous model: the interpolative restriction prevents DV--KL degeneracy, and graph-informed localized discriminators yield smoother and more interpretable training dynamics than a graph-agnostic monolithic discriminator.

\subsection{Real-world Bayesian networks}
\label{sec:experiments:real-bn}

We now turn to two real-world Bayesian networks in order to study graph-informed adversarial training beyond the synthetic continuous setting. The first example uses the \textsc{Child} network from \cite{spiegelhalter1992learning}, a medium-sized Bayesian network consisting of $20$ categorical variables and $25$ directed edges ($230$ parameters) that describe clinical factors associated with birth asphyxia (Figure~\ref{fig:exm-child-bn}). The second example uses the \textsc{Earthquake} network from \cite{pearl1988probabilistic} (see also \cite{KorbNicholson:2010bi}), a smaller Bayesian network consisting of $5$ nodes and $4$ directed edges ($10$ parameters). In both cases, the network structure and CPTs are assumed to be known. These experiments are intended to test whether the GiGAN continues to improve structural fidelity in discrete settings, and whether the resulting multi-discriminator training remains stable across repeated runs.

\begin{figure}[htb]
\centering
\begin{subfigure}[t]{0.74\textwidth}
  \centering
  \includegraphics[width=1\textwidth]{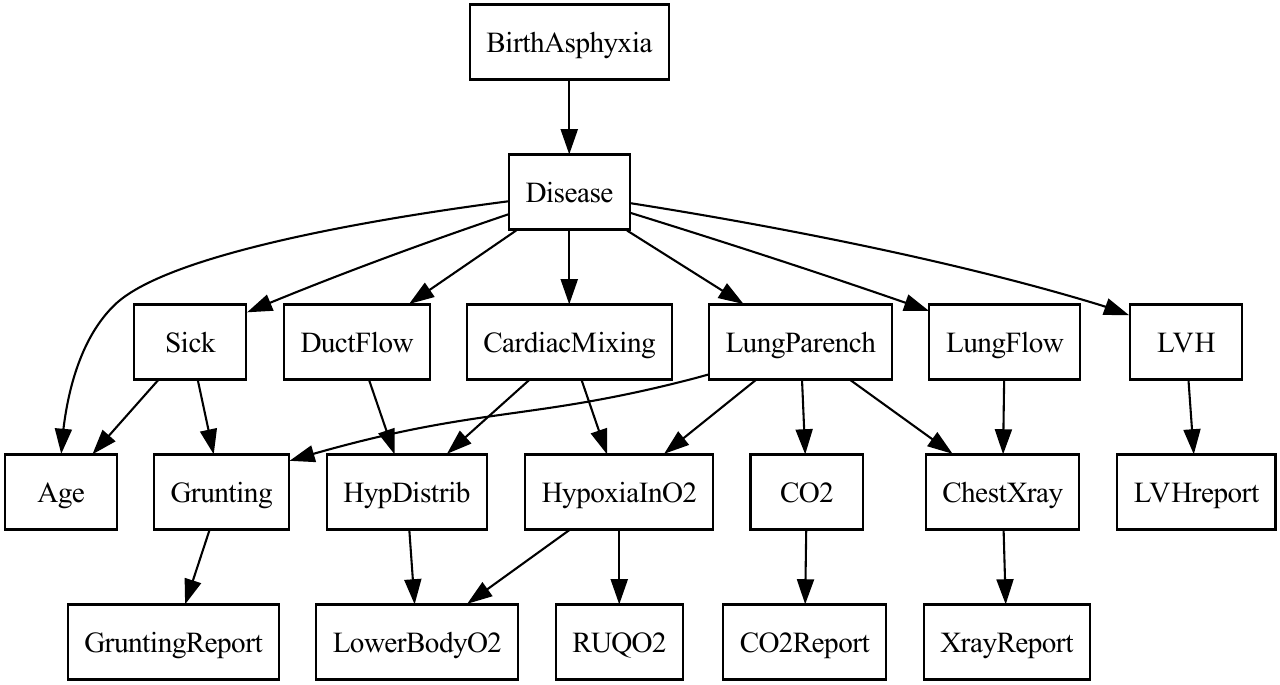}
  \caption{\textsc{Child}}
  \label{fig:exm-child-bn}
\end{subfigure}\hfill
\begin{subfigure}[t]{0.20\textwidth}
  \centering
  \includegraphics[width=\textwidth]{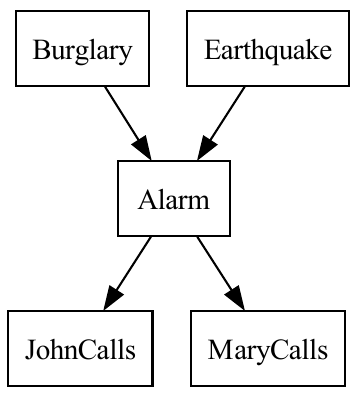}
  \caption{\textsc{Earthquake}}
  \label{fig:exm-earthquake-bn}
\end{subfigure}
\caption{The two real-world discrete Bayesian networks used in the experiments. (a) \textsc{Child}, a medium-sized, discrete Bayesian network with $20$ nodes and $25$ arcs ($230$ parameters) describing a possible diagnostic tool for birth asphyxia for newborn babies with congenital heart disease, from \cite{spiegelhalter1992learning}. (b) \textsc{Earthquake}, a small-sized, discrete Bayesian network consisting of 5 nodes and 4 arcs (10 parameters), from \cite{pearl1988probabilistic,KorbNicholson:2010bi}.}
\label{fig:exm-real-bns}
\end{figure}

\subsubsection{Localization for discrete Bayesian networks}
\label{sec:experiments:discrete-localization}

Since \textsc{Child} and \textsc{Earthquake} are fully discrete, we employ a differentiable relaxation to enable gradient-based training. The generator outputs a single dummy-encoded vector of logits, partitioned into blocks corresponding to the categorical variables. We obtain differentiable samples by applying the Gumbel--softmax relaxation of \cite{JangGuPoole:2017gs} independently to each block. Gumbel noise is added to the logits, and a temperature-controlled softmax produces an approximate dummy encoding per variable. These relaxed samples are passed to the discriminators during training, while hard-decoded samples are used for diagnostics.

All models in this section are trained with an interpolative JS-Wasserstein-type objective. That is, we consider the graph-informed objective in \eqref{eq:sum-local-obj}, using the $f$-GAN form in \eqref{eq:local-obj-f-gan} with the JS divergence and layer-wise spectral normalization in the discriminator to enforce a Lipschitz constraint.

The diagnostics reflect that the target Bayesian networks are fully known. We monitor the variational objective during training, the energy distance on held-out validation samples, and the average log-likelihood of generated samples under the known conditional probability tables. Since this log-likelihood is evaluated under the ground-truth Bayesian network rather than under the generator itself, it should be interpreted as a measure of agreement with the target probability distribution. To assess local fit, we report discriminator AUC values on held-out real versus generated samples. We also compute node-wise total variation error between generated marginals and the corresponding target marginals.

\subsubsection{\texorpdfstring{\textsc{Child}}{CHILD}}
\label{sec:experiments:child}

The \textsc{Child} experiment assesses the GiGAN on a medium-sized discrete Bayesian network in the theory-aligned child--parent setting and examines how the size of the localized discriminator families affects the trade-off between local and global fit of model predictions. We report results over $15$ independent runs. 

For each \textsc{Child} run, we draw $64{,}000$ i.i.d.\ cases from the exact Bayesian network by ancestral sampling, dummy-encode the resulting categorical observations into a $60$-dimensional vector, and split the data $90{:}10$ into training and validation sets. The generator is a fully connected network with latent dimension $32$, hidden widths $(64,64,64,64)$, swish activations, and a linear output layer producing logits for all dummy variables. The discriminators are fully connected networks with hidden widths $(16,8,4)$ and Leaky ReLU activations. Training uses Adam with learning rates $(\ell_\eta, \ell_\theta) = (5\times 10^{-4},\,5\times 10^{-5})$, batch size $512$, and at most $150$ epochs, with early stopping based on the validation average log-likelihood under the ground-truth CPTs (patience $20$, minimum improvement $10^{-3}$), after which the best checkpoint is restored. The graph-agnostic baseline uses a single discriminator on the full $60$-dimensional dummy vector. The graph-informed model uses $20$ localized discriminators aligned with the network families.

For \textsc{Child}, we write $M$ for the graph-agnostic GAN model with a monolithic discriminator, and $M_a$ for the GiGAN model with localized discriminators acting on the families of ancestor depth $a$,
$$F_i^{(a)} = \{i\} \cup \mathrm{An}_{\leq a}(i)\,,$$
where $\mathrm{An}_{\leq a}(i)$ denotes the ancestors of $i$ within graph distance at most $a$. In particular, $a=1$ corresponds to the typical child--parent family $F^{(1)}_i = \{i\}\cup \mathrm{Pa}(i)$. We compare the graph-agnostic model $M$ with the graph-informed models $M_1$ and $M_3$.

The model $M_1$ aligns with the graph-informed child--parent families covered by our main framework. By contrast, the model $M_3$ is included as an exploratory extension motivated by Remark~\ref{rem:tighter_bound}. This larger family localization is not presented as a direct consequence of Theorem~\ref{thm:main2general}, but rather as a way to examine how broader family-aligned neighborhoods affect the discriminator's inductive bias.  

Figure~\ref{fig:exm-real-bn-training:child} first shows that graph-informed training produces smoother variational-loss curves than the graph-agnostic baseline. Empirically, $M_3$ tends to occupy an intermediate regime between the more local graph-informed model $M_1$ and the monolithic graph-agnostic model $M$. Both $M_1$ and $M_3$ exhibit less oscillatory training and validation behavior than $M$, which makes convergence easier to diagnose. This mirrors the stabilization effects already observed in the continuous experiments, but now in a discrete setting using the interpolative JS objective functional.

The global validation metrics in Figure~\ref{fig:exm-real-bn-validation:child-global} show that localization changes the balance between local and global fit. The graph-informed models typically achieve higher average log-likelihood under the ground-truth Bayesian network than the graph-agnostic baseline, indicating better agreement with the known conditional structure. The energy distance, however, does not improve uniformly: model $M_1$ can yield slightly worse global energy distance than the monolithic baseline, while the model $M_3$ typically lies between the two. This is consistent with the view that larger neighborhoods recover more of the global dependence structure, whereas smaller families place greater emphasis on local fidelity.

This local advantage is reflected more clearly in the discriminator AUC and total-variation diagnostics. In Figure~\ref{fig:exm-real-bn-validation:child-auc}, the AUC values for the localized discriminators in $M_1$ and $M_3$ move more consistently toward $0.5$ on their respective families than does the monolithic discriminator in $M$ on the full joint space. For reference, the displayed critics correspond to the nodes BirthAsphyxia ($D_0$), HypoxiaInO2 ($D_2$), XrayReport ($D_{10}$), CardiacMixing ($D_{16}$), and Sick ($D_{19}$); see Figure~\ref{fig:exm-child-bn}. The node-wise total-variation errors in Figure~\ref{fig:exm-real-bn-tv:child-by-node-by-depth} show the same pattern. Although all fitted models achieve relatively small marginal errors, the graph-informed models reduce these errors systematically, with $M_1$ attaining the lowest mean node-wise total variation across runs in Figure~\ref{fig:exm-real-bn-tv:child-node-wise-mean}.

The \textsc{Child} experiment shows that on a medium-sized discrete Bayesian network, graph-informed localization improves structural fidelity, as directly reflected in node-wise total variation, local AUC, and log-likelihood. Broader neighborhoods partly recover global fit, but the main advantage of the GiGAN in this setting is better alignment with the conditional structure encoded by the Bayesian network.

\begin{figure}[htb]
\centering
\begin{subfigure}[t]{0.49\textwidth}
  \centering
  \includegraphics[width=\textwidth]{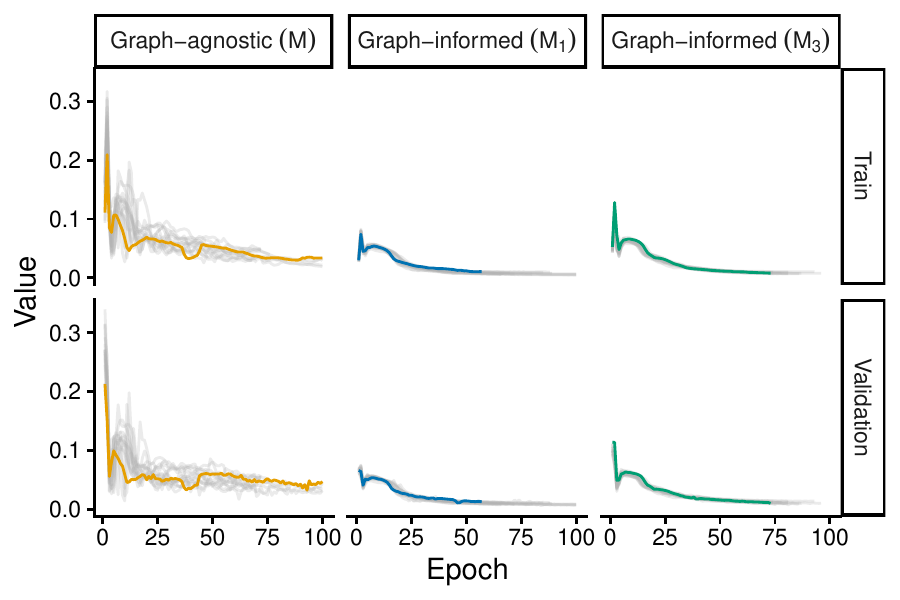}
  \caption{\textsc{Child} variational loss}
  \label{fig:exm-real-bn-training:child}
\end{subfigure}\hfill
\begin{subfigure}[t]{0.49\textwidth}
  \centering
  \includegraphics[width=\textwidth]{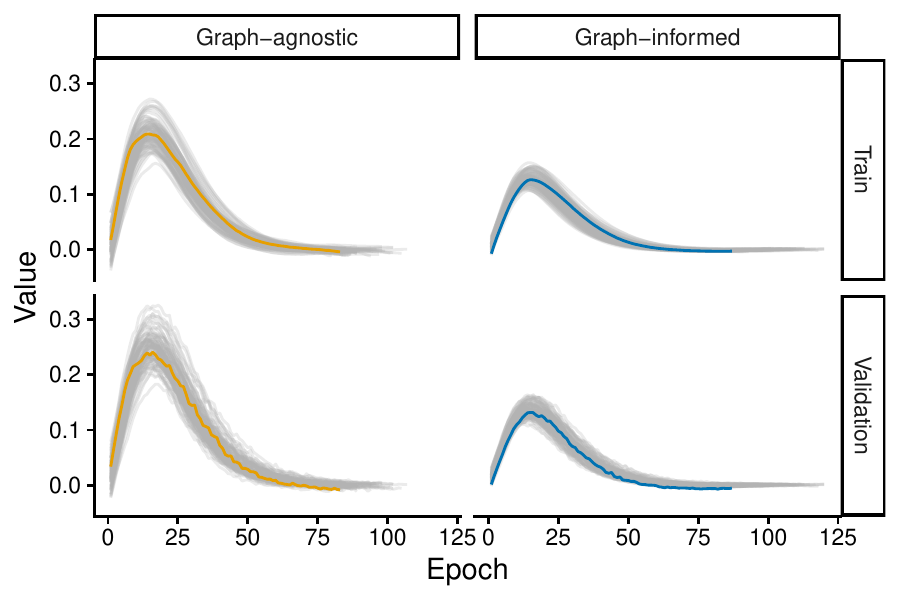}
  \caption{\textsc{Earthquake} variational loss}
  \label{fig:exm-real-bn-training:earthquake}
\end{subfigure}
  \caption{Training and validation variational loss (interpolative JS-Wasserstein) for the real-world discrete Bayesian network experiments. (a) Compares the graph-agnostic GAN model $M$ with the GiGAN models $M_1$ (child--parent families) and $M_3$ (ancestor depth-$3$ families) on \textsc{Child} over $15$ runs. (b) Compares the graph-agnostic GAN and the GiGAN on \textsc{Earthquake} over $85$ runs.}
  \label{fig:exm-real-bn-training}
\end{figure}

\begin{figure}[htb]
\centering
\begin{subfigure}[t]{0.49\textwidth}
  \centering
  \includegraphics[width=\textwidth]{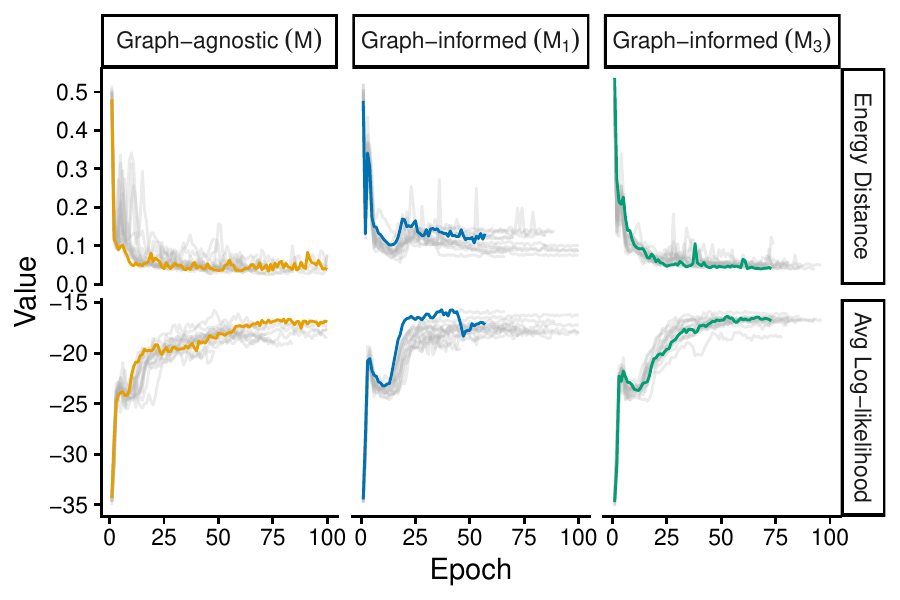}
  \caption{\textsc{Child} global validation}
  \label{fig:exm-real-bn-validation:child-global}
\end{subfigure}\hfill
\begin{subfigure}[t]{0.49\textwidth}
  \centering
  \includegraphics[width=\textwidth]{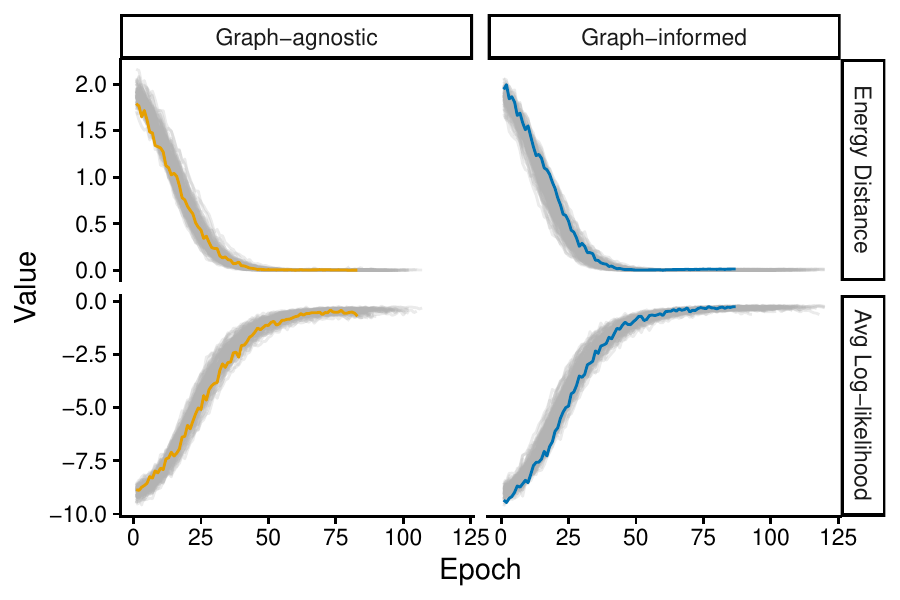}
  \caption{\textsc{Earthquake} global validation}
  \label{fig:exm-real-bn-validation:earthquake-global}
\end{subfigure} \medskip \\
\begin{subfigure}[t]{0.49\textwidth}
  \centering
  \includegraphics[width=\textwidth]{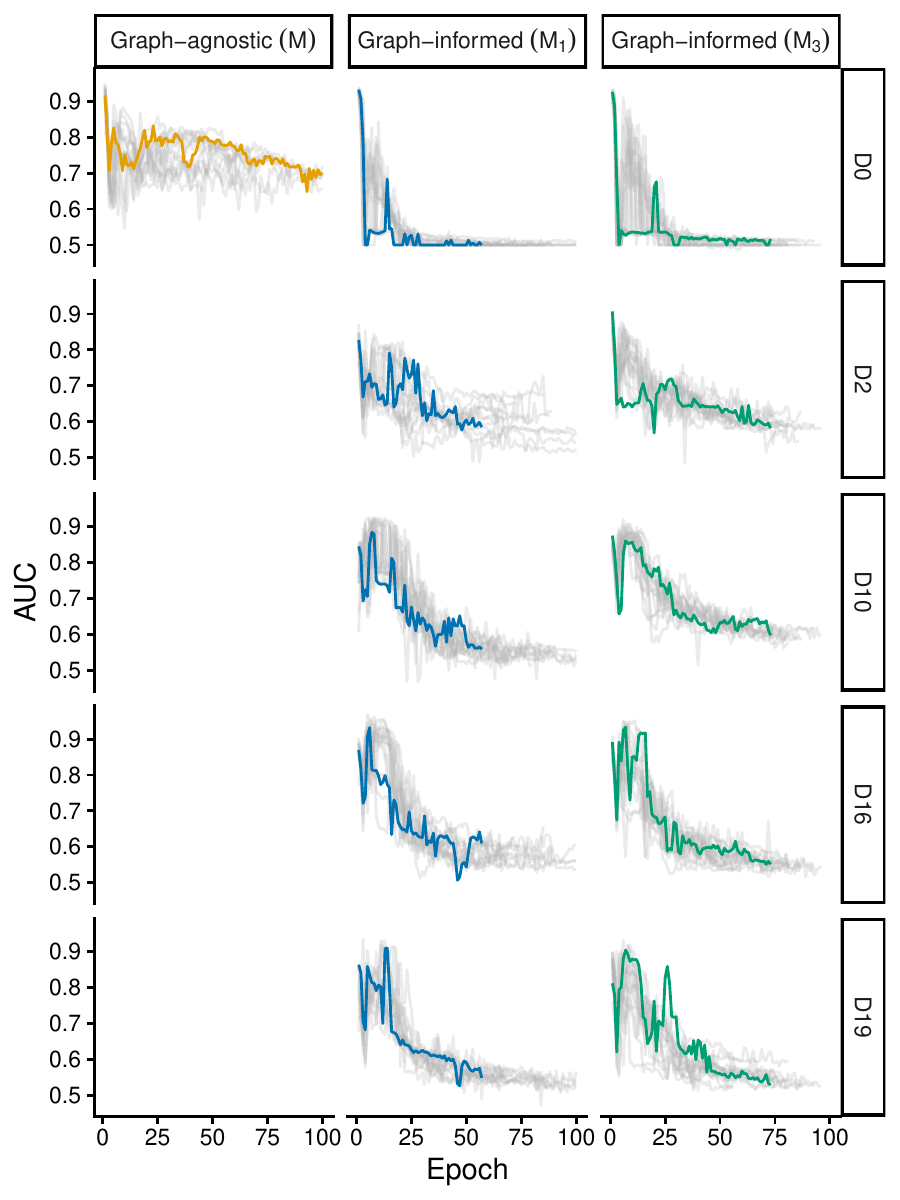}
  \caption{\textsc{Child} AUC}
  \label{fig:exm-real-bn-validation:child-auc}
\end{subfigure}\hfill
\begin{subfigure}[t]{0.49\textwidth}
  \centering
  \includegraphics[width=\textwidth]{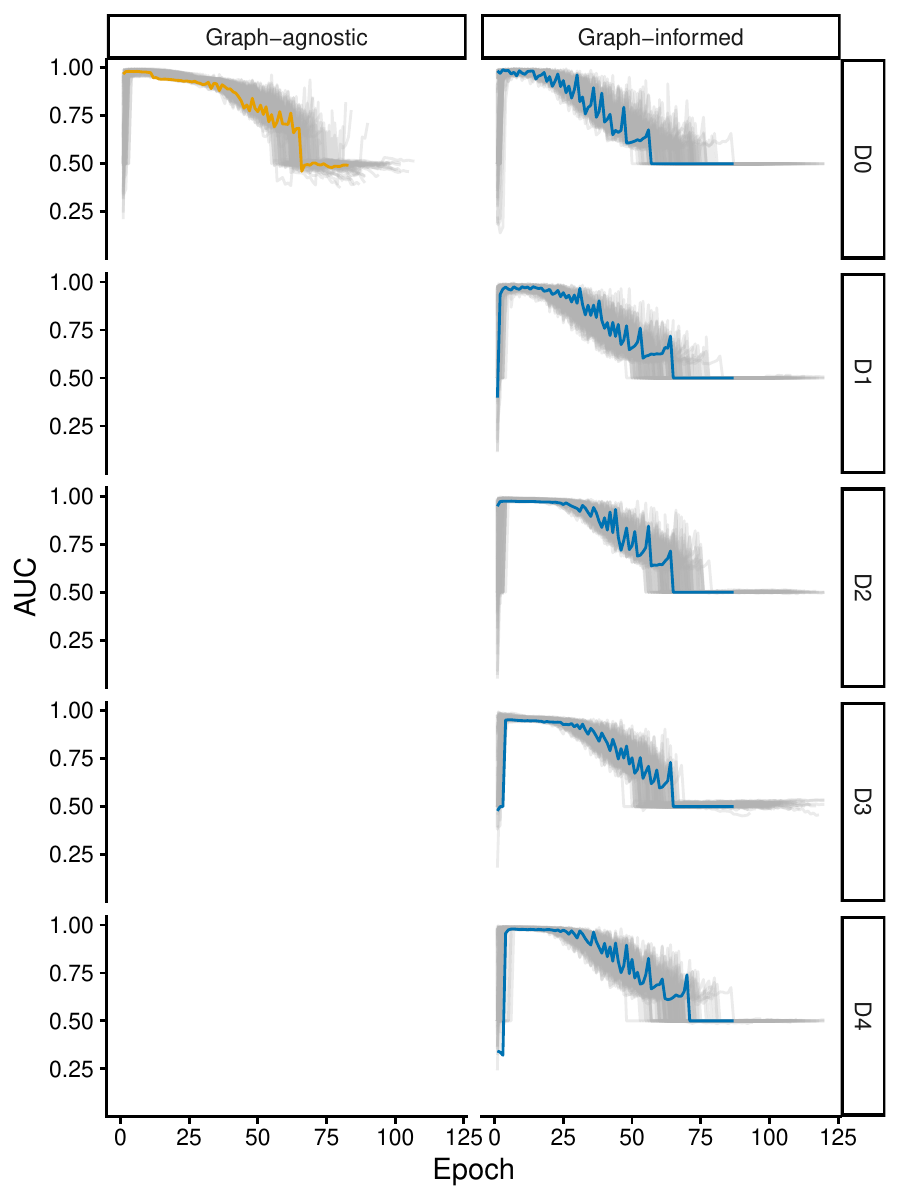}
  \caption{\textsc{Earthquake} AUC}
  \label{fig:exm-real-bn-validation:earthquake-auc}
\end{subfigure}
\caption{Validation diagnostics for the real-world discrete Bayesian network experiments (single run highlighted). The \textsc{Child} experiment compares $M$, $M_1$, and $M_3$, corresponding to graph-agnostic GAN and two GiGANs with child--parent and ancestor depth-3 families over $15$ runs. The \textsc{Earthquake} experiment compares graph-agnostic and GiGANs over $85$ runs. (a) and (b) show global validation metrics, namely energy distance and average log-likelihood. (c) and (d) show representative discriminator AUC values. }
\label{fig:exm-real-bn-validation}
\end{figure}

\begin{figure}[htb]
\centering
\begin{subfigure}[t]{0.49\textwidth}
  \centering
  \includegraphics[width=\textwidth]{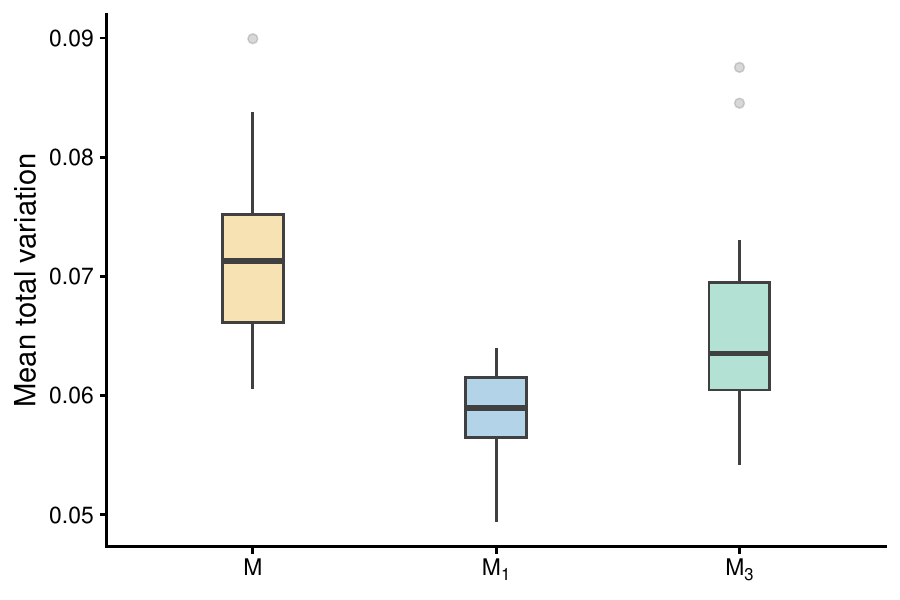}
  \caption{\textsc{Child} mean total variation}
  \label{fig:exm-real-bn-tv:child-node-wise-mean}
\end{subfigure} \hfill
\begin{subfigure}[t]{0.49\textwidth}
  \centering
  \includegraphics[width=\textwidth]{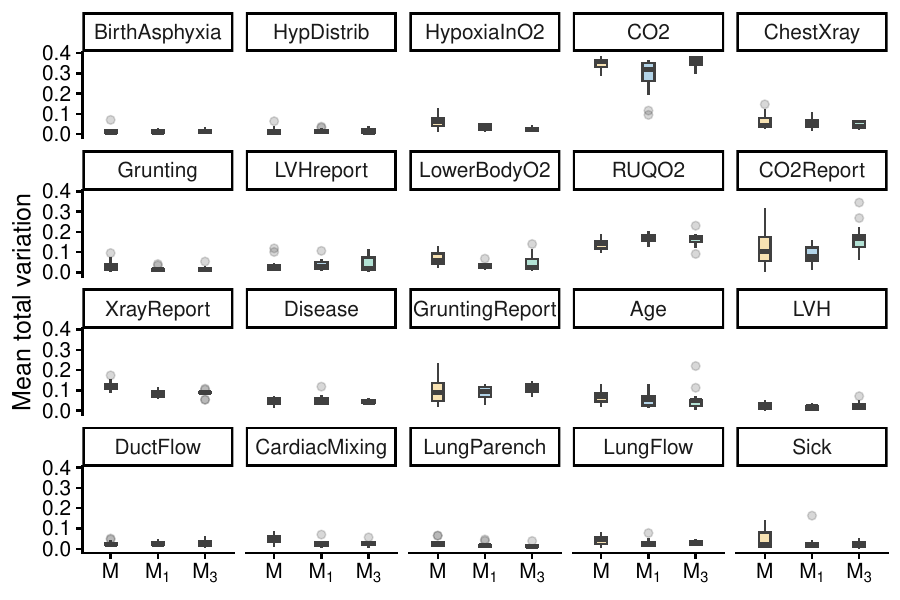}
  \caption{\textsc{Child} mean total variation by node}
  \label{fig:exm-real-bn-tv:child-by-node-by-depth}
\end{subfigure}
\medskip \\
\begin{subfigure}[t]{0.49\textwidth}
  \centering
  \includegraphics[width=\textwidth]{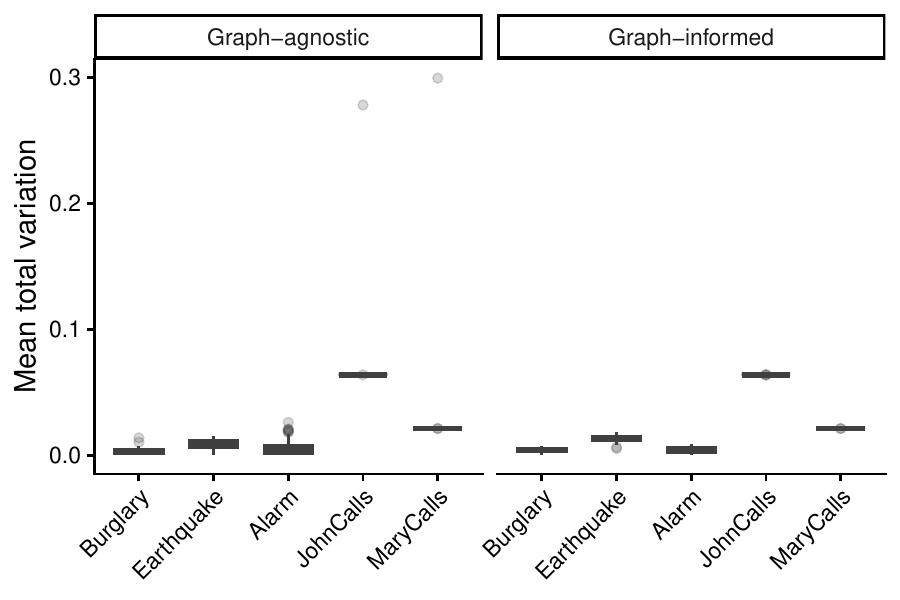}
  \caption{\textsc{Earthquake} mean total variation by node}
  \label{fig:exm-real-bn-tv:earthquake-by-depth-by-node}
\end{subfigure}
\caption{Total variation error between generated marginals and the corresponding target marginals from the known CPT. (a) The mean error by node by depth, across $15$ runs for \textsc{Child} for graph-agnostic model $M$ and graph-informed models $M_1$ and $M_3$. (b) The mean error, averaged over all nodes by depth, across $15$ runs for \textsc{Child}, highlighting variables such as CO2 with high error relative to other nodes. (c) The mean error by depth by node, across $85$ runs for \textsc{Earthquake}; note the outliers in the graph-agnostic predictions.}
\label{fig:exm-real-bn-tv}
\end{figure}

\subsubsection{\texorpdfstring{\textsc{Earthquake}}{EARTHQUAKE}}

The \textsc{Earthquake} experiment serves a different purpose from \textsc{Child}. Because the network is small, the graph-agnostic GAN with a monolithic discriminator is already operating in a relatively low-dimensional setting, so the main question is not whether localization dramatically improves the average metric values, but whether it improves reliability across repeated runs. Here we compare the graph-agnostic model with the graph-informed model using the child--parent families and report results over $85$ independent runs.

For each \textsc{Earthquake} run, we draw $8{,}000$ i.i.d.~cases from the exact Bayesian network by ancestral sampling, dummy-encode them into a $10$-dimensional vector, and use a $90{:}10$ train--validation split. The generator is a fully connected network with latent dimension $16$, two hidden layers with widths $32$, swish activations, and linear output logits. The discriminators are fully connected networks with two hidden layers also with widths $32$ and Leaky ReLU activations. Training uses Adam with learning rates $(\ell_\eta, \ell_\theta) = (10^{-4},\,5\times 10^{-5})$, batch size $256$, and at most $120$ epochs, with early stopping based on the validation average log-likelihood under the ground-truth CPTs, after which the best checkpoint is restored. The graph-agnostic baseline uses one discriminator on the full $10$-dimensional dummy vector, whereas the graph-informed model uses $5$ localized discriminators aligned with the child--parent families; the corresponding family input widths range from $2$ to $6$ dummy coordinates.

Figure~\ref{fig:exm-real-bn-training:earthquake} again shows smoother training dynamics for the GiGAN than for the graph-agnostic GAN. The difference is less dramatic than for \textsc{Child}, but the graph-informed model still exhibits less erratic variation in the training and validation objectives. In the global validation metrics shown in Figure~\ref{fig:exm-real-bn-validation:earthquake-global}, the two models are broadly comparable in terms of their central tendency. This is consistent with the fact that the full joint distribution is small enough that the monolithic discriminator is not severely disadvantaged.

The main difference between the graph-agnostic and graph-informed models appears in robustness. In Figure~\ref{fig:exm-real-bn-validation:earthquake-auc}, the graph-informed discriminator AUC behaves more consistently across runs, while the monolithic baseline exhibits more variable behavior. The same effect is visible in the node-wise total-variation errors in Figure~\ref{fig:exm-real-bn-tv:earthquake-by-depth-by-node}: although the average performance of the two approaches is similar, the graph-agnostic model exhibits several more pronounced outliers. Thus, for \textsc{Earthquake}, the principal gain from graph-informed localization is not uniformly better average accuracy but improved reliability and fewer pathological runs.

This provides a useful boundary case for the broader conclusions of the paper. When the joint space is small, the GiGAN still offers a practical advantage by making training more stable and reducing the frequency of poor outcomes across repeated runs.

\section{Discussion}
\label{sec:discussion}

We have introduced an infimal subadditivity principle for interpolative divergences on Bayesian networks and used it to give a variational justification for graph-informed adversarial learning. In particular, Theorems~\ref{thm:main2general}, \ref{thm:main_IPM}, and \ref{thm:pOTd} show that, when the target distribution factorizes according to a known DAG, a global discrepancy objective functional can be controlled by an average of family-level objective functionals aligned with the graph. This provides a principled basis for replacing a graph-agnostic GAN with a monolithic discriminator by a \emph{GiGAN}, a graph-informed GAN with localized discriminators trained under an interpolative divergence objective, without requiring the optimizer itself to factorize according to the same graph. The same variational framework applies to $(f,\Gamma)$-divergences, $\Gamma$-IPMs, and proximal optimal transport divergences, and applies to natural discriminator classes including bounded measurable and Lipschitz-bounded functions.

The experiments suggest that the practical value of graph-informed adversarial learning is not only computational. Across the synthetic and real-world case studies, localization often improves training stability, reduces variability across runs, and yields better recovery of the underlying conditional structure in the generated distribution. At the same time, the \textsc{Child} experiments with ``extended families'' suggest that the choice of family size, or more generally of the graph-informed neighborhoods, mediates a trade-off between local structural fidelity and global distributional fit. This complements Remark~\ref{rem:tighter_bound}: broader neighborhoods may tighten the surrogate and recover more of the global dependence structure, but they may also reduce some of the advantages of strict localization in terms of simplicity, efficiency, and interpretability. ``Learning big'' with a single monolithic discriminator versus ``learning small'' with localized family-level discriminators should be viewed as a task-dependent modeling choice.

A related point is that localization also improves diagnosability. Because the graph-informed objective decomposes into family-level terms, the resulting training procedure naturally produces local, family-specific diagnostics, such as discriminator AUCs and node-wise errors. As seen in the experiments, these quantities can make it easier to identify where the model is failing to train, rather than only indicating that the full-joint fit remains inadequate. This is potentially valuable in structured applications where understanding which part of the distribution is poorly learned may matter as much as aggregate performance. From this perspective, graph-informed adversarial learning offers not only a computational decomposition of a difficult high-dimensional testing problem, but also a more interpretable diagnostic interface for model development which may be especially relevant in structured and potentially high-consequence applications.

Finally, our theory distinguishes between structured objectives and structured generators. The surrogate controls a global objective functional without requiring the optimizer itself to be explicitly restricted to $\mathcal{P}^G$, and this separation is one of the main conceptual advantages of the infimal formulation. It would therefore be natural to study how graph-informed discriminator design interacts with generators that are themselves structurally constrained, and whether combining both sides can yield sharper guarantees, improved sample efficiency, or better-calibrated uncertainty. More broadly, it would be interesting to better understand how the choice of neighborhoods, discrepancy class, and discriminator regularity should be adapted to the scientific or decision-making task at hand. In this sense, the present work should be viewed less as prescribing a single graph-informed architecture and more as establishing a general variational framework for learning with known probabilistic structure.

\bibliographystyle{plainnat}
\bibliography{referencesBH,ref}

\appendix

\section{Background and properties of measures of discrepancy}
\label{appendix:extra-definitions}

This appendix collects background definitions and auxiliary facts used in the main text. In particular, we record the notions of determining, admissible, and strictly admissible function classes used in Section~\ref{sec:var-gans}.

\begin{definition}[$\mathcal P(\mathcal X)$- determining set] A set $\Gamma \subset C_b(\mathcal{X})$ will be called $\mathcal P(\mathcal X)$-determining set if for all $Q, P \in \mathcal{P}(\mathcal{X})$, 
\begin{equation*}
  \int \varphi \, Q(dx) = \int \varphi \, P(dx) \quad \forall \varphi \in \Gamma \qquad \text{implies} \qquad Q = P\,.
\end{equation*}
\end{definition}

\begin{definition}[Admissible set]
Let $\Gamma \subset C_b(\mathcal{X})$ be equipped with the subspace topology inherited from the weak topology. We say that $\Gamma$ is \emph{admissible} if it satisfies the following conditions:
\begin{enumerate}
\item $\Gamma$ is convex and closed;
\item $\Gamma$ is symmetric, i.e., $g \in \Gamma$ implies $-g \in \Gamma$, and it contains all constant functions;
\item $\Gamma$ is determining for $\mathcal{P}(\mathcal{X})$, i.e., for any
$\mu,\nu \in \mathcal{P}(\mathcal{X})$ with $\mu \neq \nu$, there exists
$g \in \Gamma$ such that
\begin{equation*}
  \int_{\mathcal{X}} g \, d\mu \neq \int_{\mathcal{X}} g \, d\nu \,.
\end{equation*}
\end{enumerate}
\end{definition}

\begin{definition}[Strictly admissible set] $\Gamma$ will be called \emph{strictly admissible} if it also satisfies the following property: there exists a $\mathcal{P}(\mathcal X)$-determining set $\Psi \subset C_b(\mathcal X)$ such that for all $\varphi \in \Psi$ there exist $c \in \mathbb{R}$ and $\varepsilon > 0$ such that $c \pm \varepsilon \varphi \in \Gamma$.
\end{definition}

\section{Supporting material}\label{supplem}

In this section, we prove the auxiliary statements used in Section~\ref{subsec_examples_adm_sets} for the Lipschitz discriminator setting. We present the argument first for the simple graph structure in \eqref{Structure1}, since the argument extends directly to any DAG
$G$ without essential modification.

The next calculation explains why a Lipschitz dependence of the generator on the conditioning variables leads naturally to a Wasserstein-Lipschitz family of conditional distributions. Let $(\mathcal Z,\mathcal F,\nu)$ be a probability space, $\mathcal X_{\mathrm{Pa}(i)}\subset\mathbb R^m$ and $g:\mathcal X_{\mathrm{Pa}(i)}\times \mathcal Z\to\mathbb R^d$. We assume that 
for every $\bm x_{\mathrm{Pa}(i)}\in\mathcal X_{\mathrm{Pa}(i)}$, the map $z\mapsto g(\bm x_{\mathrm{Pa}(i)},z)$ is measurable 
and  $g(\bm x_{\mathrm{Pa}(i)},\cdot)_{\#}\nu\in\mathcal P_1(\mathbb R^d)$, and for any $\bm x_{\mathrm{Pa}(i)},\bm x'_{\mathrm{Pa}(i)}\in\mathcal X_{\mathrm{Pa}(i)}$ and all $z\in\mathcal Z$, 
$\|g(\bm x_{\mathrm{Pa}(i)},z)-g(\bm x'_{\mathrm{Pa}(i)},z)\|\leq Cd_{\mathcal{X}_{\mathrm{Pa}(i)}}(\bm x_{\mathrm{Pa}(i)},\bm x'_{\mathrm{Pa}(i)})$. Then, a measurable generator with finite first moment and uniform Lipschitz dependence on the conditioning random vector induces a $W_1$-Lipschitz family of conditional distributions. 

We begin by verifying this claim in the simplest nontrivial case, namely the terminal conditional distribution in the chain \eqref{Structure1}.
\begin{equation}
\begin{split}
&W_1 \left( Q( dx_{3} |x_2) ,Q(d x_3 |x'_2)\right) \nonumber\\
    &\qquad=\sup_{f\in\mathrm{Lip}_b^1(\mathcal{X}_3)} \left\{ \int_{\mathcal X_{3}}f(x_{3})Q(d x_{3} | x_2)-\int_{\mathcal X_{3}}f( x_{3})Q(d x_{3} | x'_2) \right\} \nonumber\\
    &\qquad=\sup_{f\in\mathrm{Lip}_b^1(\mathcal{X}_3)} \left\{ \int_{\mathcal{X}_3}f( x_{3})(g(x_2,\cdot)_{\#})\,\nu(dz)-\int_{\mathcal{X}_3}f( x_{3})(g(x'_2,\cdot)_{\#})\,\nu(dz) \right\} \nonumber\\
    &\qquad=\sup_{f\in\mathrm{Lip}_b^1(\mathcal{X}_3)} \left\{ \int_{\mathcal{Z}}f(g(x_2,z))\,\nu(dz)-\int_{\mathcal{Z}}f(g( x'_2,z))\,\nu(dz) \right\} \nonumber\\
    &\qquad\leq\int_{\mathcal{Z}} \| g( x_2,z)-g( x'_2,z) \|\,d\nu(z)\nonumber\\
    &\qquad\leq Cd_{\mathcal{X}_2}( x_2, x'_2)
\end{split}
\end{equation}
The proof of $W_1\left(Q(d x_{2} |x_1) ,Q(d x_2 | x'_1)\right)\leq Cd_{\mathcal{X}_1}( x_1, x'_1) $ goes similarly.

We now formalize the preceding observation in a proposition that is used to verify condition {\bf (C)} for Lipschitz-bounded discriminator classes. For the next proposition, we denote $$K_i(d\bm x_{ [i\,\cup\,\mathrm{Pa}(i)]^c}|\bm x_{i\,\cup\,\mathrm{Pa}(i)}):=Q_{V^{(i-1)}|\mathrm{Pa}(i)}(d\bm x_{V^{(i-1)}}) \prod_{j=i+1}^n Q_{j |\mathrm{Pa}(j)}(dx_j),$$
where we remind that $V^{(i)}=\{1,\dots,i\}\setminus \mathrm{Pa}(i+1)$. The next proposition is stated for DAG \eqref{Structure1} for  simplicity, it can be easily generalized to an arbitrary DAG.
\begin{proposition}
Let $G$ be the chain $X_1 \to X_2 \to X_3$. Assume that
\[
W_1(Q_{2|1}(\cdot| x_1),Q_{2|1}(\cdot| x_1'))\le C_2\,d_{\mathcal X_1}(x_1,x_1'),
\]
\[
W_1(Q_{3|2}(\cdot|x_2),Q_{3|2}(\cdot| x_2'))\le C_3\,d_{\mathcal X_2}(x_2,x_2').
\]
Then $K_i$ is Lipschitz with Lipschitz constant $C$, with respect to the $1$-Wasserstein distance.
\end{proposition}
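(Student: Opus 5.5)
The plan is to make the statement precise for each of the three kernels $K_1,K_2,K_3$ of the chain and bound each one separately. We work on the product spaces with the additive metric $d(\bm x,\bm x')=\sum_j d_{\mathcal X_j}(x_j,x_j')$ from Section~\ref{subsec_examples_adm_sets}. We use the Kantorovich--Rubinstein dual form of $W_1$ throughout, exactly as in the calculation preceding the proposition. This avoids measurable selection of optimal couplings. The Lipschitz constant will be $C=\max\{C_2(1+C_3),\,C_3,\,C_{1|2}\}$, where $C_{1|2}$ is discussed below.

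\textbf{Unpacking the definition.} For the chain we have $V^{(0)}=\emptyset$, $V^{(1)}=\{1\}\setminus\mathrm{Pa}(2)=\emptyset$ and $V^{(2)}=\{1,2\}\setminus\mathrm{Pa}(3)=\{1\}$. Hence
\[
K_1(dx_2,dx_3|x_1)=Q_{2|1}(dx_2|x_1)\,Q_{3|2}(dx_3|x_2),\qquad K_2(dx_3|x_1,x_2)=Q_{3|2}(dx_3|x_2),\qquad K_3(dx_1|x_2,x_3)=Q_{1|2}(dx_1|x_2).
\]

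\textbf{Kernel $K_2$.} This case is immediate from the second hypothesis:
\[
W_1(K_2(\cdot|x_1,x_2),K_2(\cdot|x_1',x_2'))\le C_3\,d_{\mathcal X_2}(x_2,x_2')\le C_3\,d((x_1,x_2),(x_1',x_2')).
\]

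\textbf{Kernel $K_1$.} This is the composition step. Fix $f\in\mathrm{Lip}^1_b(\mathcal X_2\times\mathcal X_3)$ and set $h(x_2)=\int f(x_2,x_3)\,Q_{3|2}(dx_3|x_2)$. We then show that $h$ is $(1+C_3)$-Lipschitz:
\[
|h(x_2)-h(x_2')|\le \int |f(x_2,x_3)-f(x_2',x_3)|\,Q_{3|2}(dx_3|x_2)+\Bigl|\int f(x_2',\cdot)\,d\bigl(Q_{3|2}(\cdot|x_2)-Q_{3|2}(\cdot|x_2')\bigr)\Bigr|.
\]
The first term is at most $d_{\mathcal X_2}(x_2,x_2')$, since $f$ is $1$-Lipschitz in $x_2$. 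The second term is at most $C_3\,d_{\mathcal X_2}(x_2,x_2')$ by duality, since $f(x_2',\cdot)$ is $1$-Lipschitz and bounded. Next,
\[
\int f\,dK_1(\cdot|x_1)-\int f\,dK_1(\cdot|x_1')=\int h\,dQ_{2|1}(\cdot|x_1)-\int h\,dQ_{2|1}(\cdot|x_1').
\]
Since $h$ is bounded and $(1+C_3)$-Lipschitz, the first hypothesis bounds this by $(1+C_3)C_2\,d_{\mathcal X_1}(x_1,x_1')$. Taking the supremum over $f$ gives the bound for $K_1$. For a general DAG the same argument is iterated along a topological order, producing a product of constants of the form $(1+C_j)$.

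\textbf{Kernel $K_3$ (expected obstacle).} Here $K_3$ is the reverse conditional $Q_{1|2}$, the posterior of $X_1$ given its child. Its regularity is \emph{not} controlled by the forward hypotheses on $Q_{2|1}$ and $Q_{3|2}$ alone. Bayes' rule gives $Q_{1|2}(dx_1|x_2)\propto q(x_2|x_1)\,Q_1(dx_1)$, and this can vary discontinuously in $x_2$ even when $x_1\mapsto Q_{2|1}(\cdot|x_1)$ is $W_1$-Lipschitz. My first attempt would be to handle this term by an additional standing assumption, stated explicitly as part of the proposition: $W_1(Q_{1|2}(\cdot|x_2),Q_{1|2}(\cdot|x_2'))\le C_{1|2}\,d_{\mathcal X_2}(x_2,x_2')$. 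This assumption holds, for example, in the linear-Gaussian model \eqref{eq:linear-structured-Gaussian}, where the posterior mean is affine in $x_2$ and the posterior variance is constant. Alternatively one can derive it from a log-Lipschitz density condition on $q(x_2|x_1)$. With this assumption in place, $K_3$ is bounded exactly as $K_2$ was, and collecting the three bounds gives the constant $C$ above.
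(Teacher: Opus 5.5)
Your arguments for $K_1$ and $K_2$ are correct. For $K_1$ you reach the paper's constant $(1+C_3)C_2$ by a different route.

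The paper proves only the case $i=1$. It glues an optimal coupling $\pi_{22'}$ of $Q_{2|1}(\cdot|x_1),Q_{2|1}(\cdot|x_1')$ with optimal couplings $\pi^{x_2,x_2'}_{33'}$ of the corresponding $Q_{3|2}$ kernels, then bounds $\int d\,d\Pi$. You instead push a bounded $1$-Lipschitz test function backward through $Q_{3|2}$ to get a $(1+C_3)$-Lipschitz function of $x_2$. The coupling proof needs only the elementary inequality $W_1\le\int d\,d\Pi$. However, it tacitly requires $(x_2,x_2')\mapsto\pi^{x_2,x_2'}_{33'}$ to be measurable so that $\Pi$ is well defined, which is a measurable-selection fact. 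Your proof sidesteps this. The cost is that you use the nontrivial direction of Kantorovich--Rubinstein duality in the final supremum, which the paper itself uses in the calculation just before the proposition.

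Your objection about $K_3$ is correct, and it points to a defect in the statement, not in your proof. The paper asserts that $i=2,3$ ``follow the same steps''. But $K_3=Q_{1|2}$ is a backward kernel: the gluing argument does not apply to it, and the forward hypotheses do not control it.

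You should make this rigorous with an explicit example rather than the phrase ``can vary discontinuously''. Take $Q_1=\tfrac12(\delta_0+\delta_1)$ on $\mathcal X_1=\mathbb R$, $Q_{2|1}(\cdot|x_1)=\mathrm{Unif}[x_1,x_1+1]$ and $Q_{3|2}(\cdot|x_2)=\delta_{x_2}$, so $C_2=C_3=1$. Every version of $Q_{1|2}(\cdot|x_2)$ equals $\delta_0$ for a.e.\ $x_2\in(0,1)$ and $\delta_1$ for a.e.\ $x_2\in(1,2)$. Hence no version of $K_3$ is even continuous.

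A smooth variant shows the constant is also uncontrolled. With $X_1=\pm1$ equiprobable, $Q_{2|1}(\cdot|x_1)=N(x_1,\sigma^2)$ and the same $Q_{3|2}$, the posterior $W_1$-distance is $|\tanh(x_2/\sigma^2)-\tanh(x_2'/\sigma^2)|$. So even when $K_3$ is Lipschitz, its constant ($1/\sigma^2$) cannot be bounded in terms of $C_2,C_3$.

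Your additional hypothesis on $Q_{1|2}$ is therefore necessary. What you prove is the correct form of the proposition: $C=\max\{(1+C_3)C_2,C_3\}$ for $K_1,K_2$, with $K_3$ requiring a separate assumption. This matches Proposition~\ref{prop:Lip_continuity for conditional Q}, which simply assumes every $K_i$ is $W_1$-Lipschitz.

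One minor caveat concerns your alternative log-Lipschitz-density route. It gives a total-variation bound on the posteriors, and that converts to a $W_1$ bound only if $\mathcal X_1$ is bounded or the posterior first moments are uniformly controlled.
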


\begin{proof}
We prove the result for $i=1$ (the proof for $i=2,3$ follows the same steps), that is, we prove that 
\[
W_1(K_1(\cdot| x_1),K_1(\cdot| x_1'))
\le
(1+C_3)C_2\,d_{\mathcal X_1}(x_1,x_1').
\]
Fix $x_1,x_1'$. Let $\pi_{22'}$ be an optimal coupling of
$Q_{2|1}(\cdot| x_1)$ and $Q_{2|1}(\cdot| x_1')$, so that
\[
\int d_{\mathcal X_2}(x_2,x_2')\,\pi_{22'}(dx_2,dx_2')
=
W_1((Q_{2|1}(\cdot| x_1),Q_{2|1}(\cdot| x_1')).
\]

For each $(x_2,x_2')$, let $\pi_{33'}^{x_2,x_2'}$ be an optimal coupling of
$Q_{3|2}(\cdot| x_2)$ and $Q_{3|2}(\cdot| x_2')$, i.e.,
\[
\int d_{\mathcal X_3}(x_3,x_3')\,\pi_{33'}^{x_2,x_2'}(dx_3,dx_3')
=
W_1(Q_{3|2}(\cdot| x_2),Q_{3|2}(\cdot| x_2')).
\]

Define a coupling $\Pi$ of $K_1(\cdot| x_1)$ and $K_1(\cdot| x_1')$ by
\[
\Pi(dx_2,dx_3,dx_2',dx_3')
=
\pi_{22'}(dx_2,dx_2')\,\pi_{33'}^{x_2,x_2'}(dx_3,dx_3').
\]
It is straightforward to verify that $\Pi$ has the correct marginals.

By definition of $W_1$ and using the product metric,
\begin{eqnarray*}
W_1(K_1(\cdot| x_1),K_1(\cdot| x_1'))
&\le&
\int \bigl(d_{\mathcal X_2}(x_2,x_2')+d_{\mathcal X_3}(x_3,x_3')\bigr)\,d\Pi \\
&=&
\int d_{\mathcal X_2}(x_2,x_2')\,\pi_{22'}(dx_2,dx_2') \\
&&\quad+
\int\int
\left(\int d_{\mathcal X_3}(x_3,x_3')\,\pi_{33'}^{x_2,x_2'}(dx_3,dx_3')\right)
\pi_{22'}(dx_2,dx_2').
\end{eqnarray*}

Using optimality of the couplings, this yields
\begin{eqnarray*}
W_1(K_1(\cdot| x_1),K_1(\cdot| x_1')) 
&\le&
W_1(Q_{2|1}(\cdot| x_1),Q_{2|1}(\cdot| x_1')) \\
&&\quad+
\int
W_1(Q_{3|2}(\cdot| x_2),Q_{3|2}(\cdot| x_2'))\,
\pi_{22'}(dx_2,dx_2').
\end{eqnarray*}

By the Lipschitz assumption on $Q_{3|2}$,
\[
W_1(Q_{3|2}(\cdot| x_2),Q_{3|2}(\cdot| x_2'))
\le C_3\,d_{\mathcal X_2}(x_2,x_2'),
\]
so
\[
\int
W_1(Q_{3|2}(\cdot| x_2),Q_{3|2}(\cdot| x_2'))\,
\pi_{22'}(dx_2,dx_2')
\le
C_3
\int d_{\mathcal X_2}(x_2,x_2')\,\pi_{22'}(dx_2,dx_2').
\]

Therefore,
\begin{align*}
&W_1(K_1(\cdot| x_1),K_1(\cdot| x_1')) \\
&\le
(1+C_3)
W_1(Q_{2|1}(\cdot| x_1),Q_{2|1}(\cdot| x_1'))
\le
(1+C_3)C_2\,d_{\mathcal X_1}(x_1,x_1'),
\end{align*}
which proves the claim.
\end{proof}
\begin{proposition}
\label{prop:Lip_continuity for conditional Q}
    Let $Q\in\mathcal P^G$ and let $G$ be an arbitrary DAG. Assume that, for each
$i\in V$, the kernel
$K_i(\cdot| \bm x_{i\cup\mathrm{Pa}(i)})$
is $C$-Lipschitz with respect to $W_1$, i.e.,
\begin{equation}\label{ass:Lip1}
W_1(
K_i(\cdot| \bm x_{i\cup\mathrm{Pa}(i)}),
K_i(\cdot| \bm x'_{i\cup\mathrm{Pa}(i)})
)
\le
C\,d_{\mathcal X_{i\cup\mathrm{Pa}(i)}}(
\bm x_{i\cup\mathrm{Pa}(i)},\bm x'_{i\cup\mathrm{Pa}(i)}).
\end{equation}
If $\Gamma^L=\mathrm{Lip}_b^L(\mathcal X)$, then condition {\bf (C)} holds with
\[
L_i=L(1+C)>L,\qquad i\in V.
\]
\end{proposition}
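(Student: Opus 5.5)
The plan is to check the two parts of condition {\bf (C)} directly for a fixed $\gamma\in\mathrm{Lip}_b^L(\mathcal X)$. These are boundedness and the Lipschitz constant of $I_{i,\mathrm{Pa}(i)}[\gamma]$ as a function of $\bm y:=\bm x_{i\cup\mathrm{Pa}(i)}$. Write $\bm z:=\bm x_{[i\cup\mathrm{Pa}(i)]^c}$, so that by \eqref{eq:general_I_operator}
\[
I_{i,\mathrm{Pa}(i)}[\gamma](\bm y)=\int \gamma(\bm y,\bm z)\,K_i(d\bm z\mid \bm y).
\]
Boundedness is immediate, since $\|I_{i,\mathrm{Pa}(i)}[\gamma]\|_\infty\le\|\gamma\|_\infty$ because $K_i(\cdot\mid\bm y)$ is a probability measure. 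Measurability in $\bm y$ follows from $K_i$ being a kernel.

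For the Lipschitz estimate, fix $\bm y,\bm y'$ and split the difference into two terms:
\[
I[\gamma](\bm y)-I[\gamma](\bm y')=\int\big(\gamma(\bm y,\bm z)-\gamma(\bm y',\bm z)\big)K_i(d\bm z\mid\bm y)+\Big(\int\gamma(\bm y',\bm z)K_i(d\bm z\mid\bm y)-\int\gamma(\bm y',\bm z)K_i(d\bm z\mid\bm y')\Big).
\]
For the first term I use the additive product metric $d(\bm x,\bm x')=\sum_i d_{\mathcal X_i}(x_i,x_i')$. With $\bm z$ held fixed, $d((\bm y,\bm z),(\bm y',\bm z))=d_{\mathcal X_{i\cup\mathrm{Pa}(i)}}(\bm y,\bm y')$, so the integrand is at most $L\,d(\bm y,\bm y')$ pointwise. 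For the second term, the map $\bm z\mapsto\gamma(\bm y',\bm z)$ lies in $\mathrm{Lip}_b^L(\mathcal X_{[i\cup\mathrm{Pa}(i)]^c})$. By Kantorovich--Rubinstein duality the term is therefore bounded by $L\,W_1(K_i(\cdot\mid\bm y),K_i(\cdot\mid\bm y'))$, and assumption \eqref{ass:Lip1} bounds this by $LC\,d(\bm y,\bm y')$. Adding the two bounds gives the Lipschitz constant $L(1+C)$. Hence $I_{i,\mathrm{Pa}(i)}[\gamma]\in\mathrm{Lip}_b^{L(1+C)}(\mathcal X_{i\cup\mathrm{Pa}(i)})$, which equals $\Gamma^{L_i}_{i\cup\mathrm{Pa}(i)}$ with $L_i=L(1+C)$. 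The strict inequality $L_i>L$ holds trivially when $C>0$.

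I expect the main obstacle to be bookkeeping rather than analysis. One has to make sure that $K_i$ is a genuine kernel on the complementary coordinates $[i\cup\mathrm{Pa}(i)]^c$ parametrized by $\bm y$. When $\mathrm{Pa}(i)$ does not contain all the conditioning variables, $K_i$ depends only on part of $\bm y$, and the estimate still holds. One also has to justify the duality step: $W_1$ is taken on the component space, and the test function is bounded and Lipschitz, so duality applies whenever the measures have finite first moments. If they do not, I would use the bounded-Lipschitz form of the supremum, as in the appendix computation above. This gives the same inequality, because the supremum over $\mathrm{Lip}_b^1$ is at most $W_1$.
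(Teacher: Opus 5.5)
Your proof is correct and is essentially the paper's argument. The paper takes a single coupling $\Pi$ of $K_i(\cdot\mid u)$ and $K_i(\cdot\mid u')$, bounds $|\gamma(u,z)-\gamma(u',z')|\le L\bigl(d(u,u')+d(z,z')\bigr)$ via the additive product metric, and then optimizes over $\Pi$; this is equivalent to your two-term split, since your Kantorovich--Rubinstein step only needs the elementary direction (integrate $g(z)-g(z')$ against an arbitrary coupling), which holds without moment or Polish-space assumptions, so the caveat in your last paragraph is unnecessary.
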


\begin{proof}
Fix $i\in V$ and let
\[
u=\bm x_{i\cup\mathrm{Pa}(i)},\qquad
u'=\bm x'_{i\cup\mathrm{Pa}(i)}.
\]
For $\gamma\in \Gamma^L$, define
\[
I_{i,\mathrm{Pa}(i)}[\gamma](u)
=
\int_{\mathcal X_{[i\cup\mathrm{Pa}(i)]^c}}
\gamma(u,z)\,K_i(dz| u).
\]
Let $\Pi$ be any coupling of $K_i(\cdot| u)$ and $K_i(\cdot| u')$. Then
\begin{eqnarray*}
|I_{i,\mathrm{Pa}(i)}[\gamma](u)-I_{i,\mathrm{Pa}(i)}[\gamma](u')| 
&=&
\left|
\int \gamma(u,z)\,K_i(dz| u)
-
\int \gamma(u',z')\,K_i(dz'| u')
\right| \\
&=&
\left|
\int (\gamma(u,z)-\gamma(u',z'))\,\Pi(dz,dz')
\right| \\
&\le&
\int |\gamma(u,z)-\gamma(u',z')|\,\Pi(dz,dz').
\end{eqnarray*}
Since $\gamma$ is $L$-Lipschitz,
\[
|\gamma(u,z)-\gamma(u',z')|
\le
L\Big(
d_{\mathcal X_{i\cup\mathrm{Pa}(i)}}(u,u')
+
d_{\mathcal X_{[i\cup\mathrm{Pa}(i)]^c}}(z,z')
\Big).
\]
Therefore,
\begin{align*}
|I_{i,\mathrm{Pa}(i)}[\gamma](u)-I_{i,\mathrm{Pa}(i)}[\gamma](u')|
&\le
L\,d_{\mathcal X_{i\cup\mathrm{Pa}(i)}}(u,u') \\
&\quad+
L\int d_{\mathcal X_{[i\cup\mathrm{Pa}(i)]^c}}(z,z')\,\Pi(dz,dz').
\end{align*}
Choosing $\Pi$ optimally and using \eqref{ass:Lip1}, we obtain
\begin{align*}
|I_{i,\mathrm{Pa}(i)}[\gamma](u)-I_{i,\mathrm{Pa}(i)}[\gamma](u')|
&\le
L\,d_{\mathcal X_{i\cup\mathrm{Pa}(i)}}(u,u')
+
L\,W_1\!\left(K_i(\cdot| u),K_i(\cdot| u')\right) \\
&\le
L(1+C)\,d_{\mathcal X_{i\cup\mathrm{Pa}(i)}}(u,u').
\end{align*}
Boundedness is straightforward. Hence
\[
I_{i,\mathrm{Pa}(i)}[\gamma]\in \mathrm{Lip}_b^{L(1+C)}(\mathcal X_{i\cup\mathrm{Pa}(i)}),
\]
which proves the desired result. 
\end{proof}

\end{document}